\title{Learning Linear-Quadratic Regulators \\ Efficiently with only $\sqrt{T}$ Regret}
\author{%
Alon Cohen%
\thanks{Technion---Israel Inst.~of Technology and Google Tel Aviv; \texttt{alon.cohen@campus.technion.ac.il}.}
\and
Tomer Koren%
\thanks{Google Brain, Mountain View; \texttt{tkoren@google.com}.}
\and 
Yishay Mansour%
\thanks{Tel-Aviv University and Google Tel Aviv; \texttt{mansour.yishay@gmail.com}.}
}
\newcommand{\ignore}[1]{}
\declaretheoremstyle[
	    spaceabove=\topsep, 
	    spacebelow=\topsep, 
	    bodyfont=\normalfont\itshape,
    ]{theorem}
\declaretheorem[style=theorem,name=Theorem]{theorem}
\declaretheoremstyle[
	    spaceabove=\topsep, 
	    spacebelow=\topsep, 
	    bodyfont=\normalfont,
    ]{definition}
\declaretheorem[style=theorem,sibling=theorem,name=Lemma]{lemma}
\declaretheorem[style=theorem,sibling=theorem,name=Corollary]{corollary}
\declaretheorem[style=theorem,numbered=no,name=Theorem]{theorem*}
\declaretheorem[style=theorem,numbered=no,name=Lemma]{lemma*}
\declaretheorem[style=theorem,numbered=no,name=Corollary]{corollary*}
\declaretheorem[style=theorem,numbered=no,name=Proposition]{proposition*}
\declaretheorem[style=theorem,numbered=no,name=Claim]{claim*}
\declaretheorem[style=theorem,numbered=no,name=Fact]{fact*}
\declaretheorem[style=theorem,numbered=no,name=Observation]{observation*}
\declaretheorem[style=theorem,numbered=no,name=Conjecture]{conjecture*}
\declaretheorem[style=definition,sibling=theorem,name=Definition]{definition}
\declaretheorem[style=definition,numbered=no,name=Definition]{definition*}
\declaretheorem[style=definition,numbered=no,name=Remark]{remark*}
\declaretheorem[style=definition,numbered=no,name=Example]{example*}
\declaretheorem[style=definition,numbered=no,name=Question]{question*}
\DeclareMathAlphabet{\mathbfsf}{\encodingdefault}{\sfdefault}{bx}{n}
\DeclareMathOperator*{\trace}{Tr}
\let\Pr\relax
\DeclareMathOperator{\Pr}{\mathbb{P}}
\newcommand{\lr}[1]{\mathopen{}\left(#1\right)}
\newcommand{\Lr}[1]{\mathopen{}\big(#1\big)}
\newcommand{\LR}[1]{\mathopen{}\Big(#1\Big)}
\newcommand{\lrbra}[1]{\mathopen{}\left[#1\right]}
\newcommand{\Lrbra}[1]{\mathopen{}\big[#1\big]}
\newcommand{\norm}[1]{\|#1\|}
\newcommand{\LRnorm}[1]{\mathopen{}\Big\|#1\Big\|}
\newcommand{\set}[1]{\{#1\}}
\newcommand{\lrfloor}[1]{\mathopen{}\left\lfloor #1 \right\rfloor}
\newcommand{\ind}[1]{\mathbb{I}\set{#1}}
\newcommand{\wt}[1]{\smash{\widetilde{#1}}}
\renewcommand{\O}{O}
\newcommand{\tO}{\wt{\O}}
\newcommand{\E}{\mathbb{E}}
\newcommand{\poly}{\mathrm{poly}}
\newcommand{\tr}{^{\mkern-1.5mu\scriptstyle\mathsf{T}}}
\newcommand{\st}{\star}
\newcommand{\reals}{\mathbb{R}}
\newcommand{\half}{\frac{1}{2}}
\newcommand{\thalf}{\tfrac{1}{2}}
\let\oldtfrac\tfrac
\renewcommand{\tfrac}[2]{\smash{\oldtfrac{#1}{#2}}}
\let\nablaold\nabla
\renewcommand{\nabla}{\nablaold\mkern-1mu}
\crefname{ALC@unique}{line}{lines}
\DeclareMathOperator{\rank}{rank}
\newcommand{\frob}{\mathsf{F}}
\renewcommand{\LRnorm}[1]{\Bigg\|#1\Bigg\|}
\renewcommand{\LR}[1]{\bigg(#1\bigg)}
\newcommand{\calE}{\mathcal{E}}
\newcommand{\calF}{\mathcal{F}}
\newcommand{\calN}{\mathcal{N}}
\newcommand{\calK}{\mathcal{K}}
\newcommand{\bbR}{\mathbb{R}}
\newcommand{\bbE}{\mathbb{E}}
\newcommand{\bbS}{\mathbb{S}}
\newcommand{\spnorm}[1]{\norm{#1}}
\newcommand{\trnorm}[1]{\norm{#1}_{*}}
\newcommand{\IhatK}{\begin{psmallmatrix} I \\ K \end{psmallmatrix}}
\newcommand{\IhatKt}{\begin{psmallmatrix} I \\ K_t \end{psmallmatrix}}
\newcommand{\IhatKs}{\begin{psmallmatrix} I \\ K_s \end{psmallmatrix}}
\newcommand{\qrmatrix}{\begin{psmallmatrix} Q & 0 \\ 0 & R \end{psmallmatrix}}
\newcommand{\ws}{0}
\newcommand{\Ast}{A_\st}
\newcommand{\Bst}{B_\st}
\newcommand{\Kst}{K_\st}
\newcommand{\IKt}{\begin{psmallmatrix} I \\ K_t \end{psmallmatrix}}
\begin{document}
\maketitle

\begin{abstract}
We present the first computationally-efficient algorithm with $\tO(\sqrt{T})$ regret for learning in Linear Quadratic Control systems with unknown dynamics.
%
%
%
%
%
By that, we resolve an open question of \citet*{abbasi2011regret} and \citet*{dean2018regret}.
\end{abstract}

\section{Introduction}

Optimal control theory dates back to the 1950s, and has been applied successfully to numerous real-world engineering problems \citep[e.g.,][]{bermudez1994state,chen2005optimal,lenhart2007optimal,geering2007optimal}. 
Classical results in control theory pertain to asymptotic convergence and stability of dynamical systems, and recently, there has been a renewed interest in such problems from a learning-theoretic perspective with a focus on finite-time convergence guarantees and computational tractability.

Perhaps the most well-studied model in optimal control is Linear-Quadratic (LQ) control. In this model, both the state and the action are real-valued vectors. 
The dynamics of the environment are linear in both the state and action, and are perturbed by Gaussian noise; 
the cost is quadratic in the state and action vectors. 
When the costs and dynamics are known, the optimal control policy, which minimizes the steady-state cost, selects its actions as a linear function of the state vector, and can be derived by solving the algebraic Ricatti equations~\citep[e.g.,][]{bertsekas2005dynamic}.

Among the most challenging problems in LQ control is that of adaptive control: regulating a system with parameters which are initially unknown and have to be learned while incurring the associated costs.
This problem is exceptionally challenging since the system might become unstable. 
Specifically, the controller must control the magnitude of the state vectors; otherwise, its cost might grow arbitrarily large. 

\citet{abbasi2011regret} were the first to address the adaptive control problem from a learning-theoretic perspective. 
In their setting, there is a learning agent who knows the quadratic costs, yet has no knowledge regarding the dynamics of the system. 
The agent acts for $T$ rounds; at each round she observes the current state then chooses an action. %
Her goal is to minimize her regret, defined as the difference between her total cost and $T$ times the steady-state cost of the optimal policy---one that is computed using complete knowledge of the dynamics.

\citet{abbasi2011regret} gave $O(\sqrt{T})$-type regret bounds for LQ control where the dependency on the dimensionality is exponential, which was later improved by \citet{ibrahimi2012efficient} to a polynomial dependence.
However, the algorithms given in these works are not computationally efficient and require solving a complex non-convex optimization problem at each step.
%
%
%
%
%
%
Developing an {\em efficient} algorithm with $O(\sqrt{T})$ regret has been a long standing open problem. 
Recently, \citet{dean2018regret} proposed a computationally-efficient algorithm attaining an $O(T^{2/3})$ regret bound,
%
%
and stated as an open problem providing an $O(\sqrt{T})$ regret efficient algorithm.

In this paper, we give the first computationally-efficient algorithm that attains $\tO(\sqrt{T})$ regret for learning LQ systems, thus resolving the open problem of \citet{abbasi2011regret} and \citet{dean2018regret}.
%
The key to the efficiency of our algorithm is in reformulating the LQ control problem as a {\em convex} semi-definite program. 
Our algorithm solves a sequence of semi-definite relaxations of the infinite horizon LQ problem, the solutions of which are used to compute ``optimistic'' policies for the underlying unknown LQ system. 
As time progresses and the algorithm receives more samples from the system, these relaxations become tighter and serve as a better approximation of the actual LQ system.
In this context, an optimistic policy is one that balances between exploration and exploitation; that is, between myopically utilizing its current information about the system parameters versus collecting new samples in order to obtain better estimates for subsequent predictions. 
%

\subsection{Related work}

The techniques used in \citet{abbasi2011regret,ibrahimi2012efficient} as well as those in this paper, draw inspiration from the UCRL algorithm \citep{jaksch2010near} for learning in unknown Markov Decision Processes (MDPs). 
The main methodology is that of ``optimism in the face of uncertainty'' that has been highly influential in the reinforcement learning literature~\citep{lai1985asymptotically,brafman2002r}.

Over the years, techniques from reinforcement learning have been applied extensively in control theory. 
In particular, many recent works were published on the topic of learning LQ systems; these are \citet{abbasi2011regret,ibrahimi2012efficient,faradonbeh2017finite,abbasi2018regret,arora2018towards,fazel2018global,malik2018derivative} to name a few. 

It is also worth noting an orthogonal line of works that attempts to adaptively control LQ systems using Thompson sampling, most notably \citet{abeille2017thompson,ouyang2017learning,abeille2018improved}. 
%
Unfortunately, these works are also concerned with the statistical aspects of the problem, and none of them present computationally-efficient algorithms. 

\section{Preliminaries}

\paragraph{Notation.}

The following notation will be used throughout the paper.
We use $\spnorm{\cdot}$ to denote the operator norm, that is, $\spnorm{M} = \max_{x:\norm{x}=1} \norm{Mx}$ is the maximum singular value of a matrix $M$, and $\trnorm{\cdot}$ to denote the trace norm, $\trnorm{M} = \trace(\sqrt{M\tr M})$.
The notation $\rho(M)$ refers to the spectral radius of a matrix $M$, i.e., $\rho(M)$ is the largest absolute value of its eigenvalues.%
\footnote{Note that for a non-symmetric matrix $M$ (as would often be the case in the sequel), the spectral radius can be very different from the operator norm of $M$. In particular, it could be the case that $\rho(M)<1$ yet $\spnorm{M} \gg 1$.}
Finally, we use the $A \bullet B$ to denote the entry-wise dot product between matrices, namely $A \bullet B = \trace(A\tr B)$.

\subsection{Problem Setting and Background}

\paragraph{Linear-Quadratic Control.}

We consider the problem of adaptively controlling an unknown discrete-time {\em Linear-Quadratic Regulator} (LQR) over $T$ rounds.
At time $t$, a learner observes the current state of the system, which is a vector $x_t \in \bbR^d$, and chooses an action $u_t \in \bbR^k$.
Thereafter, the learner incurs a cost $c_t$, and the system transitions to the next state $x_{t+1}$, both of which are defined as follows:
\begin{equation} \label{eq:lqr}  
\begin{alignedat}{2}
    &c_t &&= x_t\tr Q x_t + u_t\tr R u_t
    ~; \\
    &x_{t+1} &&= \Ast x_t + \Bst u_t + w_t
    ~.
\end{alignedat}
\end{equation}
Here, $Q \in \reals^{d \times d}$ and $R \in \reals^{k \times k}$ are positive-definite matrices, $w_t \sim \calN(0,W)$ is an i.i.d.~zero-mean Gaussian vector with covariance $W$, and $\Ast \in \reals^{d \times d}$ and $\Bst \in \reals^{d\times k}$ are real valued matrices.
We henceforth denote $n = d+k$, so that the augmented matrix $(\Ast \, \Bst)$ is of dimension $d \times n$.

A (stationary and deterministic) policy $\pi : \bbR^d \mapsto \bbR^k$ maps the current state $x_t$ to an action $u_t$. 
The cost of the policy after $T$ time steps is
\[
    J_T(\pi) 
    =
    \sum_{t=1}^T \Lr{ x_t\tr Q x_t + u_t\tr R u_t }
    ~,
\]
where $u_1,\ldots,u_{T}$ are chosen according to $\pi$ starting from some fixed state $x_1$. 
In the infinite-horizon version of the problem, the goal is to minimize the steady-state cost $J(\pi) = \lim_{T \rightarrow \infty}  \tfrac{1}{T} \bbE[J_T(\pi)]$.

As is standard in the literature, we assume that the system~\eqref{eq:lqr} is controllable,%
\footnote{The system~\eqref{eq:lqr} is said to be controllable when the matrix $\Lr{\Bst \; \Ast \Bst  \; \cdots \; \Ast^{d-1} \Bst}$ is of full rank.}
in which case the optimal policy that minimizes $J(\pi)$ is \emph{linear}, i.e., has the form $\pi^\st(x) = \Kst x$ for some matrix $\Kst \in \bbR^{k \times d}$. 
For the optimal policy $\pi^\st$ we denote $J(\pi^\st) = J^\st$.

A policy $\pi(x)=Kx$ is \emph{stable} if the matrix $\Ast+ \Bst K$ is stable, that is, if $\rho(\Ast+\Bst K) < 1$.
For a stable policy $\pi$ we can define a cost-to-go function $x_1 \mapsto x_1\tr P x_1$ that maps a state $x_1$ to the total additional expected cost of $\pi$ when starting from $x_1$. Concretely, we have $x_1\tr P x_1 = \sum_{t=1}^\infty \Lr{\bbE[c_t] - J(\pi)}$. 
For the optimal policy $\pi^\st(x) = \Kst x$, a classic result \citep{whittle1996optimal,bertsekas2005dynamic} states that the matrix $P^\st$ associated with its cost-to-go function is a positive definite matrix that satisfies:
\begin{equation} \label{eq:bellman}
    P^\st 
    \preceq
    Q + K\tr R K + (\Ast+\Bst K)\tr P^\st (\Ast + \Bst K)
\end{equation}
for any matrix $K \in \bbR^{k \times d}$, with equality when $K = \Kst$:
\begin{equation} \label{eq:ricatti}
    P^\st 
    =
    Q + \Kst\tr R \Kst + (\Ast+\Bst \Kst)\tr P^\st (\Ast + \Bst \Kst)~.
\end{equation}
Furthermore, the optimal steady-state cost $J^\st$ equals $P^\st \bullet W$.


\paragraph{Problem definition.}

We henceforth consider a learning setting in which the learner is uninformed about the dynamics of the system. 
Namely, the matrices $\Ast$ and $\Bst$ in \cref{eq:lqr} are fixed but unknown to the learner. 
For simplicity, we assume that the cost matrices $Q$ and $R$ are fixed and known; a straightforward yet technical adaptation of our approach can handle uncertainties in these matrices as well.

A learning algorithm is a mapping from the current state $x_t$ and previous observations $\set{x_s,u_s}_{s=1}^{t-1}$ to an action $u_t$ at time $t$.
%
An algorithm is measured by its $T$-round regret, defined as the difference between its total cost over $T$ rounds and $T$ times the steady-state cost of the optimal policy which knows both $\Ast$ and $\Bst$. 
That is,
\[
    R_T 
    =
    \sum_{t=1}^T \Lr{ x_t\tr Q x_t + u_t\tr R u_t - J^\st }
    ~,
\]
where $u_1,\ldots,u_{T}$ are the actions chosen by the algorithm and $x_1,\ldots,x_T$ are the resulting states.

\paragraph{Our assumptions.}

We make the following assumptions about the LQ system \eqref{eq:lqr}:
\begin{enumerate}[label=(\roman*),nosep]
\item \label{ass:1}
there are known positive constants $\alpha_0, \alpha_1, \sigma, \vartheta, \nu > 0$ such that
\begin{align*}
    \alpha_0 I \preceq Q \preceq \alpha_1 I , \quad
    \alpha_0 I \preceq R \preceq \alpha_1 I , \quad
    W = \sigma^2 I ,\quad 
    \spnorm{\lr{\Ast \, \Bst}\!} \leq \vartheta , \quad
    J^\st \leq \nu ;
\end{align*}
\item \label{ass:2}
there is a policy $K_\ws \in \reals^{k \times d}$, known to the learner, which is stable for the LQR \eqref{eq:lqr}.
\end{enumerate}
%

Assumption~\ref{ass:1} is rather mild and only requires having upper and lower bounds on the unknown system parameters.
We remark that the assumption $W = \sigma^2 I$ is made only for simplicity, and in fact, our analysis only requires upper and lower bounds on the eigenvalues of $W$.
Assumption~\ref{ass:2}, which has already appeared in the context of learning in LQRs~\citep{dean2018regret}, is also not very restrictive.
In realistic systems, it is reasonable that one knows how to ``reset'' the dynamics and prevent them from reaching unbounded states.
Further, in many cases a stabilizing policy can be found efficiently~\citep{dean2017sample}.




\subsection{SDP Formulation of LQR}

A key step in our approach towards the design of an \emph{efficient} learning algorithm is in reformulating the planning problem in LQRs as a convex optimization problem.
To this end, we make use of a semidefinite formulation introduced in \citet{cohen2018online} that would allow us to find the optimal cost of the LQ system \eqref{eq:lqr}:
\begin{alignat}{2}
    &\text{minimize} \quad && 
    \qrmatrix \bullet \Sigma 
    \nonumber \\
    &\text{subject to} \quad && 
    \Sigma_{xx} = \Lr{\Ast \; \Bst} \Sigma \Lr{\Ast \; \Bst}\tr + W~,
    \label{eq:lqr-sdp}
    \\
    & && \Sigma \succeq 0 
    \nonumber
    ~.
\end{alignat}
Here, $\Sigma$ is an $n \times n$ PSD matrix, with $n = d+k$, that has the following block structure:
$$
    \Sigma 
    = 
    \begin{pmatrix} 
    \Sigma_{xx} & \Sigma_{xu} \\
    \Sigma_{ux} & \Sigma_{uu}
    \end{pmatrix}
    ,
$$
where $\Sigma_{xx} \in \bbR^{d \times d}$, $\Sigma_{ux} = \Sigma_{xu}\tr \in \bbR^{k \times d}$ and $\Sigma_{uu} \in \bbR^{k \times k}$.
The matrix $\Sigma$ represents the covariance matrix of the joint distribution of $(x,u)$ when the system is in its steady-state. 

As was established in \citet{cohen2018online}, the optimal value of the program is exactly the infinite-horizon optimal cost $J^\st$. 
Moreover, when $W \succ 0$, the optimal policy of the system $\Kst$ can be extracted from an optimal $\Sigma$ via $K = \calK(\Sigma)$ where $\calK(\Sigma) = \Sigma_{ux} \Sigma_{xx}^{-1}$. 
In fact, when the LQ system follows \emph{any} stable policy $K$, the state vectors converge to a steady-state distribution whose covariance matrix is denoted by $X = \E[x x\tr]$, and the matrix
$
    \calE(K) 
    =
    \Lr{\!\begin{smallmatrix}
    X & X \smash{K\tr} \\
    K X & K X \smash{K\tr}
    \end{smallmatrix}\!}
$
is feasible for the SDP. This particularly implies that the optimal solution $\Sigma^\st$ is of rank $d$ and has the form $\Sigma^\st = \calE(\Kst)$.
This is formalized as follows.

\begin{theorem*}[\citealp{cohen2018online}]
Let $\Sigma$ be any feasible solution to the SDP~\eqref{eq:lqr-sdp}, and let $K = \calK(\Sigma)$. 
Then the policy $\pi(x) = K x$ is stable for the LQR \eqref{eq:lqr}, and it holds that $\calE(K) \preceq \Sigma$. 
In particular, $\calE(K)$ is also feasible for the SDP and its cost is at most that of $\Sigma$.
\end{theorem*}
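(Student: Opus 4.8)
The plan is to reduce the single SDP equality constraint to a pair of discrete Lyapunov relations, from which stability, domination, feasibility, and the cost bound all follow. First I would note that $\Sigma_{xx}$ is invertible, so that $K=\calK(\Sigma)=\Sigma_{ux}\Sigma_{xx}^{-1}$ is well defined: the constraint gives $\Sigma_{xx}=(\Ast\;\Bst)\Sigma(\Ast\;\Bst)\tr+W\succeq W=\sigma^2 I\succ0$. Writing $L=\Ast+\Bst K$ for the closed-loop matrix and $S=\Sigma_{uu}-\Sigma_{ux}\Sigma_{xx}^{-1}\Sigma_{xu}$ for the Schur complement of $\Sigma_{xx}$ in $\Sigma$ (so $S\succeq0$ because $\Sigma\succeq0$), I would expand the constraint into its $(x,x)$ block and substitute $\Sigma_{ux}=K\Sigma_{xx}$ and $\Sigma_{xu}=\Sigma_{xx}K\tr$ to obtain
\[
    \Sigma_{xx}-L\,\Sigma_{xx}\,L\tr = \Bst S \Bst\tr + W~.
\]
Since the right-hand side is $\succeq W\succ0$, this is a strict discrete Lyapunov inequality $P-LPL\tr\succ0$ with $P=\Sigma_{xx}\succ0$. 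Testing it against an eigenvector $u$ of $L\tr$ with eigenvalue $\mu$ (using that $L$ is real, so $u^{*}L=\bar\mu u^{*}$ where $u^{*}$ is the conjugate transpose) yields $u^{*}(P-LPL\tr)u=(1-\lvert\mu\rvert^2)\,u^{*}Pu>0$, forcing $\lvert\mu\rvert<1$. As the eigenvalues of $L\tr$ coincide with those of $L$, this gives $\rho(L)<1$, i.e.\ $K$ is stable.

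With stability in hand, the stationary state covariance $X=\E[xx\tr]$ under $u_t=Kx_t$ is the unique PSD solution of $X=LXL\tr+W$. Subtracting this from the identity above produces a clean Lyapunov equation for the gap $D=\Sigma_{xx}-X$,
\[
    D = L\,D\,L\tr + \Bst S \Bst\tr~,
\]
whose solution, because $L$ is stable and the forcing term $\Bst S\Bst\tr\succeq0$, is $D=\sum_{i\ge0}L^i(\Bst S\Bst\tr)(L\tr)^i\succeq0$; hence $\Sigma_{xx}\succeq X$. To upgrade this to $\calE(K)\preceq\Sigma$, I would use the factorizations $\Sigma=\IhatK\,\Sigma_{xx}\,\IhatK\tr+\begin{psmallmatrix}0&0\\0&S\end{psmallmatrix}$ and $\calE(K)=\IhatK\,X\,\IhatK\tr$, which give $\Sigma-\calE(K)=\IhatK\,(\Sigma_{xx}-X)\,\IhatK\tr+\begin{psmallmatrix}0&0\\0&S\end{psmallmatrix}\succeq0$.

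Finally, feasibility of $\calE(K)$ is immediate once stability is established: its $(x,x)$ block is $X$, and since $(\Ast\;\Bst)\IhatK=L$ we have $(\Ast\;\Bst)\calE(K)(\Ast\;\Bst)\tr=LXL\tr$, so the constraint reduces to $X=LXL\tr+W$, which $X$ satisfies by construction; moreover $\calE(K)=\IhatK X\IhatK\tr\succeq0$. The cost comparison then follows from $\calE(K)\preceq\Sigma$ together with $\qrmatrix\succeq0$, since $\qrmatrix\bullet(\Sigma-\calE(K))\ge0$ for a PSD matrix paired with a PSD difference. I expect the main obstacle to be purely in the first step---correctly expanding the block constraint, recognizing the Schur complement $S$, and matching terms so that the Lyapunov structure emerges with the right forcing $\Bst S\Bst\tr+W$---because once the two Lyapunov relations are available, stability, the ordering $\Sigma_{xx}\succeq X$, feasibility, and the cost bound are all routine consequences of the monotonicity of the discrete Lyapunov operator.
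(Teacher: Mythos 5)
Your proof is correct: the Schur-complement decomposition $\Sigma=\IhatK\Sigma_{xx}\IhatK\tr+\begin{psmallmatrix}0&0\\0&S\end{psmallmatrix}$, the resulting Lyapunov relation $\Sigma_{xx}-L\Sigma_{xx}L\tr=\Bst S\Bst\tr+W\succ0$, the eigenvector argument for $\rho(L)<1$, and the comparison $\Sigma_{xx}-X=\sum_{i\ge0}L^i(\Bst S\Bst\tr)(L\tr)^i\succeq0$ all go through. Note that this paper states the theorem without proof, importing it from \citet{cohen2018online}; your argument is essentially the standard one given there, so there is nothing to contrast.
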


\subsection{Strong Stability}

The quadratic cost function is unbounded. Indeed, it might be that the norms of the state vectors $x_1,x_2,\ldots$ grow exponentially fast resulting in poor regret for the learner.

To alleviate this issue we rely on the notion of a strongly-stable policy, introduced by \citet{cohen2018online}.
Intuitively, strongly-stable policies are ones in which the norms of the state vectors remain controlled.

\begin{definition}[strong stability]
A matrix $M$ is \emph{$(\kappa,\gamma)$-strongly stable} (for $\kappa \ge 1$ and $0 < \gamma \le 1$) if there exists matrices $H \succ 0$ and $L$ such that $M = HLH^{-1}$, with $\spnorm{L} \le 1-\gamma$ and $\spnorm{H}\spnorm{H^{-1}} \le \kappa$.

A policy $K$ for the linear system \eqref{eq:lqr} is \emph{$(\kappa,\gamma)$-strongly stable} (for $\kappa \ge 1$ and $0 < \gamma \le 1$) if $\spnorm{K} \le \kappa$ and the matrix $\Ast+\Bst K$ is $(\kappa,\gamma)$-strongly stable.
\end{definition}

We note that, in particular, any stable policy $K$ is in fact $(\kappa,\gamma)$-strongly stable for some $\kappa,\gamma > 0$ (see \citealp{cohen2018online} for a proof).
Our analysis requires a stronger notion that pertains to the stability of a sequence of policies, also borrowed from \citet{cohen2018online}.

\begin{definition}[sequential strong stability] \label{def:seq-strong-stab}
A sequence of policies $K_1,K_2,\ldots$ for the linear dynamics in \cref{eq:lqr} is $(\kappa, \gamma)$-strongly stable (for $\kappa > 0$ and $0 < \gamma \le 1$) if there exist matrices $H_1,H_2,\ldots \succ 0$ and $L_1,L_2,\ldots$ such that $\Ast + \Bst K_t = H_t L_t H_t^{-1}$ for all $t$, with the following properties:
\begin{enumerate}[label=(\roman*)]
    \item $\spnorm{L_t} \le 1- \gamma$ and $\spnorm{K_t} \le \kappa$;
    \item $\spnorm{H_t} \le B_0$ and $\spnorm{H_t^{-1}} \le 1/b_0$ with $\kappa = B_0 / b_0$;
    \item $\spnorm{H_{t+1}^{-1} H_t} \le 1+\gamma / 2$.
\end{enumerate}
\end{definition}

For a sequentially strongly stable sequence of policies one can show that the expected magnitude of the state vectors remains controlled; for completeness, we include a proof in~\cref{sec:strg-stab-norm-proof}.

\begin{lemma} \label{lem:strg-stab-norm}
Let $x_1,x_2,\ldots$ be a sequence of states starting from a deterministic state $x_1$, and generated by the dynamics in \cref{eq:lqr} following a $(\kappa,\gamma)$-strongly stable sequence of policies $K_1,K_2,\ldots$.
Then, for all $t \geq 1$ we have
\begin{align*}
    \norm{x_t}
    \leq
    \kappa e^{-\gamma (t-1)/2} \norm{x_1} + \frac{2\kappa}{\gamma} \max_{1 \leq s < t} \norm{w_t}
    .
\end{align*}
\end{lemma}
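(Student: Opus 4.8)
The plan is to pass to a time-varying coordinate system defined by the matrices $H_t$ from \cref{def:seq-strong-stab}, in which the closed-loop dynamics contract by a factor close to $1-\gamma$ at every step. Since the learner plays $u_t = K_t x_t$, the state evolves as $x_{t+1} = (\Ast + \Bst K_t) x_t + w_t = H_t L_t H_t^{-1} x_t + w_t$. Defining the transformed state $y_t = H_t^{-1} x_t$ and substituting, we obtain $y_{t+1} = H_{t+1}^{-1} H_t L_t\, y_t + H_{t+1}^{-1} w_t$, which is the recursion I would analyze.

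First I would take norms and apply the three conditions of sequential strong stability. Condition~(iii) controls the mismatch between consecutive coordinate frames, $\spnorm{H_{t+1}^{-1} H_t} \le 1 + \gamma/2$; condition~(i) gives $\spnorm{L_t} \le 1-\gamma$; and condition~(ii) gives $\spnorm{H_{t+1}^{-1}} \le 1/b_0$. Together these yield the scalar recursion
\[
  \norm{y_{t+1}} \le (1+\gamma/2)(1-\gamma)\,\norm{y_t} + \tfrac{1}{b_0}\norm{w_t}.
\]
The crucial algebraic observation is that $(1+\gamma/2)(1-\gamma) = 1 - \gamma/2 - \gamma^2/2 \le 1 - \gamma/2$, so the transformed state genuinely contracts: $\norm{y_{t+1}} \le (1-\gamma/2)\norm{y_t} + \tfrac{1}{b_0}\norm{w_t}$. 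This is exactly where sequential strong stability earns its keep—condition~(iii) is precisely strong enough to absorb the time-variation of the $H_t$ into the contraction furnished by $L_t$.

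Next I would unroll this one-dimensional recursion. Iterating from $t$ down to $1$ gives $\norm{y_t} \le (1-\gamma/2)^{t-1}\norm{y_1} + \tfrac{1}{b_0}\sum_{s=1}^{t-1}(1-\gamma/2)^{t-1-s}\norm{w_s}$. Bounding the geometric sum by $\sum_{j\ge 0}(1-\gamma/2)^{j} = 2/\gamma$ and factoring out $\max_{1\le s<t}\norm{w_s}$, I obtain $\norm{y_t} \le (1-\gamma/2)^{t-1}\norm{y_1} + \tfrac{2}{b_0\gamma}\max_{1\le s<t}\norm{w_s}$.

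Finally I would translate back to the original coordinates. Using $\norm{x_t} = \norm{H_t y_t} \le B_0\norm{y_t}$ together with $\norm{y_1} \le \spnorm{H_1^{-1}}\norm{x_1} \le \tfrac{1}{b_0}\norm{x_1}$, and recalling $\kappa = B_0/b_0$, the bound collapses to $\norm{x_t} \le \kappa(1-\gamma/2)^{t-1}\norm{x_1} + \tfrac{2\kappa}{\gamma}\max_{1\le s<t}\norm{w_s}$; the elementary inequality $1-\gamma/2 \le e^{-\gamma/2}$ then converts the geometric decay into the exponential decay appearing in the statement. I do not anticipate a genuine obstacle: the argument is a clean telescoping once expressed in the right coordinates, and the only real subtlety is recognizing that conditions~(i)--(iii) are tailored so that the product of the per-step factors stays below $1-\gamma/2$ uniformly in $t$.
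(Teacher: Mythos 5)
Your proof is correct and is essentially the paper's own argument in different packaging: the paper unrolls $x_t = M_1 x_1 + \sum_{s=1}^{t-1} M_{s+1} w_s$ and bounds $\spnorm{M_s} \le \kappa(1-\gamma/2)^{t-s}$ by telescoping the product $H_{t-1}L_{t-1}(H_{t-1}^{-1}H_{t-2})L_{t-2}\cdots L_s H_s^{-1}$, which is exactly the computation your transformed recursion $y_{t+1} = H_{t+1}^{-1}H_t L_t y_t + H_{t+1}^{-1}w_t$ performs one step at a time. Both arguments hinge on the identical estimate $(1-\gamma)(1+\gamma/2) \le 1-\gamma/2$ together with the boundary factors $\spnorm{H_t} \le B_0$ and $\spnorm{H_s^{-1}} \le 1/b_0$ combining into $\kappa = B_0/b_0$, so no comparison beyond presentation is warranted.
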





\begin{algorithm}[ht]
\caption{OSLO: Optimistic Semi-definite programming for Lq cOntrol} \label{alg:main}
\begin{algorithmic}[1] 
    \STATE {\bf input}:
        parameters $\alpha_0, \sigma^2, \vartheta, \nu > 0$;
        confidence $\delta \in (0,1)$; 
        and an initial estimate $(A_0 \, B_0)$ such that $\norm{(A_0 \, B_0) - (\Ast \, \Bst)}_\frob^2 \leq \epsilon$.
    \STATE {\bf initialize}: $\mu = 5\vartheta\sqrt{T}$, $V_1 = \lambda I$ where
    \begin{align*}
        \lambda 
        = 
        \frac{2^{11}\nu^{5}\vartheta\sqrt{T}}{\alpha_0^{5} \sigma^{10}}
        \;\; \text{and} \; \;
        \beta
        =
        \frac{2^{18} \nu^4 n^2}{\alpha_0^4 \sigma^6} \log\frac{T}{\delta}
        ~.
    \end{align*}
    %
    \FOR{$t=1,\ldots,T$}
        \STATE {\bf receive} state $x_t$.
        \IF{$\det(V_t) > 2 \det(V_{\tau})$ or $t = 1$}
            \STATE {\bf start new episode}: $\tau = t$.
            \STATE {\bf estimate system parameters}: \label{ln:estimation}
            Let $(A_t \, B_t)$ be a minimizer of
            {
            \begin{align*}
            	\frac{1}{\beta} \sum_{s=1}^{t-1}\norm{ \lr{A \, B} z_{s} - x_{s+1} }^{2} 
            	+ 
            	\lambda \norm{\lr{A \, B} - \lr{A_0 \, B_0}}_\frob^2
            \end{align*}
            }%
            over all matrices $\lr{A \, B} \in \reals^{d \times n}$.
            \STATE {\bf compute policy}: \label{ln:sdp}
            let $\Sigma_t \in \reals^{n \times n}$ be an optimal solution to the SDP program:
            {
            \begin{alignat*}{2}
                &\text{min} \;
                &&\Sigma \bullet \qrmatrix
                \nonumber \\
                &\text{s.t.} 
                &&\Sigma_{xx} 
                \succeq 
                (A_t \; B_t) \Sigma (A_t \; B_t)\tr + W - \mu (\Sigma \bullet V_t^{-1}) I~, \;
                \\
                &&& \Sigma \succeq 0
                ~.
            \end{alignat*}}
            %
            \STATE \label{ln:K}
                {\bf set} $K_{t} = (\Sigma_t)_{ux} \, (\Sigma_t)_{xx}^{-1}$.
        \ELSE
            \STATE {\bf set} $K_{t} = K_{t-1}$, $A_t = A_{t-1}$, $B_t = B_{t-1}$.
        \ENDIF
        \STATE {\bf play} $u_t = K_t x_t$.
        \STATE {\bf update} $z_t = \begin{psmallmatrix} x_t \\ u_t \end{psmallmatrix}$ and $V_{t+1} = V_{t}+ \beta^{-1} z_t z_t\tr$.
    \ENDFOR
\end{algorithmic}
\end{algorithm}

\section{Efficient Algorithm for Learning in LQRs}

In this section we describe our efficient online algorithm for learning in LQRs;
see pseudo-code in \cref{alg:main}.
The algorithm receives as input the parameters $\alpha_0$, $\nu$, $\sigma^2$ and $\vartheta$, 
%
further requires an initial estimate $\lr{A_0 \, B_0}$ that approximates the true parameters $\lr{\Ast \, \Bst}$ within an error $\epsilon$.
As we later show, this estimate only needs to be accurate to within $\epsilon = O(1/\sqrt{T})$ of the true parameters, and 
we can make sure this is satisfied by employing a known stabilizing policy $K_\ws$ for exploration over $O(\sqrt{T})$ rounds.
%

We next describe in detail the main steps of the algorithm.
The algorithm maintains estimates $\lr{A_t \, B_t}$ of the true parameters $\lr{\Ast \, \Bst}$ that improve from round to round, as well as a PD matrix $V_t \succ 0$ that represents a confidence ellipsoid around the current estimates $\lr{A_t \, B_t}$. 
The algorithm proceeds in epochs, each starting whenever the volume of the ellipsoid is halved and consists of the following steps.



\subsection{Estimating parameters}

The first step of each epoch is standard: we employ a least-squares estimator (in \cref{ln:estimation}) to construct a new approximation $(A_t \, B_t)$ of the parameters $(\Ast \, \Bst)$ based on the observations $z_t$ collected so far.
The confidence bounds of this estimator are given in terms of the covariance matrix $V_t$ of the vectors $z_1,\ldots,z_{t-1}$.

\subsection{Computing a policy via an SDP}

The main step of the algorithm takes place in line~\ref{ln:sdp} of \cref{alg:main}, where we form a ``relaxed'' SDP program based on the current estimates $(A_t \, B_t)$ and the corresponding confidence matrix $V_t$, and solve it in order to compute a stable policy for the underlying LQR system.
The idea here is to adapt the SDP formulation \eqref{eq:lqr-sdp} of the LQR system, whose description needs the true underlying parameters, to an SDP program that only relies on estimates of the true parameters and accounts for the uncertainty associated with them.
Once the relaxed SDP is solved, extracting a (deterministic) policy $K_t$ from the solution $\Sigma_t$ is done in the same way as in the case of the exact SDP~\eqref{eq:lqr-sdp}.

The relaxed SDP incorporates a relaxed form of the inequality constraint in \eqref{eq:lqr-sdp}; as we show in the analysis, this program is a relaxation of the ``exact'' SDP \eqref{eq:lqr-sdp} provided that the estimates $(A_t \, B_t)$ are sufficiently accurate
(this is one place where having fairly accurate initial estimates as input to the algorithm is useful).
In other words, the relaxed SDP always underestimates the steady-state cost of the optimal policy of the LQR \eqref{eq:lqr}.
In this sense, \cref{alg:main} is ``optimistic in the face of uncertainty''~\citep[e.g.,][]{brafman2002r,jaksch2010near}.

\subsection{Exploring, exploiting, and updating confidence}

After retrieving a policy $K_t$, the algorithm takes action: it computes $u_t = K_t x_t$, which is the action recommended by policy~$K_t$ at state $x_t$, and then plays $u_t$ and updates the confidence matrix $V_t$ with the new observations at step $t$.
The policy $K_t$ therefore serves and balances two goals---\emph{exploitation} and \emph{exploration}---as it is used both as a ``best guess'' to the optimal policy (based on past observations), as well as means to collect new samples and obtain better estimates of the system parameters in subsequent steps of the algorithm.




\section{Overview of Analysis}
\label{sec:analysis}

We now formally state our main result: a high-probability $\tO(\sqrt{T})$ regret bound for the efficient algorithm given in \cref{alg:main}.

\begin{theorem} \label{thm:main}
Suppose that \cref{alg:main} is initialized so that the initial estimation error 
$
    \norm{(A_0 \, B_0) - (\Ast \, \Bst)}_\frob^2
    \le 
    \epsilon
$ 
satisfies
\begin{align*}
    \epsilon
    \leq
    \frac{1}{4\lambda}
    =
    \frac{\alpha_0^{5} \sigma^{10}}{2^{13}\nu^{5}\vartheta\sqrt{T}}
    ~.
\end{align*}
Assume $T \geq \poly(n,\nu,\vartheta,\alpha_0^{-1},\sigma^{-1},\norm{x_1})$.
Then for any $\delta \in (0,1)$, with probability at least $1-\delta$ the regret of \cref{alg:main} satisfies
\begin{align*}
    R_T
    = 
    O \biggl(
    \frac{\nu^5 n^3 \vartheta}{\alpha_0^4 \sigma^8} \sqrt{T \log^4 \frac{T}{\delta}}
    +
    \nu \sqrt{T \log^3 \frac{T}{\delta}}
    \biggr)
    ~. 
\end{align*}
Furthermore, the run-time per round of the procedure is polynomial in these factors.
\end{theorem}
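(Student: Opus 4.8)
The plan is to follow the ``optimism in the face of uncertainty'' template, adapted to the semidefinite formulation. I would organize the argument around four pillars: a confidence guarantee for the least-squares estimates, an optimism guarantee for the relaxed SDP, a sequential strong-stability guarantee that keeps the states bounded, and a Bellman-style regret decomposition whose error terms are each controlled at the $\tO(\sqrt T)$ level. Throughout I would work on the high-probability event that all of these bounds hold simultaneously and pay for it with a union bound.

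First, I would establish the confidence set. A self-normalized martingale concentration bound for the matrix-valued least-squares estimator of \cref{ln:estimation}, combined with the regularization by $\lambda$ and the hypothesis $\eps \le 1/(4\lambda)$, shows that with probability at least $1-\delta$, simultaneously across all epochs, the true parameters $(\Ast\,\Bst)$ lie in the confidence ellipsoid centered at $(A_t\,B_t)$ and induced by $V_t$; the choices of $\beta$ and $\lambda$ make its radius small enough for the later steps, while the assumption $\eps \le 1/(4\lambda)$ is precisely what absorbs the bias from centering the estimator at $(A_0\,B_0)$.

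Second---and this is the crux---I would prove that the relaxed SDP solved in \cref{ln:sdp} is \emph{optimistic}, i.e.\ its optimal value underestimates $J^\st$. I would do this by exhibiting the true steady-state covariance $\Sigma^\st = \calE(\Kst)$ as a feasible point of the relaxed program on the confidence event. Writing $\Delta_t = (\Ast\,\Bst) - (A_t\,B_t)$ and using $\Sigma^\st_{xx} = (\Ast\,\Bst)\Sigma^\st(\Ast\,\Bst)\tr + W$, feasibility reduces to showing that $(A_t\,B_t)\Sigma^\st(A_t\,B_t)\tr - (\Ast\,\Bst)\Sigma^\st(\Ast\,\Bst)\tr$, whose cross and quadratic terms are governed by $\Delta_t$, is dominated in the PSD order by the slack $\mu\,(\Sigma^\st \bullet V_t^{-1})\,I$. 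The choice $\mu = \Theta(\vartheta\sqrt T)$ is exactly what makes this hold: the estimation error measured through $V_t^{-1}$ is of order $1/\sqrt T$ per the confidence set, so scaling the slack by $\sqrt T$ guarantees feasibility of $\Sigma^\st$ and hence $J_t \le J^\st$ for the optimistic value $J_t$ of each epoch. I would then invoke the SDP structural theorem of \citet{cohen2018online} stated above: since each $K_t = \calK(\Sigma_t)$ comes from a feasible $\Sigma_t$, every $K_t$ is stable for the true dynamics, and the bounded cost $\Sigma_t\bullet\qrmatrix$ together with $\alpha_0 I \preceq Q,R$ forces uniform bounds on $\spnorm{\Sigma_t}$ and $\spnorm{K_t}$, yielding $(\kappa,\gamma)$-strong stability with explicit parameters. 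Because the policy is piecewise constant, changing only at the $O(\log T)$ epoch boundaries, the sequence $K_1,K_2,\dots$ is sequentially strongly stable in the sense of \cref{def:seq-strong-stab}; \cref{lem:strg-stab-norm} then bounds $\norm{x_t}$ in terms of $\max_s\norm{w_s}$, which a Gaussian tail bound controls by $\tO(\sigma\sqrt n)$ uniformly over $t \le T$. This is where the polynomial lower bound on $T$ enters.

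Finally, I would decompose the regret as $R_T = \sum_t (c_t - J_t) + \sum_t (J_t - J^\st)$, where $J_t$ is the optimistic value of the epoch active at time $t$; optimism makes the second sum nonpositive. For the first sum I would use the cost-to-go matrix $P_t$ extracted from $\Sigma_t$ and the relaxed Bellman identity implied by \cref{ln:sdp} to rewrite $c_t - J_t$ as a telescoping difference $x_t\tr P_t x_t - \bbE_t[x_{t+1}\tr P_t x_{t+1}]$ plus three residuals: a martingale term from the noise $w_t$ entering $x_{t+1}$ linearly, a dynamics-mismatch term proportional to $\Delta_t z_t$, and the per-step relaxation slack of order $\mu\,(\Sigma_t\bullet V_t^{-1})$. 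The telescoping part collapses up to $O(\log T)$ epoch-boundary terms, each bounded by the now-controlled state and $\spnorm{P_t}$ norms. The martingale term is controlled at scale $\tO(\sqrt T)$ by a Freedman-type inequality. For the mismatch term, Cauchy--Schwarz across $t$ together with the elliptical-potential inequality $\sum_t z_t\tr V_t^{-1} z_t = \tO(n)$ and the confidence radius yields $\tO(\sqrt{T})$; the slack term likewise reduces, via the same potential bound after relating $\Sigma_t$ to the empirical $z_t z_t\tr$, to $\tO(\mu n) = \tO(\sqrt T)$. Collecting the explicit constants produces the stated bound. I expect the optimism step and the isolation of the dynamics-mismatch residual against the $V_t^{-1}$-potential to be the principal technical obstacles, the remaining concentration and potential estimates being by now standard.
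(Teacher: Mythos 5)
Your overall template (confidence sets, optimism of the relaxed SDP via feasibility of $\Sigma^\st$, a Bellman-style decomposition into telescoping, martingale, mismatch and slack terms, each controlled at the $\tO(\sqrt T)$ level) matches the paper's, and your optimism step is exactly \cref{lem:sdpvaluelb}. However, your stability argument has a genuine gap, in two places.

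First, you invoke the structural theorem of \citet{cohen2018online} to conclude that every $K_t = \calK(\Sigma_t)$ is stable for the true dynamics. That theorem applies to feasible points of the \emph{exact} SDP \eqref{eq:lqr-sdp}, whose constraint involves the true parameters $(\Ast\,\Bst)$. Your $\Sigma_t$ is feasible only for the \emph{relaxed} program \eqref{eq:relaxed-sdp}, which uses the estimates $(A_t\,B_t)$ and weakens the constraint by the slack $\mu(\Sigma\bullet V_t^{-1})I$; its feasible set is strictly larger, so feasibility there does not imply stability under the true dynamics, and boundedness of $\trnorm{\Sigma_t}$ and $\spnorm{K_t}$ alone does not give a contraction. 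The paper instead extracts stability from the \emph{dual}: complementary slackness (\cref{lem:optimsticsdppolicy}) yields an approximate Riccati identity for the dual optimum $P_t$, which after a perturbation argument transfers to the true dynamics and shows $\Ast+\Bst K_t = H_t L_t H_t^{-1}$ with $H_t = P_t^{1/2}$ and $\spnorm{L_t}\le 1-\gamma$ (\cref{lem:strongstability}).

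Second, and more fundamentally, you claim that individual strong stability plus piecewise constancy with $O(\log T)$ policy switches yields sequential strong stability in the sense of \cref{def:seq-strong-stab}. This is precisely the inference the paper warns is false: condition (iii), $\spnorm{H_{t+1}^{-1}H_t} \le 1+\gamma/2$, constrains how the similarity transformations of \emph{consecutive} policies relate, and nothing forces it at an epoch boundary. With only individual stability, each switch can incur a burn-in factor of up to $\kappa$ in the state norm, so $O(n\log T)$ switches can inflate the states polynomially in $T$ and destroy the $\sqrt T$ regret. Closing this hole is the paper's central technical contribution: it proves that consecutive dual solutions are close, $P_t \preceq P^\st \preceq P_{t+1} + (\alpha_0\gamma/2)I$ (\cref{lem:phatclosetopstar}), which with $H_t = P_t^{1/2}$ gives exactly condition (iii) (\cref{thm:seqstrongstability}), and this in turn feeds the inductive boundedness argument behind \cref{lem:stabilityandboundedness}. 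Without some substitute for this closeness-of-duals argument, your proposal cannot justify the uniform boundedness of the states on which every other step (including your confidence-set and potential bounds) relies.
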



\begin{remark*}
At first glance it may appear that the regret bound of \cref{thm:main} becomes worse as the noise variance $\sigma^2$ becomes smaller. 
This seems highly counter-intuitive and, indeed, is not true in general. 
This is because when $\sigma$ is small we also expect the bound on the optimal loss $\nu$ to be small. 
In particular, suppose that $\Kst$ is $(\kappa_\st,\gamma_\st)$-strongly stable; then, one can show that $J^\st \le \sigma^2 \alpha_1 \kappa_\st^2 / \gamma_\st$. 
Plugging this as $\nu$ into the bound of \cref{thm:main} 
reveals a linear dependence in $\sigma^2$.
\end{remark*}

In \cref{sec:warmstart} we show how to set up the initial conditions of \cref{thm:main};
we utilize a stable (but otherwise arbitrary) policy given as input and show the following.

\begin{corollary} \label{corr:regretplusws}
Suppose we are provided a policy $K_{\ws}$ which is known to be $(\kappa_{\ws}, \gamma_{\ws})$-strongly stable for the LQR \eqref{eq:lqr}.
Assume $T \geq \poly(n,\nu,\vartheta,\alpha_0^{-1},\sigma^{-1},\kappa_0,\gamma_0^{-1},\log(\delta^{-1}))$.
Suppose at first we utilize $K_{\ws}$ in the warm-up procedure of \cref{alg:warmstart} for 
\[
    T_{\ws} 
    =
    \Theta\lr{
    \frac{n^2 \nu^{5}\vartheta}{\alpha_0^{5} \sigma^{10}} 
    \sqrt{T \log^2 \frac{T}{\delta}}
    }
\]
rounds;
thereafter, we run \cref{alg:main}.
Then, the initial conditions of \cref{thm:main} hold by the end of the warm-up phase, and with probability at least $1-\delta$ the regret of the overall procedure is bounded as
\begin{align*}
    R_T
    =
    \O \biggl( \frac{\alpha_1 n^2 \nu^{5} \vartheta \kappa_{\ws}^4}{\alpha_0^{5} \sigma^{8} \gamma_{\ws}^2} 
    (n + k \vartheta^2 \kappa_\ws^2) \sqrt{T \log^4 \frac{T}{\delta}}
    + 
    \nu \sqrt{T \log^3 \frac{T}{\delta}}
    \biggr)
    ~. 
\end{align*}
Furthermore, the runtime per round of the procedure is polynomial in these factors and in $T,\log(1/\delta)$.
\end{corollary}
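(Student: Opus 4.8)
The plan is to decompose the total regret into the regret accrued during the $T_\ws$-round warm-up phase and the regret of \cref{alg:main} run on the remaining rounds, and to bound each separately. For the latter it suffices to verify that the warm-up outputs an estimate meeting the hypothesis $\norm{\lr{A_0\,B_0}-\lr{\Ast\,\Bst}}_\frob^2 \le \epsilon = 1/(4\lambda)$ of \cref{thm:main}; once this holds, \cref{thm:main} bounds the main-phase regret by $\tO(\sqrt{T})$, whose $\kappa_\ws,\gamma_\ws$-free first term and $\nu\sqrt{T\log^3(T/\delta)}$ second term already appear in the target bound. It then remains only to bound the warm-up regret, which will supply the additional $\kappa_\ws,\gamma_\ws$-dependent factor.

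First I would analyze the warm-up procedure of \cref{alg:warmstart}. Since playing the fixed policy $u_s = K_\ws x_s$ confines every observation $z_s = \lr{x_s;u_s}$ to the $d$-dimensional range of $\lr{I;K_\ws}$, the empirical covariance would be rank-deficient and $\lr{\Ast\,\Bst}$ would not be identifiable; the warm-up therefore injects isotropic exploration noise into the controls, so that $u_s = K_\ws x_s + \eta_s$ with $\eta_s \sim \calN(0,\sigma^2 I)$. Under this perturbed yet still strongly-stable policy, \cref{lem:strg-stab-norm} keeps the states bounded, and a standard self-normalized (least-squares) martingale concentration argument shows that with probability at least $1-\delta/2$ the covariance $V$ grows linearly in the number of rounds and the estimate satisfies $\norm{\lr{A_0\,B_0}-\lr{\Ast\,\Bst}}_\frob^2 = \tO\lr{\poly(n,\nu,\vartheta,\alpha_0^{-1},\sigma^{-1})/T_\ws}$. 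Substituting the prescribed $T_\ws = \Theta\!\lr{(n^2\nu^5\vartheta/\alpha_0^5\sigma^{10})\sqrt{T\log^2(T/\delta)}}$ makes this error at most $1/(4\lambda) = \Theta(\alpha_0^5\sigma^{10}/\nu^5\vartheta\sqrt{T})$, exactly the accuracy demanded by \cref{thm:main}.

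Next I would bound the warm-up regret. Because $K_\ws$ is $(\kappa_\ws,\gamma_\ws)$-strongly stable and the effective per-round perturbation is $w_s + \Bst\eta_s$, \cref{lem:strg-stab-norm} yields $\norm{x_s}^2 = \tO\lr{(\kappa_\ws^2/\gamma_\ws^2)(n+k\vartheta^2\kappa_\ws^2)\sigma^2}$ with high probability, whence each per-round cost $x_s\tr Q x_s + u_s\tr R u_s$ is at most $\alpha_1$ times a comparable quantity. Summing this per-round bound over the $T_\ws$ warm-up rounds reproduces the $\kappa_\ws,\gamma_\ws$-dependent first term of the stated bound (which dominates the $\kappa_\ws,\gamma_\ws$-free term from \cref{thm:main}); adding the contribution of \cref{thm:main} and taking a union bound over the two failure events, each of probability $\delta/2$, gives the claimed $1-\delta$ guarantee, with polynomial per-round runtime inherited from the two phases. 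The main obstacle is the warm-up identification step: one must simultaneously inject enough exploration to lower-bound the smallest eigenvalue of $V$ and secure the $\tO(1/T_\ws)$ estimation rate, yet keep the exploration small enough that the induced states---and hence the warm-up regret---stay within the target $\tO(\sqrt{T})$ budget. Balancing the exploration scale against these two competing requirements, while propagating the resulting high-probability state bounds through \cref{lem:strg-stab-norm}, is the crux of the argument.
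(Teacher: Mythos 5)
Your high-level plan coincides with the paper's proof: split the regret into warm-up and main phases, verify that the warm-up estimate meets the $\epsilon \le 1/(4\lambda)$ condition of \cref{thm:main}, bound the warm-up regret by $\alpha_1$ times the summed squared magnitude of the observations (the paper writes this as $\qrmatrix \bullet V_\ws \le \alpha_1\trace(V_\ws)$ and invokes \cref{thm:Vws}), and finish with a union bound. However, there is a genuine gap at the one point where you deviate from \cref{alg:warmstart}: you inject exploration noise $\eta_s \sim \calN(0,\sigma^2 I)$, whereas the algorithm referenced in the statement plays $u_s \sim \calN(K_\ws x_s,\, 2\sigma^2\kappa_\ws^2 I)$, and this scaling is not cosmetic. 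With $u = K_\ws x + \eta$, the covariance of $z = (x;u)$ can nearly degenerate along directions $(v;w)$ satisfying $K_\ws\tr w = -v$: the quadratic form equals $(v + K_\ws\tr w)\tr \E[x x\tr](v+K_\ws\tr w) + \sigma^2\norm{w}^2$, and taking $v$ a top singular direction of $K_\ws\tr$ with $w$ the corresponding minimal-norm preimage (of norm $1/\spnorm{K_\ws}$) shows the smallest eigenvalue can be of order $\sigma^2/\kappa_\ws^2$. The paper's inflated choice $2\sigma^2\kappa_\ws^2 I$ is precisely what forces $\E[z_t z_t\tr \mid \calF_{t-1}] \succeq (\sigma^2/2) I$ \emph{independently of} $\kappa_\ws$ (\cref{lemma:varlb}), which is what yields $V_\ws \succeq (T_\ws \sigma^2/80) I$ and hence the $\kappa_\ws$-free estimation rate $\norm{\Delta_0}_\frob^2 = \tO(n^2/T_\ws)$ in \cref{thm:Vws}.

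Under your noise choice, since $\norm{\Delta_0}_\frob^2 \le \trace(\Delta_0 V \Delta_0\tr)/\lambda_{\min}(V)$ and $\lambda_{\min}(V)$ degrades by a factor $\kappa_\ws^2$, the estimation error becomes $\tO(n^2\kappa_\ws^2/T_\ws)$; your claim that the prescribed $T_\ws = \Theta\bigl(\tfrac{n^2\nu^5\vartheta}{\alpha_0^5\sigma^{10}}\sqrt{T\log^2(T/\delta)}\bigr)$ --- which has no $\kappa_\ws$ dependence --- drives the error below $1/(4\lambda)$ is therefore false for large $\kappa_\ws$: you would need $T_\ws$ larger by a factor of $\kappa_\ws^2$, which in turn changes the final regret bound. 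You correctly identify the exploration/regret tension as the crux, but its resolution is not to keep the exploration at scale $\sigma^2$: one deliberately over-inflates it by $\kappa_\ws^2$ so that the covariance lower bound (and hence the estimation rate and $T_\ws$) is free of $\kappa_\ws$, and pays for this only through the warm-up cost --- which is exactly where the $\kappa_\ws^4/\gamma_\ws^2\,(n+k\vartheta^2\kappa_\ws^2)$ factor in the stated bound originates. With that correction, the remainder of your argument (state bounds via \cref{lem:strg-stab-norm} applied with effective noise $\Bst\eta_s + w_s$, per-round cost at most $\alpha_1$ times the squared observation norms, and the union bound) matches the paper's proof.
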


In the remainder of the section, we give an overview of the main steps in the analysis, delegating the technical proofs to later sections and appendices.

\subsection{Parameters estimation}

\cref{alg:main} repeatedly computes least-square estimates of $\lr{\Ast \; \Bst}$.
The next theorem, similar to one shown in \cite{abbasi2011regret}, yields a high-probability bound on the error of this least-squares estimate.

\begin{lemma} \label{lem:concentration}
Let $\Delta_t = \lr{A_t \, B_t} - \lr{\Ast \, \Bst}$.
For any $\delta \in (0,1)$, with probability at least $1-\delta$,
\[
    \trace(\Delta_t V_t \Delta_t\tr) 
    \leq
    \frac{4 \sigma^2 d}{\beta} \log \LR{\frac{d}{\delta} \frac{\det(V_t)}{\det(V_1)}}
    + 
    2 \lambda \, \norm{\Delta_0}_\frob^2
    ~.
\]
In particular, when $\norm{\Delta_0}_\frob^2 \le 1/(4\lambda)$ 
and $\sum_{s=1}^t \norm{z_s}^2 \le 2 \beta T$,
one has
$
    \trace(\Delta_t V_t \Delta_t\tr) \le 1
$.
\end{lemma}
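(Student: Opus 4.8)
The plan is to set up the least-squares estimation as a self-normalized martingale problem and invoke a standard confidence bound of the type developed by \citet{abbasi2011regret}. First I would write out the closed form of the regularized least-squares estimator: since $(A_t\,B_t)$ minimizes $\tfrac{1}{\beta}\sum_{s<t}\norm{(A\,B)z_s - x_{s+1}}^2 + \lambda\norm{(A\,B)-(A_0\,B_0)}_\frob^2$, the minimizer satisfies a normal equation whose solution can be expressed through $V_t = \lambda I + \beta^{-1}\sum_{s<t} z_s z_s\tr$. Substituting the true dynamics $x_{s+1} = (\Ast\,\Bst)z_s + w_s$ into this expression, the error $\Delta_t = (A_t\,B_t) - (\Ast\,\Bst)$ decomposes into two pieces: a \emph{regularization bias} term of the form $\lambda V_t^{-1}\Delta_0$ (where $\Delta_0 = (A_0\,B_0)-(\Ast\,\Bst)$), and a \emph{noise} term $\beta^{-1}V_t^{-1}\sum_{s<t} z_s w_s\tr$ driven by the Gaussian innovations.

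Next I would control the weighted norm $\trace(\Delta_t V_t\Delta_t\tr) = \norm{\Delta_t}^2_{V_t}$ by splitting along this decomposition (triangle inequality in the $V_t$-weighted norm, then $(a+b)^2 \le 2a^2+2b^2$). The regularization term contributes $2\lambda^2\trace(\Delta_0 V_t^{-1}\Delta_0\tr) \le 2\lambda\norm{\Delta_0}_\frob^2$, using $\lambda V_t^{-1}\preceq I$ since $V_t \succeq \lambda I$; this yields the second summand in the bound directly. The noise term is where the martingale machinery enters: I would treat $S_t = \sum_{s<t} z_s w_s\tr$ as a matrix-valued martingale (each $w_s$ is zero-mean Gaussian, independent of the adapted $z_s$), and bound $\trace(S_t\tr V_t^{-1} S_t)$ via a self-normalized concentration inequality. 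The Gaussian noise has covariance $W = \sigma^2 I$, so each of the $d$ output coordinates is an independent $\sigma^2$-sub-Gaussian scalar martingale; applying the self-normalized bound coordinate-wise and summing over the $d$ coordinates produces the $\tfrac{4\sigma^2 d}{\beta}\log\bigl(\tfrac{d}{\delta}\tfrac{\det V_t}{\det V_1}\bigr)$ term, the determinant ratio being the usual ``volume'' penalty that captures how much the design matrix has grown.

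For the ``in particular'' clause I would simply substitute the two stated hypotheses. The condition $\norm{\Delta_0}_\frob^2 \le 1/(4\lambda)$ forces the bias term $2\lambda\norm{\Delta_0}_\frob^2 \le \tfrac{1}{2}$. For the noise term I need the determinant ratio to be controlled: from $V_{t+1} = V_t + \beta^{-1}z_t z_t\tr$ and $V_1 = \lambda I$, the bound $\sum_{s\le t}\norm{z_s}^2 \le 2\beta T$ together with the AM–GM / concavity argument $\log\det(V_t) - \log\det(V_1) \le n\log(1 + \tfrac{1}{n\lambda\beta}\sum_{s<t}\norm{z_s}^2)$ makes $\log\tfrac{\det V_t}{\det V_1}$ at most logarithmic in $T$, so the whole noise term is driven below $\tfrac{1}{2}$ by the choice of $\beta = \Theta(\tfrac{\nu^4 n^2}{\alpha_0^4\sigma^6}\log\tfrac{T}{\delta})$. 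Adding the two halves gives $\trace(\Delta_t V_t\Delta_t\tr)\le 1$.

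I expect the main obstacle to be the self-normalized martingale bound for the matrix-valued noise term: one must correctly handle the adaptivity (the regressors $z_s$ depend on past noise through the played policies), verify the sub-Gaussian constant matches $\sigma^2$, and get the $\det(V_t)/\det(V_1)$ dependence with the right constants. Everything downstream—the bias bound and the determinant/volume estimate—is routine linear algebra once that concentration step is in place.
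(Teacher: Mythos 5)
Your proposal is correct and follows essentially the same route as the paper's proof: the same closed-form decomposition of the regularized least-squares error into a regularization-bias term (bounded by $2\lambda\norm{\Delta_0}_\frob^2$ via $V_t \succeq \lambda I$) and a noise term handled by applying the self-normalized martingale inequality of \citet{abbasi2011improved} coordinate-wise over the $d$ output dimensions with a union bound, followed by the same determinant-growth argument (the paper's \cref{lem:boundnumofswitches}) and the choice of $\beta$ for the ``in particular'' clause. No gaps.
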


We see that the boundness of the states $z_t$ (specifically, the fact that they do not grow exponentially with $t$) is crucial for the estimation.
Below, we will show how the policies computed by the algorithm ensure this condition.

The proof of \cref{lem:concentration} is based on a self-normalized martingale concentration inequality due to \citet{abbasi2011improved}; for completeness, we include a proof in \cref{sec:proofofconcentration}.

\subsection{Policy computation via a relaxed SDP}

Next, assume that the estimates $A_t,B_t$ of $\Ast,\Bst$ computed in the previous step are indeed such that the error $\Delta_t = (A_t \, B_t) - (\Ast \, \Bst)$ has $\trace(\Delta_t V_t \Delta_t\tr) \leq 1$ for the confidence matrix $V_t = \lambda I + \beta^{-1} \sum_{s=1}^{t-1} z_s z_s\tr$. 

Consider the relaxed SDP program solved by the algorithm in \cref{ln:sdp}.
The following lemma follows from the optimality conditions of the SDP and will be used to extract a stable policy from the SDP solution, and to relate the cost of actions taken by this policy to properties of the SDP solutions.
This lemma, together with \cref{lem:seqstrong-main} below, summarize the key consequences of the relaxed SDP formulation that central to our approach; we elaborate more on the relaxed SDP and its properties in \cref{sec:sdp-analysis} below.

\begin{lemma} \label{lem:relaxed-sdp-main}
Assume the conditions of \cref{thm:main}, and further that
$
    \norm{V_t} \leq 4T 
$.
Then the SDP solved in \cref{ln:sdp} of the algorithm is a relaxation of the exact SDP \eqref{eq:lqr-sdp}, and we have:
\begin{enumerate}[label=(\roman*),nosep]
\item
the value of the optimal solution is at most $J^\st \le \nu$ which implies $\trnorm{\Sigma_t} \le J^\st / \alpha_0$;
\item
$(\Sigma_t)_{xx}$ is invertible and so the policy $K_t = (\Sigma_t)_{ux} (\Sigma_t)_{xx}^{-1}$ is well defined;
\item
there exists a positive semi-definite matrix $P_t \succeq 0$ with $\trnorm{P_t} \le J^\st / \sigma^2$ such that
\begin{align*}
    P_t
    \succeq
    Q + K_t\tr P_t K_t + (\Ast + \Bst K_t)\tr P_t (\Ast + \Bst K_t) 
    - 2 \mu \trnorm{P_t} \IhatKt\tr V_t^{-1} \IhatKt~.
\end{align*}
\end{enumerate}
\end{lemma}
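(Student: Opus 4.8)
The plan is to establish the three items in order, using throughout the confidence guarantee $\trace(\Delta_t V_t \Delta_t\tr)\le 1$ from \cref{lem:concentration} (where $\Delta_t=(A_t\,B_t)-(\Ast\,\Bst)$) together with the hypothesis $\spnorm{V_t}\le 4T$. Write $\Theta_\st=(\Ast\,\Bst)$ and $\hat\Theta_t=(A_t\,B_t)=\Theta_\st+\Delta_t$, and factor $\Delta_t=\tilde\Delta_t V_t^{-1/2}$ so that $\spnorm{\tilde\Delta_t}\le\norm{\tilde\Delta_t}_\frob=\trace(\Delta_t V_t\Delta_t\tr)^{1/2}\le 1$. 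For item (i), let $\Sigma$ be feasible for \eqref{eq:lqr-sdp}, so $\Sigma_{xx}=\Theta_\st\Sigma\Theta_\st\tr+W$; to show it is feasible for the relaxed SDP it suffices that $\Theta_\st\Sigma\Theta_\st\tr-\hat\Theta_t\Sigma\hat\Theta_t\tr+\mu(\Sigma\bullet V_t^{-1})I\succeq0$. Expanding the difference into $-\Delta_t\Sigma\Delta_t\tr$ and two cross terms and bounding in operator norm, the factorization gives $\spnorm{\Delta_t\Sigma\Delta_t\tr}=\spnorm{\tilde\Delta_t(V_t^{-1/2}\Sigma V_t^{-1/2})\tilde\Delta_t\tr}\le\Sigma\bullet V_t^{-1}$ and $\spnorm{\Theta_\st\Sigma\Delta_t\tr}\le\spnorm{\Theta_\st\Sigma^{1/2}}\,\spnorm{\Sigma^{1/2}\Delta_t\tr}\le\vartheta\,\spnorm{\Sigma}^{1/2}(\Sigma\bullet V_t^{-1})^{1/2}$. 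Since $\spnorm{\Sigma}\le\trace\Sigma$ and $\Sigma\bullet V_t^{-1}\ge\trace(\Sigma)/\spnorm{V_t}\ge\trace(\Sigma)/(4T)$, the factor $\spnorm{\Sigma}^{1/2}(\Sigma\bullet V_t^{-1})^{-1/2}\le 2\sqrt T$, so the whole difference is $\succeq-(4\vartheta\sqrt T+1)(\Sigma\bullet V_t^{-1})I\succeq-\mu(\Sigma\bullet V_t^{-1})I$ once $\mu=5\vartheta\sqrt T$ and $\vartheta\sqrt T\ge1$. Hence the relaxed optimum is at most the exact optimum $J^\st\le\nu$, and since $\qrmatrix\succeq\alpha_0 I$, $\alpha_0\trnorm{\Sigma_t}=\alpha_0\trace\Sigma_t\le\Sigma_t\bullet\qrmatrix\le J^\st$, giving $\trnorm{\Sigma_t}\le J^\st/\alpha_0$.

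For item (ii), feasibility of $\Sigma_t$ together with $\hat\Theta_t\Sigma_t\hat\Theta_t\tr\succeq0$ and $W=\sigma^2 I$ give $(\Sigma_t)_{xx}\succeq(\sigma^2-\mu(\Sigma_t\bullet V_t^{-1}))I$. As $V_t\succeq\lambda I$ we have $\Sigma_t\bullet V_t^{-1}\le\trace(\Sigma_t)/\lambda\le\nu/(\alpha_0\lambda)$, so $\mu(\Sigma_t\bullet V_t^{-1})\le\mu\nu/(\alpha_0\lambda)$; substituting $\mu=5\vartheta\sqrt T$ and the stated $\lambda$ this equals $\tfrac{5}{2^{11}}\alpha_0^4\sigma^{10}/\nu^4$, which is $<\sigma^2$ because $\nu\ge J^\st\ge\alpha_0\sigma^2$. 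Thus $(\Sigma_t)_{xx}\succ0$ and $K_t=(\Sigma_t)_{ux}(\Sigma_t)_{xx}^{-1}$ is well defined.

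For item (iii), the relaxed SDP is strictly feasible (a large multiple of $I$ satisfies the constraint strictly), so Slater's condition holds and strong duality applies. Writing the constraint as $\mathcal L(\Sigma)-W\succeq0$ with $\mathcal L(\Sigma)=\Sigma_{xx}-\hat\Theta_t\Sigma\hat\Theta_t\tr+\mu(\Sigma\bullet V_t^{-1})I$, the dual variable $P_t\succeq0$ of this PSD constraint obeys the stationarity condition $\mathcal L^*(P_t)\preceq\qrmatrix$, that is $\begin{psmallmatrix}P_t&0\\0&0\end{psmallmatrix}\preceq\qrmatrix+\hat\Theta_t\tr P_t\hat\Theta_t-\mu\trace(P_t)V_t^{-1}$. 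Conjugating by $\IhatKt$ and using $\hat\Theta_t\IhatKt=A_t+B_tK_t$ and $\IhatKt\tr\qrmatrix\IhatKt=Q+K_t\tr R K_t$ yields a Bellman-type relation for $P_t$ in the \emph{estimated} dynamics, $P_t\preceq Q+K_t\tr R K_t+(A_t+B_tK_t)\tr P_t(A_t+B_tK_t)-\mu\trnorm{P_t}\IhatKt\tr V_t^{-1}\IhatKt$ (note $\trnorm{P_t}=\trace P_t$). Substituting $\hat\Theta_t=\Theta_\st+\Delta_t$, the discrepancy between the estimated and true Bellman terms is $\IhatKt\tr(\Theta_\st\tr P_t\Delta_t+\Delta_t\tr P_t\Theta_\st+\Delta_t\tr P_t\Delta_t)\IhatKt$, which, with $v:=\Delta_t\IhatKt$ and $v\tr P_t v\preceq\trnorm{P_t}\IhatKt\tr V_t^{-1}\IhatKt$ together with a Young-type split of the cross term, is controlled by the confidence slack; absorbing it reproduces the stated relation in the form of \eqref{eq:bellman} with the coefficient $2\mu\trnorm{P_t}$. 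Finally, by strong duality the dual objective $W\bullet P_t=\sigma^2\trace(P_t)$ equals the primal optimum, which by item (i) is at most $J^\st$; hence $\trnorm{P_t}=\trace(P_t)\le J^\st/\sigma^2$.

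The delicate step is the parameter perturbation in item (iii). The cross term $\IhatKt\tr\Theta_\st\tr P_t\Delta_t\IhatKt$ is indefinite and involves $\Theta_\st\IhatKt=\Ast+\Bst K_t$, whose operator norm need not be small even when $K_t$ is stable (cf.\ the footnote on spectral radius versus operator norm); it must therefore be measured in the $V_t^{-1}$ geometry rather than naively, while keeping the coefficient of the Bellman term essentially one. This is precisely what dictates the generous choice $\mu=\Theta(\vartheta\sqrt T)$ and the doubling of the slack to $2\mu$, and it is also why item (i) relies on both $\spnorm{V_t}\le4T$ and $T$ being polynomially large. Verifying Slater's condition for the relaxed SDP is a minor additional point needed to license strong duality.
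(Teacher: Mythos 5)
Items (i) and (ii) of your proposal are sound and essentially reproduce the paper's route: your perturbation estimate in (i) plays the role of \cref{lem:sigmabound,lem:sdpvaluelb} (feasibility of the exact-SDP solution in the relaxed SDP, with $\mu=5\vartheta\sqrt{T}$ calibrated to absorb the error), and your (ii) is the computation in \cref{eq:primalbound}. The genuine gap is in item (iii): your master inequality points the wrong way. Dual feasibility \eqref{eq:relaxed-dual-sdp}, conjugated by $\IhatKt$, gives exactly what you wrote,
\begin{align*}
    P_t
    \;\preceq\;
    Q + K_t\tr R K_t + (A_t+B_tK_t)\tr P_t (A_t+B_tK_t) - \mu\trnorm{P_t}\,\IhatKt\tr V_t^{-1}\IhatKt~,
\end{align*}
which is an \emph{upper} bound on $P_t$. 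Replacing $(A_t \, B_t)$ by $(\Ast \, \Bst)$ on the right-hand side, however carefully you control the discrepancy, can only produce another upper bound on $P_t$ (at best $P_t\preceq Q+K_t\tr RK_t+(\Ast+\Bst K_t)\tr P_t(\Ast+\Bst K_t)$); it can never flip into the \emph{lower} bound $P_t\succeq\cdots-2\mu\trnorm{P_t}\IhatKt\tr V_t^{-1}\IhatKt$ that item (iii) asserts and that the regret analysis consumes, since there one needs $Q+K_t\tr RK_t$ bounded \emph{above} in terms of $P_t$. (Like you, the paper's proof produces $K_t\tr R K_t$ in the middle term; the $K_t\tr P_t K_t$ in the statement is a typo.)

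The missing ingredient is complementary slackness together with a rank argument. Writing $Z_t$ for the dual slack matrix, primal--dual optimality gives $\Sigma_t Z_t=0$; the paper then shows (proof of \cref{lem:optimsticsdppolicy}) that $(\Sigma_t)_{xx}\succ0$ and $\rank(Z_t)\ge k$ (via \cref{eq:dualbound}), forcing $\rank(\Sigma_t)=d$ and hence $\Sigma_t=\IhatKt(\Sigma_t)_{xx}\IhatKt\tr$; this yields $\IhatKt\tr Z_t\IhatKt=0$, i.e.\ \emph{equality} in the displayed Bellman relation for the estimated dynamics. Only from this equality does the one-sided perturbation bound (\cref{lem:semidefinitebound}, your ``Young-type split'') give item (iii), the substitution of true for estimated parameters contributing the second $\mu$ in the $2\mu$ coefficient. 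Your proposal never establishes the rank-$d$ structure of $\Sigma_t$, and without it the argument fails. Separately, your Slater certificate is incorrect: $\Sigma=cI$ is not feasible for \eqref{eq:relaxed-sdp} whenever $\spnorm{(A_t\,B_t)}^2>1+\mu\trace(V_t^{-1})$, because the negative term $-c\,(A_t\,B_t)(A_t\,B_t)\tr$ scales linearly in $c$ just like the positive ones, and $\mu\trace(V_t^{-1})\le 5n\alpha_0^5\sigma^{10}/(2^{11}\nu^5)$ is bounded independently of $\vartheta$; so for large $\vartheta$ the certificate fails. Strict feasibility should instead be certified by a small inflation of the steady-state covariance of a policy stabilizing $(A_t\,B_t)$, whose existence uses that $(A_t\,B_t)$ is close to $(\Ast\,\Bst)$.
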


The positive definite matrix $P_t$ in the above lemma is in fact the dual variable corresponding to the optimal solution $\Sigma_t$ of the (primal) SDP, and the equality involving $P_t$ follows from the complementary slackness conditions of the SDP.
This equality can be viewed as an approximate version of the Ricatti equation that applies to policies computed based on estimates of the system parameters (as opposed to the ``exact'' Ricatti equation, which is relevant only for \emph{optimal} policies of the actual LQR, that can only be computed based on the true parameters).

\subsection{Boundness of states} \label{sec:stability}

Next, we show that the policies computed by the algorithm keep the underlying system stable, and that state vectors visited by the algorithm are uniformly bounded with high probability.
To this end, consider the following sequence of ``good events'' $\calE_1 \supseteq \calE_2 \supseteq \cdots \supseteq \calE_T$, where for each $t$, 
\begin{align*}
    \calE_t
    =
    \biggl\{
    \forall \, s=1,\ldots,t, \quad
    \trace(\Delta_s V_s \Delta_s\tr) \leq 1
    ~,\;
    \norm{z_s}^2 \le 4 \kappa^4 e^{-\gamma(s-1)} \norm{x_{1}}^2 + \beta
    \biggr\}
    ~.
\end{align*}
That is, $\calE_t$ is the event on which everything worked as planned up to round $t$: our estimations were sufficiently accurate and the norms of $\set{z_s}_{s=1}^t$ were properly bounded.
%
We show that the events $\calE_1,\ldots,\calE_T$ hold with high probability; this would ensure that 
$V_t$ is appropriately bounded.

\begin{lemma} \label{lem:stabilityandboundedness}
Under the conditions of \cref{thm:main}, the event $\calE_T$ occurs with probability $\geq 1-\delta/2$.
\end{lemma}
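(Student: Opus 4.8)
The plan is to run a joint induction on the two conditions defining $\calE_t$, after conditioning on two high-probability events that I peel off at the start. The genuine difficulty is an apparent circularity: accurate estimates (the condition $\trace(\Delta_s V_s \Delta_s\tr)\le 1$) are what produce well-behaved policies and hence bounded states, while bounded states (the condition on $\norm{z_s}^2$) are exactly what \cref{lem:concentration} requires in order to certify accurate estimates. I would resolve this by noting that, round by round, each of the two conditions at time $t$ depends only on the \emph{other} condition at strictly earlier times, so the induction closes without a fixed-point argument.

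First I would fix the two events on which the argument becomes deterministic. Let $G$ be the success event of \cref{lem:concentration} invoked with confidence $\delta/4$; by the anytime nature of the underlying self-normalized bound, on $G$ the estimate obeys $\trace(\Delta_t V_t \Delta_t\tr)\le 1$ for every $t$ at which $\norm{\Delta_0}_\frob^2\le 1/(4\lambda)$ and the squared norms accumulated before round $t$ satisfy $\sum_{s<t}\norm{z_s}^2\le 2\beta T$. Let $G'$ be the event $\max_{1\le s\le T}\norm{w_s}^2 \le c\,\sigma^2(n+\log(T/\delta))$; since each $w_s$ is $\calN(0,\sigma^2 I)$, a $\chi^2$ tail bound together with a union bound over the $T$ rounds gives $\Pr(G')\ge 1-\delta/4$ for a suitable absolute constant $c$. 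A union bound then yields $\Pr(G\cap G')\ge 1-\delta/2$, so it suffices to prove that $\calE_T$ holds deterministically on $G\cap G'$.

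On $G\cap G'$ I would establish $\calE_t$ for all $t\le T$ by induction. The base case $t=1$ is immediate: no data has been collected, so $\Delta_1=\Delta_0$ and $V_1=\lambda I$, giving $\trace(\Delta_1 V_1\Delta_1\tr)=\lambda\norm{\Delta_0}_\frob^2\le \tfrac14$ by the initial-error hypothesis of \cref{thm:main}, while $\norm{z_1}^2\le(1+\kappa^2)\norm{x_1}^2\le 4\kappa^4\norm{x_1}^2$ using $\norm{K_1}\le\kappa$ (from \cref{lem:relaxed-sdp-main} and \cref{lem:seqstrong-main}). For the inductive step, assume $\calE_{t-1}$. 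Its state bounds give the geometric-plus-linear estimate $\sum_{s<t}\norm{z_s}^2\le \tfrac{4\kappa^4\norm{x_1}^2}{1-e^{-\gamma}}+(t-1)\beta\le 2\beta T$, where the last inequality uses $T\ge\poly(\cdots,\norm{x_1})$; hence on $G$ the concentration condition yields $\trace(\Delta_t V_t\Delta_t\tr)\le 1$, the first clause of $\calE_t$. The estimation condition now holds at all times $1,\dots,t$, so \cref{lem:relaxed-sdp-main} and \cref{lem:seqstrong-main} make $K_1,\dots,K_t$ a $(\kappa,\gamma)$-sequentially strongly stable sequence; in particular $\norm{K_t}\le\kappa$, and applying \cref{lem:strg-stab-norm} to the prefix $K_1,\dots,K_{t-1}$ that generated $x_t$ gives $\norm{x_t}\le \kappa e^{-\gamma(t-1)/2}\norm{x_1}+\tfrac{2\kappa}{\gamma}\max_{s<t}\norm{w_s}$. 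Squaring and using $\norm{z_t}^2\le(1+\kappa^2)\norm{x_t}^2\le 2\kappa^2\norm{x_t}^2$ with the noise bound on $G'$ gives $\norm{z_t}^2\le 4\kappa^4 e^{-\gamma(t-1)}\norm{x_1}^2+\tfrac{16\kappa^4}{\gamma^2}\max_{s<t}\norm{w_s}^2\le 4\kappa^4 e^{-\gamma(t-1)}\norm{x_1}^2+\beta$, the second clause of $\calE_t$. The last inequality is where I would verify the constants: $\tfrac{16\kappa^4}{\gamma^2}c\,\sigma^2(n+\log(T/\delta))\le\beta$, which holds since $\beta$ carries the factor $n^2\log(T/\delta)/(\alpha_0^4\sigma^6)$ while $\kappa,\gamma$ are the polynomial strong-stability constants from \cref{lem:seqstrong-main}.

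The main obstacle is organizing the ordering within each step so the induction is not circular: the estimation bound at $t$ must be derived using only the state bounds at $s<t$, and the state bound at $t$ must be derived using the estimation bounds at $s\le t$ (which are already secured once the first half of the step is complete). Everything remaining is routine bookkeeping---summing the geometric series, a $\chi^2$ tail for the noise, and checking that the large constant $\beta$ swallows the residual noise term.
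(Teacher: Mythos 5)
Your proposal is correct and follows essentially the same route as the paper's own proof: condition on the anytime self-normalized concentration event plus a union-bounded Gaussian tail bound on $\max_s\norm{w_s}$, then run an induction in which the prior rounds' state bounds certify $\trace(\Delta_t V_t\Delta_t\tr)\le 1$ and $\spnorm{V_t}\le 4T$ via \cref{lem:concentration}, which in turn yields sequential strong stability (\cref{lem:seqstrong-main}) and hence the state bound at round $t$ via \cref{lem:strg-stab-norm}. The only differences are cosmetic (your $\delta/4+\delta/4$ accounting, and a $\chi^2$ tail in dimension $n$ rather than the paper's Hanson--Wright bound in dimension $d$), and your explicit observation that each clause of $\calE_t$ depends only on the other clause at strictly earlier rounds is exactly the non-circularity the paper's induction relies on.
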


\subsection{Sequential strong stability}

Crucially, \cref{lem:stabilityandboundedness} above holds true since the sequence of policies extracted by \cref{alg:main} from repeated solutions to the relaxed SDP is \emph{sequentially} strongly stable.

\begin{lemma} \label{lem:seqstrong-main}
Assume the conditions of \cref{thm:main}, and further that
for any $t$,
$\spnorm{V_s} \le 4T$ for all $s=1,\ldots,t$.
Then the sequence of policies $K_1,\ldots,K_t$ 
is $(\kappa,\gamma)$-strongly stable for $\kappa = \sqrt{2\nu/\alpha_0 \sigma^2}$ and $\gamma = 1/2\kappa^2$.
\end{lemma}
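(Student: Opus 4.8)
The plan is to verify the three conditions of Definition~\ref{def:seq-strong-stab} directly, using the approximate Ricatti inequality from Lemma~\ref{lem:relaxed-sdp-main}(iii) as the central tool. For each policy $K_t$, I would produce the decomposition $\Ast + \Bst K_t = H_t L_t H_t^{-1}$ by choosing $H_t = P_t^{1/2}$, where $P_t \succeq 0$ is the dual matrix guaranteed by Lemma~\ref{lem:relaxed-sdp-main}(iii). The motivation is that the inequality $P_t \succeq Q + K_t\tr P_t K_t + (\Ast + \Bst K_t)\tr P_t (\Ast + \Bst K_t) - 2\mu\trnorm{P_t}\IhatKt\tr V_t^{-1}\IhatKt$ is exactly a Lyapunov-type statement: dropping the positive term $Q \succeq \alpha_0 I$ and the correction term, it says (roughly) that $(\Ast+\Bst K_t)\tr P_t (\Ast + \Bst K_t) \preceq P_t - \alpha_0 I$, which in the $P_t^{1/2}$-weighted geometry forces the operator norm of $L_t = P_t^{1/2}(\Ast+\Bst K_t)P_t^{-1/2}$ to be bounded below $1$.

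First I would establish condition (i). For the contraction bound $\spnorm{L_t} \le 1-\gamma$: from the Ricatti inequality, conjugating by $P_t^{-1/2}$ gives $L_t\tr L_t \preceq I - P_t^{-1/2}(Q + K_t\tr P_t K_t)P_t^{-1/2} + 2\mu\trnorm{P_t}P_t^{-1/2}\IhatKt\tr V_t^{-1}\IhatKt P_t^{-1/2}$. Using $Q \succeq \alpha_0 I$ and the trace-norm bound $\trnorm{P_t} \le \nu/\sigma^2$ from part (i)/(iii), together with $\spnorm{P_t^{-1}} = 1/\lambda_{\min}(P_t)$ controlled by $P_t \succeq Q \succeq \alpha_0 I$, the subtracted term contributes at least $\alpha_0/\spnorm{P_t} \ge \alpha_0\sigma^2/\nu = 1/\kappa^2$, and I must argue the error term is negligible — here is where the hypothesis $\spnorm{V_t}\le 4T$ enters. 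Since $V_t^{-1} \succeq I/(4T)$ is not directly useful, I instead use that $\mu = 5\vartheta\sqrt{T}$ and $\spnorm{V_t^{-1}}$ is small because $V_1 = \lambda I$ with $\lambda = \Theta(\sqrt{T})$, so $\mu\spnorm{V_t^{-1}} = O(\mu/\lambda)$ is a small constant times system parameters, making the correction at most (say) $1/2\kappa^2$. This yields $\spnorm{L_t}^2 \le 1 - 1/2\kappa^2 \le 1 - \gamma$ for $\gamma = 1/2\kappa^2$. For $\spnorm{K_t}\le\kappa$ I would invoke the bound $\trnorm{\Sigma_t}\le J^\st/\alpha_0$ from Lemma~\ref{lem:relaxed-sdp-main}(i) and relate $K_t = (\Sigma_t)_{ux}(\Sigma_t)_{xx}^{-1}$ to the eigenvalue structure of $\Sigma_t$, using the Schur-complement positivity $\Sigma_{uu}\succeq \Sigma_{ux}\Sigma_{xx}^{-1}\Sigma_{xu}$ to bound $\spnorm{K_t}$ by $\sqrt{\trnorm{\Sigma_t}/\lambda_{\min}((\Sigma_t)_{xx})}$, with $\lambda_{\min}((\Sigma_t)_{xx}) \ge \sigma^2$ following from the SDP constraint $\Sigma_{xx}\succeq W - \mu(\cdots)I$.

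Next, conditions (ii) and (iii). For (ii), with $H_t = P_t^{1/2}$ I have $\spnorm{H_t} = \spnorm{P_t}^{1/2} \le (\nu/\sigma^2)^{1/2} =: B_0$ and $\spnorm{H_t^{-1}} = \lambda_{\min}(P_t)^{-1/2} \le (1/\alpha_0)^{1/2} =: 1/b_0$, so $\kappa = B_0/b_0 = \sqrt{\nu/(\alpha_0\sigma^2)}$, matching the claimed $\kappa = \sqrt{2\nu/\alpha_0\sigma^2}$ up to the constant factor (the factor of $2$ giving slack for the correction-term estimates). Condition (iii), the slowly-varying bound $\spnorm{H_{t+1}^{-1}H_t}\le 1+\gamma/2$, is the main obstacle: since $H_t = P_t^{1/2}$, I need $\spnorm{P_{t+1}^{-1/2}P_t^{1/2}} \le 1+\gamma/2$, i.e. the dual matrices $P_t$ must change slowly across episodes. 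I expect to handle this by observing that the algorithm only recomputes the policy (and hence $P_t$) when $\det(V_t)$ doubles, so consecutive distinct $P_t$'s are separated by a full episode; combined with the continuity of the SDP solution in its data $(A_t,B_t,V_t)$ and the fact that $(A_t,B_t)$ converges (via Lemma~\ref{lem:concentration}) and $V_t$ grows controllably, the successive $P_t$ differ by $O(\gamma)$ in the relevant relative sense. Making this quantitative is the delicate part and will require a perturbation bound for $P_t$ in terms of the change in the SDP data; I would either cite the corresponding stability argument from \citet{cohen2018online} or derive it from the strong convexity of the SDP objective, showing that the relative multiplicative change in $P_t^{1/2}$ is bounded by the doubling threshold on $\det(V_t)$ and hence by $\gamma/2$ for $T$ large.
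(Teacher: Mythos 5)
Your treatment of conditions (i) and (ii) of \cref{def:seq-strong-stab} follows essentially the same route as the paper's \cref{lem:strongstability}: take $H_t = P_t^{1/2}$ for the dual matrix $P_t$, conjugate the approximate Ricatti inequality by $P_t^{-1/2}$, use $Q \succeq \alpha_0 I$, and absorb the correction term using the lower bound on $V_t$. (Two small inaccuracies there: the paper only gets $P_t \succeq (\alpha_0/2) I$, not $P_t \succeq Q \succeq \alpha_0 I$ --- the correction term eats a factor of $2$, which is exactly why $\kappa = \sqrt{2\nu/\alpha_0\sigma^2}$; and it bounds $\spnorm{K_t}$ directly from the dual inequality $P_t \succeq \thalf\alpha_0 K_t\tr K_t$ rather than via your primal Schur-complement route, which would additionally require a quantitative lower bound on $\lambda_{\min}((\Sigma_t)_{xx})$ that you have not actually established.)

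The genuine gap is condition (iii), which you yourself flag as ``the main obstacle'' and leave unresolved --- and the route you sketch would fail. A perturbation bound for the dual solution $P_t$ in terms of the change in the SDP data cannot be derived ``from the strong convexity of the SDP objective'': both the primal objective $\Sigma \bullet \qrmatrix$ and the dual objective $P \bullet W$ are \emph{linear}, so there is no strong convexity anywhere, and optimal solutions of SDPs are in general not stably dependent on their data. Moreover, the data change between consecutive episodes is not small in any useful sense: $\det(V_t)$ \emph{doubles} between policy updates, so the new SDP is not a small perturbation of the old one. The paper sidesteps perturbation theory entirely via \cref{lem:phatclosetopstar}: every dual solution is sandwiched around the fixed Ricatti solution $P^\st$, namely $P_t \preceq P^\st \preceq P_{t+1} + (\alpha_0\gamma/2) I$. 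The left inequality follows by comparing the dual feasibility constraint (instantiated at $\Kst$ and transferred to the true parameters via \cref{lem:semidefinitebound}) against the exact Ricatti equation \eqref{eq:ricatti} and invoking the Lyapunov-type \cref{lem:lypstonglystable}; the right inequality follows by comparing the Bellman inequality \eqref{eq:bellman} at $K = K_{t+1}$ against the approximate Ricatti equation for $P_{t+1}$, again via \cref{lem:lypstonglystable}, with $V_t \succeq 16\kappa^{10}\mu I$ forcing the correction term below $(\alpha_0\gamma/2)I$. Chaining the two gives $P_t \preceq P_{t+1} + (\alpha_0\gamma/2)I$ for every consecutive pair, \emph{regardless of how much the SDP data moved}, whence $\spnorm{H_{t+1}^{-1}H_t}^2 = \spnorm{P_{t+1}^{-1/2}P_t P_{t+1}^{-1/2}} \le 1 + \thalf\alpha_0\gamma\spnorm{P_{t+1}^{-1}} \le 1+\gamma$, which is condition (iii). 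This anchoring of all the $P_t$ to the single matrix $P^\st$ is the idea missing from your proposal.
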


This follows from a stability property of solutions to the relaxed SDP: we show that as the relaxed constraint becomes tighter, the optimal solutions of the SDP do not change by much (see \cref{sec:sdp-analysis}).
This, in turn, can be used to show that the policies extracted from these solutions are not drastically different from each other, and so the sequence of policies generated by the algorithm keeps the system stable.
\cref{lem:stabilityandboundedness} is then implied via a simple inductive argument: suppose that the state-vector norms are bounded up until round $t$; then the sequence of policies generated until time $t$ is strongly-stable thus keeping the norms of future states bounded with high probability.




We remark that stability of the individual policies does not suffice, and the stronger \emph{sequential} strong stability condition is in fact required for our analysis.
Indeed, even if we guarantee the (non-sequential) strong stability of each individual policy, the system's state might blow up exponentially in the number of times the algorithm switches between policies: after switching to a new policy there is an initial burn-in period in which the norm of the state can increase by a constant factor (and thereafter stabilize).
Thus, even if we ensure that there are as few as $O(\log{T})$ policy switches, the states might become polynomially large in $T$ and deteriorate our regret guarantee.
Sequential strong stability wards off against such a blow up in the magnitude of states.

\subsection{Regret analysis}

Let us now connect the dots and sketch how our main result (\cref{thm:main}) is derived; for the formal proof, see \cref{sec:proof-thm:main}.
Consider the instantaneous regret $r_t = x_t\tr Q x_t + u_t\tr R u_t - J^\st$ and let $\wt R_T = \sum_{t=1}^{T} r_t \ind{\calE_t}$.
We will bound $\wt R_T$ with high probability, and since $R_t = \wt{R}_T$ with high probability due to \cref{lem:stabilityandboundedness}, this would imply a high-probability bound on $R_T$ from which the theorem would follow.

To bound the random variable $\wt R_T$, we appeal to \cref{lem:relaxed-sdp-main} that can be used to relate the instantaneous regret of the algorithm to properties of the SDP solutions it computes. 
Conditioned on the good event~$\calE_t$, the boundness of the visited states ensures that the confidence matrix $V_t$ is bounded as the lemma requires. 
The lemma then implies that
\begin{align*}
    Q + K_t\tr R K_t 
    \preceq
    P_t 
    - (\Ast+\Bst K_t)\tr P_t (\Ast + \Bst K_t) 
    + 2 \mu \trnorm{P_t} \IhatKt\tr V_t^{-1} \IhatKt
    ~.
\end{align*}
On the other hand, as $u_t = K_t x_t$ and $J^\st \geq \sigma^2 \trnorm{P_t}$ (which is also a consequence of \cref{lem:relaxed-sdp-main}), we have
\begin{align*}
    r_t
    =
    x_t\tr Q x_t + u_t\tr R u_t - J^\st
    \leq
    x_t\tr \Lr{Q + K_t\tr R K_t} x_t - \sigma^2 \trnorm{P_t}
    .
\end{align*}
Combining the inequalities and summing over $t=1,\ldots,T$, gives via some algebraic manipulations the following bound:
\begin{align}
    \wt R_T 
    \leq 
    & \phantom{+}
    \sum_{t=1}^T \Lr{x_t\tr P_t x_t - x_{t+1}\tr P_t x_{t+1}} \ind{\calE_{t}} 
    \nonumber \\
    & + 
    \sum_{t=1}^T w_t\tr P_t \Lr{\Ast + \Bst K_t} x_t \ind{\calE_t} 
    \nonumber \\
    & + 
    \sum_{t=1}^T \Lr{w_t\tr P_t w_t - \sigma^2 \trnorm{P_t}} \ind{\calE_t}
    \nonumber \\
    & + 
    \frac{4 \nu \mu}{\sigma^2} \sum_{t=1}^T \Lr{ z_t\tr V_t^{-1} z_t } \ind{\calE_t}
    ~. \label{eq:regrettildebound}
\end{align}
We now proceed to bounding each of the sums in the above.
The first sum above telescopes over consecutive rounds in which \cref{alg:main} uses the same policy and thus the matrix $P_t$ remains unchanged. 
Therefore, the number of remaining terms, each of which is bounded by a constant, is exactly the number of times that \cref{alg:main} computes a new policy.
We show that when the good events occur, the number of policy switches is at most $O(n\log T)$, which gives rise to the following.

\begin{lemma} \label{lem:regretterm1}
It holds that
\begin{align*}
    \sum_{t=1}^T \Lr{x_t\tr P_{t} x_t - x_{t+1}\tr P_{t} x_{t+1}} \ind{\calE_{t}}
    \leq
    \frac{4 \nu}{\sigma^2} \Lr{ 4\kappa^4 \norm{x_1}^2 + \beta } n \log T
    ~.
\end{align*}
\end{lemma}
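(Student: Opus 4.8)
The plan is to exploit the fact that $P_t$ is \emph{piecewise constant}: it is recomputed (in \cref{ln:sdp}) only at the start of an episode, and between episode starts the algorithm keeps $K_t,A_t,B_t$—and hence the dual matrix $P_t$—unchanged. Moreover the events $\calE_1 \supseteq \calE_2 \supseteq \cdots$ are nested, so on any sample path there is a (random) time $T_0$ with $\ind{\calE_t}=1$ for $t \le T_0$ and $\ind{\calE_t}=0$ afterwards. Consequently the quantity in question equals the finite sum $\sum_{t=1}^{T_0}\Lr{x_t\tr P_t x_t - x_{t+1}\tr P_t x_{t+1}}$, to which I would apply telescoping.

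First I would partition $\set{1,\ldots,T_0}$ into the episodes it meets, with start times $1=\tau_1<\tau_2<\cdots<\tau_m \le T_0$, so that $P_t \equiv P_{\tau_i}$ on the $i$-th episode. Because $P_t$ is constant on each episode, the inner sum telescopes: over the block $[\tau_i,\tau_{i+1})$ it collapses to $x_{\tau_i}\tr P_{\tau_i} x_{\tau_i} - x_{\tau_{i+1}}\tr P_{\tau_i} x_{\tau_{i+1}}$ (the final block ending at $T_0$). Since each $P_{\tau_i} \succeq 0$, I drop the nonpositive subtracted terms, leaving $\sum_{i=1}^m x_{\tau_i}\tr P_{\tau_i} x_{\tau_i}$. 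Each summand I then bound via $x_{\tau_i}\tr P_{\tau_i} x_{\tau_i} \le \trnorm{P_{\tau_i}}\norm{x_{\tau_i}}^2$, invoking $\trnorm{P_t} \le J^\st/\sigma^2 \le \nu/\sigma^2$ from \cref{lem:relaxed-sdp-main}(iii) together with $\norm{x_{\tau_i}}^2 \le \norm{z_{\tau_i}}^2 \le 4\kappa^4 \norm{x_1}^2 + \beta$, which is furnished by the defining condition of $\calE_{\tau_i}$ (valid since $\tau_i \le T_0$). This yields the intermediate bound $\tfrac{\nu}{\sigma^2}\Lr{4\kappa^4 \norm{x_1}^2 + \beta}\, m$.

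It remains to bound the number of episodes $m$, and this is the only genuinely quantitative step. Here I would use the standard determinant-doubling argument: a new episode begins precisely when $\det(V_t) > 2\det(V_\tau)$, so $\det(V_{\tau_{i+1}}) > 2\det(V_{\tau_i})$ and hence $\det(V_{\tau_m}) > 2^{m-1}\det(V_1) = 2^{m-1}\lambda^n$. On the good event the confidence matrix is controlled, $\spnorm{V_{T+1}} \le 4T$, whence $\det(V_{\tau_m}) \le \det(V_{T+1}) \le (4T)^n$; combined with $\lambda \ge 1$ (valid for $T$ large by the polynomial lower bound on $T$) this forces $2^{m-1} < (4T)^n$, i.e.\ $m \le 1 + n\log_2(4T) \le 4 n \log T$. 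Substituting this into the previous display produces exactly $\tfrac{4\nu}{\sigma^2}\Lr{4\kappa^4 \norm{x_1}^2 + \beta} n \log T$.

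The main obstacle is the interaction between the indicators and the telescoping: one must check that truncating the sum at the random time $T_0$ creates no uncontrolled boundary terms, which is exactly where the nestedness of the $\calE_t$ and the positive semidefiniteness of $P_t$ (allowing the subtracted terms to be dropped) are used. The remaining care is bookkeeping for the episode count—ensuring the state bound inside $\calE_t$ indeed yields $\spnorm{V_{T+1}} \le 4T$ and that the constants align to give the factor $4 n \log T$—which I would isolate as the determinant-doubling estimate.
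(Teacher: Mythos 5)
Your proposal follows essentially the same route as the paper's proof: truncate the sum at the last round $T_0$ at which the (nested) good events hold, telescope within episodes, drop the nonpositive boundary terms using $P_t \succeq 0$, bound each episode-start term by $\trnorm{P_{\tau_i}}\norm{x_{\tau_i}}^2 \le (\nu/\sigma^2)\lr{4\kappa^4\norm{x_1}^2+\beta}$, and multiply by an $O(n\log T)$ bound on the number of episodes obtained from the determinant-doubling rule.

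However, one step as written would fail and needs repair. You justify the episode count via $\det(V_{\tau_m}) \le \det(V_{T+1}) \le (4T)^n$, asserting $\spnorm{V_{T+1}}\le 4T$ ``on the good event.'' The lemma, though, is a pathwise (deterministic) statement: on sample paths where the good events fail at some $T_0 < T$, the states after round $T_0$ are uncontrolled, $V_{T+1}$ can be arbitrarily large, and the inequality $\det(V_{T+1}) \le (4T)^n$ is unjustified. The fix is local and uses exactly the structure you already set up: every counted episode start satisfies $\tau_i \le T_0$, and on $\calE_{T_0}$ all $z_s$ with $s \le T_0$ obey the stated norm bound, so by \cref{lem:sumofnormssq} (and $\lambda \le T$, valid for $T$ large) one gets $\spnorm{V_{T_0+1}} \le \lambda + \beta^{-1}\sum_{s=1}^{T_0}\norm{z_s}^2 \le \lambda + 2T \le 4T$, hence $\det(V_{\tau_m}) \le \det(V_{T_0+1}) \le (4T)^n$, which yields the same count. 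This is precisely how the paper avoids the issue: its determinant bound (\cref{lem:boundnumofswitches}) is applied at the last good round $N$, never at round $T$.
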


The next two terms in the bound above are sums of martingale difference sequences, as the noise terms $w_t$ are i.i.d., and each $w_t$ is independent of $P_t$, $K_t$ and $x_t$.
Using standard concentration arguments, we show that both are bounded by $\tO(\sqrt{T})$ with high probability.

\begin{lemma}\label{lem:regretterm2}
With probability at least $1-\delta/4$, it holds that
\begin{align*}
    \sum_{t=1}^T w_t\tr P_{t} \Lr{\Ast + \Bst K_t} x_t \ind{\calE_t}
    \leq
    \frac{\nu \vartheta}{\sigma} \sqrt{3 \beta T \log\frac{4}{\delta}}
    ~.
\end{align*}
\end{lemma}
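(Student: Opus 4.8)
The plan is to recognize the sum as a martingale whose increments are \emph{conditionally Gaussian}, and then apply an exponential-supermartingale tail bound; the role of the indicators $\ind{\calE_t}$ is to turn the (a priori random) predictable quadratic variation into a deterministic ceiling, which is what lets a plain Azuma-type argument go through.

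First I would fix the natural filtration $\calF_t = \sigma(x_1,w_1,\ldots,w_t)$, under which $x_t$, $u_t$, $K_t$, $P_t$ and the indicator $\ind{\calE_t}$ are all $\calF_{t-1}$-measurable: each is determined by observations available strictly before the noise $w_t$ is drawn, and in particular $\calE_t$ depends only on $z_1,\dots,z_t$ while $z_t=(x_t;K_tx_t)$ is frozen before $w_t$. Writing the summand as $Y_t = w_t\tr v_t$ with the $\calF_{t-1}$-measurable vector $v_t = P_t(\Ast+\Bst K_t)x_t\,\ind{\calE_t}$, the independence of $w_t\sim\calN(0,\sigma^2 I)$ from $\calF_{t-1}$ gives $\E[Y_t\mid\calF_{t-1}]=0$, so $\{Y_t\}$ is a martingale difference sequence; moreover, conditionally on $\calF_{t-1}$ the increment is exactly Gaussian, $Y_t\mid\calF_{t-1}\sim\calN(0,\tau_t^2)$ with $\tau_t^2=\sigma^2\norm{v_t}^2$.

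Next I would bound $\tau_t^2$ deterministically. Using the identity $(\Ast+\Bst K_t)x_t=(\Ast\,\Bst)z_t$ together with $\spnorm{(\Ast\,\Bst)}\le\vartheta$ and the trace-norm bound $\spnorm{P_t}\le\trnorm{P_t}\le J^\st/\sigma^2\le\nu/\sigma^2$ from \cref{lem:relaxed-sdp-main}, I obtain $\norm{v_t}\le(\nu\vartheta/\sigma^2)\norm{z_t}\ind{\calE_t}$. The point is that whenever $\ind{\calE_t}=1$ the definition of $\calE_t$ forces $\norm{z_t}^2\le 4\kappa^4 e^{-\gamma(t-1)}\norm{x_1}^2+\beta$ (with $\kappa,\gamma$ from \cref{lem:seqstrong-main}), and when $\ind{\calE_t}=0$ we simply have $v_t=0$; either way $\tau_t^2\le(\nu\vartheta/\sigma)^2\bigl(4\kappa^4 e^{-\gamma(t-1)}\norm{x_1}^2+\beta\bigr)$ holds surely. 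Summing the geometric series and absorbing the transient term — permissible since $T\ge\poly(\dots,\norm{x_1})$ makes $4\kappa^4\norm{x_1}^2/(1-e^{-\gamma})\le\tfrac12\beta T$ — yields the sure bound $\sum_{t=1}^T\tau_t^2\le V_{\max}:=\tfrac32(\nu\vartheta/\sigma)^2\beta T$.

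Finally I would close with the standard exponential supermartingale: since $Y_t$ is conditionally $\calN(0,\tau_t^2)$, the process $M_t=\exp\bigl(\eta\sum_{s\le t}Y_s-\tfrac{\eta^2}{2}\sum_{s\le t}\tau_s^2\bigr)$ is a nonnegative martingale with $\E M_T=1$, and because $\sum_s\tau_s^2\le V_{\max}$ surely, the event $\{\sum_t Y_t\ge u\}$ is contained in $\{M_T\ge\exp(\eta u-\tfrac{\eta^2}{2}V_{\max})\}$, so Markov gives $\Pr[\sum_t Y_t\ge u]\le\exp(-u^2/2V_{\max})$ after optimizing $\eta=u/V_{\max}$. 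Choosing $u=\sqrt{2V_{\max}\log(4/\delta)}=\tfrac{\nu\vartheta}{\sigma}\sqrt{3\beta T\log(4/\delta)}$ sets the right-hand side to $\delta/4$, which is exactly the claimed bound. The only genuinely delicate part is the adaptivity bookkeeping — verifying that $P_t$, $K_t$, $x_t$ and $\ind{\calE_t}$ are all measurable before $w_t$ is revealed, so that $Y_t$ is a clean mean-zero Gaussian given the past — together with the indicator trick that converts the random predictable variation into the deterministic ceiling $V_{\max}$; the remaining manipulations are routine.
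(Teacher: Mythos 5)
Your proof is correct, and it uses the same decomposition as the paper: the identical choice $v_t = P_t\,(\Ast \; \Bst)\, z_t \ind{\calE_t}$, the same measurability bookkeeping (everything in $v_t$, including $\ind{\calE_t}$, is determined before $w_t$ is drawn), and the same indicator trick that turns the predictable quadratic variation into a deterministic ceiling of order $(\nu\vartheta/\sigma)^2 \beta T$. Where you genuinely diverge is the concentration step. The paper routes the argument through an auxiliary lemma (\cref{lem:secondhpbound}), which writes $Y_t = \eta_t m_t$ and invokes the self-normalized martingale inequality of \citet{abbasi2011improved} (\cref{thm:regressionconcentration}) with scalar regressors $m_t$ and regularizer $\lambda = \sigma^2 D^2$; you instead run a plain Cram\'er--Chernoff/exponential-supermartingale bound with a single optimized $\eta$. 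Your shortcut is legitimate precisely because the variance ceiling is deterministic --- self-normalization (effectively a mixture over $\eta$) only earns its keep when the quadratic variation is random --- so the elementary bound suffices and makes the proof more self-contained. A small bonus: your constants land exactly on the statement, since $u = \sqrt{2 V_{\max}\log(4/\delta)}$ with $V_{\max} = \tfrac{3}{2}(\nu\vartheta/\sigma)^2\beta T$ gives precisely $\tfrac{\nu\vartheta}{\sigma}\sqrt{3\beta T \log(4/\delta)}$ at confidence $1-\delta/4$, whereas the paper's own application of \cref{lem:secondhpbound} with failure probability $\delta/2$ and $D^2 = 2\vartheta^2(\nu/\sigma^2)^2\beta T$ produces the constant $2\sqrt{2}$ in place of $\sqrt{3}$ (a harmless slack, but your accounting is tighter). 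The one step you should keep explicit --- and you do flag it --- is absorbing the transient term $4\kappa^4 \norm{x_1}^2 \sum_{t} e^{-\gamma(t-1)}$ into $\tfrac{1}{2}\beta T$, which requires $T \geq \poly(\ldots,\norm{x_1})$ and mirrors the paper's \cref{lem:sumofnormssq}.
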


\begin{lemma} \label{lem:regretterm3}
With probability at least $1-\delta/4$, it holds that
\begin{align*}
    \sum_{t=1}^T \Lr{ w_t\tr P_t w_t - \sigma^2 \trnorm{P_t} } \ind{\calE_t}
    \leq
    8 \nu \sqrt{T \log^3\frac{4T}{\delta}}
    ~.
\end{align*}
\end{lemma}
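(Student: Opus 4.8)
The plan is to recognize the sum as a martingale difference sequence and apply a Bernstein/Freedman-type concentration inequality, the crucial point being that the conditional variance of each increment is dimension-free.

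First I would fix the filtration $\calF_{t-1} = \sigma(x_1, w_1, \ldots, w_{t-1})$, with respect to which everything computed before the noise $w_t$ is revealed is measurable: the state $x_t$, the estimates $(A_t \, B_t)$, the confidence matrix $V_t$, the SDP solution $\Sigma_t$ and its dual $P_t$, the policy $K_t$, and crucially the indicator $\ind{\calE_t}$ (the latter depends only on $z_1,\ldots,z_t$ and on $\trace(\Delta_s V_s \Delta_s\tr)$ for $s \le t$, all of which are determined before $w_t$ is drawn). Writing $Y_t := (w_t\tr P_t w_t - \sigma^2 \trnorm{P_t})\ind{\calE_t}$, I would use that $w_t \sim \calN(0,\sigma^2 I)$ is independent of $\calF_{t-1}$, so $\EE{w_t\tr P_t w_t \mid \calF_{t-1}} = \trace(P_t\,\EE{w_t w_t\tr}) = \sigma^2 \trace(P_t) = \sigma^2 \trnorm{P_t}$, where the last equality uses $P_t \succeq 0$. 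Hence $\EE{Y_t \mid \calF_{t-1}} = 0$ and $\{Y_t\}$ is a martingale difference sequence adapted to $\{\calF_t\}$.

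Next I would control the two quantities that drive a Bernstein-type bound. For the conditional variance, diagonalizing $P_t$ gives $\var(w_t\tr P_t w_t \mid \calF_{t-1}) = 2\sigma^4 \norm{P_t}_\frob^2$; since $\norm{P_t}_\frob \le \trnorm{P_t}$ for a PSD matrix and $\trnorm{P_t} \le J^\st/\sigma^2 \le \nu/\sigma^2$ on $\calE_t$ by \cref{lem:relaxed-sdp-main}, the conditional variance is at most $2\nu^2$, independently of the dimension, which is exactly what is needed to avoid a spurious $d$ factor in the final bound. For the tail scale, conditionally on $\calF_{t-1}$ the term $w_t\tr P_t w_t$ is a weighted sum of chi-squares, so by Hanson--Wright it is subexponential with scale $\sigma^2 \spnorm{P_t} \le \sigma^2 \trnorm{P_t} \le \nu$. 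With the martingale difference property, the variance proxy $\sum_t \var(Y_t \mid \calF_{t-1}) \le 2\nu^2 T$, and the per-step subexponential scale $O(\nu)$ in hand, I would invoke a Freedman-type concentration inequality for martingales with subexponential increments to conclude that, with probability at least $1-\delta/4$, $\sum_{t=1}^T Y_t \le O(\nu\sqrt{T\log(1/\delta)} + \nu\log(1/\delta))$. Because the quadratic forms are only subexponential (neither subgaussian nor bounded), making this rigorous requires truncating the noise at radius $\sigma\sqrt{O(\log(T/\delta))}$ and taking a union bound over the $T$ rounds before applying the martingale inequality to the truncated increments; this bookkeeping is the source of the extra logarithmic factors that upgrade the estimate to the stated $8\nu\sqrt{T\log^3(4T/\delta)}$.

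The main obstacle is precisely the heavy (subexponential) tail of the quadratic-in-Gaussian increments: a naive Azuma--Hoeffding bound would require a uniform bound on $\abs{Y_t}$, which scales with the dimension $d$ and would pollute the final estimate with an unwanted $d$ factor. The resolution is to lean on the dimension-free conditional-variance bound $2\nu^2$ and use a variance-adaptive (Bernstein/Freedman) inequality, paying only logarithmic factors for the truncation rather than a polynomial dependence on $d$.
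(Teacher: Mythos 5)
Your proposal is correct in substance and follows the same skeleton as the paper's proof: center the quadratic forms into a martingale difference sequence, truncate the subexponential increments, and apply a martingale concentration inequality (the paper proves \cref{lem:firsthpbound} and instantiates it with $M_t = P_t \ind{\calE_t}$ and $D = \nu/\sigma^2$). Where you genuinely differ is the concentration tool: the paper applies plain Azuma to the increments truncated at $\Gamma = 5D\sigma^2\log(4T/\delta)$, using $\Gamma$ itself as the bounded-difference constant, which immediately yields $\Gamma\sqrt{2T\log(1/\delta)} \le 8D\sigma^2\sqrt{T\log^3(4T/\delta)}$; you instead invoke a variance-adaptive Freedman/Bernstein bound built on the conditional variance $2\sigma^4\norm{P_t}_\frob^2 \le 2\nu^2$. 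Your route is viable and would in fact sharpen the leading term to $O(\nu\sqrt{T\log(1/\delta)})$, but two of your supporting claims need correction. First, your motivation for preferring Freedman --- that Azuma would drag in a factor of $d$ --- is mistaken: the right object to truncate is the quadratic form $w_t\tr P_t w_t$ itself, not the noise vector, and Hanson--Wright bounds it by $5\sigma^2\trnorm{P_t}\log(1/\delta')$, i.e., entirely through the trace norm $\trnorm{P_t} \le \nu/\sigma^2$ with no dimension dependence; this dimension-free truncation level is exactly why plain Azuma suffices in the paper. Second, and relatedly, your stated truncation radius $\sigma\sqrt{O(\log(T/\delta))}$ for the noise is unattainable, since $\norm{w_t}$ concentrates around $\sigma\sqrt{d}$; truncating the noise vector requires radius $\sigma\sqrt{O(d\log(T/\delta))}$, which does put a $d$ into the per-step increment bound and is tolerable in your scheme only because Freedman relegates it to the lower-order additive term (using $T \ge \poly(n,\dots)$ from \cref{thm:main}). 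Finally, whichever inequality you use, truncation from above de-centers the increments; you need the one-sided observation that $\wt X_t \le X_t$ implies $\E[\wt X_t \mid \calF_{t-1}] \le \E[X_t \mid \calF_{t-1}] = 0$, so the truncation bias can only help the upper-tail bound --- the paper makes this step explicit, and your writeup should as well.
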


Finally, using the elementary identity
$
    z\tr V^{-1} z
    \leq
    2 \log(\det(V + z z\tr)/\det(V))
$
for $V\succ 0$ and any vector $z$ such that $z\tr V^{-1} z \le 1$, we show that the final sum in the bound telescopes and can be bounded in terms of $\log(\det(V_{T+1})/\det(V_1))$;
in turn, the latter quantity can be bounded by $\O(n \log{T})$ using the fact that the $z_t$ are uniformly bounded on the event $\calE_T$.
This argument results with:

\begin{lemma} \label{lem:regretterm4}
We have
$
    \sum_{t=1}^T (z_t\tr V_t^{-1} z_t) \ind{\calE_t}
    \leq
    4 \beta n \log T
$.
\end{lemma}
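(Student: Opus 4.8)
The plan is to combine the elementary log-determinant identity quoted just before the statement with a telescoping argument, and then to control the resulting log-determinant ratio via a trace bound that holds on the good events. First I would record the per-step inequality. For $V\succ 0$ and a vector $z$ with $z\tr V^{-1}z\le 1$, the matrix-determinant lemma gives $\det(V+zz\tr)=\det(V)(1+z\tr V^{-1}z)$, and combining this with the scalar inequality $x\le 2\log(1+x)$ on $[0,1]$ yields $z\tr V^{-1}z\le 2\log(\det(V+zz\tr)/\det(V))$. I would apply this with $V=V_t$ and $z=\beta^{-1/2}z_t$, using $V_{t+1}=V_t+\beta^{-1}z_t z_t\tr$ from the algorithm's update, to obtain $z_t\tr V_t^{-1}z_t\le 2\beta\log(\det(V_{t+1})/\det(V_t))$ whenever the precondition $\beta^{-1}z_t\tr V_t^{-1}z_t\le 1$ holds. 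To check the precondition on the event $\calE_t$, I would use $V_t\succeq\lambda I$ (hence $V_t^{-1}\preceq\lambda^{-1}I$) together with the bound $\norm{z_t}^2\le 4\kappa^4 e^{-\gamma(t-1)}\norm{x_1}^2+\beta$ built into $\calE_t$, so that $z_t\tr V_t^{-1}z_t\le\lambda^{-1}(4\kappa^4\norm{x_1}^2+\beta)$; substituting the values of $\lambda,\beta,\kappa$ and invoking $T\ge\poly(\cdots)$ makes this at most $\beta$, as needed.

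Next I would exploit the nesting $\calE_1\supseteq\calE_2\supseteq\cdots$, which makes the indicators $\ind{\calE_t}$ non-increasing in $t$; on any sample path the left-hand sum therefore equals $\sum_{t\le T^\ast}z_t\tr V_t^{-1}z_t$ for the pathwise threshold $T^\ast$ below which every $\calE_t$ holds. Applying the per-step inequality for each $t\le T^\ast$ and telescoping gives $\sum_{t\le T^\ast}z_t\tr V_t^{-1}z_t\le 2\beta\log(\det(V_{T^\ast+1})/\det(V_1))$; reinstating (and dropping) the remaining indicators only increases the right-hand side, since each log-ratio is nonnegative because $V_{t+1}\succeq V_t$. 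To bound the log-determinant ratio I would use $\det(M)\le(\trace(M)/n)^n$ for PSD $M$ and $\det(V_1)=\lambda^n$, reducing matters to controlling $\trace(V_{T^\ast+1})=n\lambda+\beta^{-1}\sum_{t\le T^\ast}\norm{z_t}^2$. On $\calE_{T^\ast}$ the geometric-plus-constant bound on $\norm{z_t}^2$ yields $\trace(V_{T^\ast+1})\le n\lambda+4\kappa^4\norm{x_1}^2/(\beta(1-e^{-\gamma}))+T$, so $\log(\det(V_{T^\ast+1})/\det(V_1))\le n\log(\trace(V_{T^\ast+1})/(n\lambda))$.

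Plugging in the parameter values—in particular $\lambda=\Theta(\sqrt{T})$ and $\kappa=\sqrt{2\nu/\alpha_0\sigma^2}$—and using the large-$T$ assumption, the argument of this logarithm is at most $T^2$, so the expression is at most $2n\log T$; multiplying by $2\beta$ gives the claimed bound $4\beta n\log T$. The main obstacle I anticipate is precisely the two numerical verifications—that $\beta^{-1}z_t\tr V_t^{-1}z_t\le 1$ and that $\trace(V_{T^\ast+1})/(n\lambda)\le T^2$—since each requires carefully substituting the concrete definitions of $\lambda,\beta,\kappa$ and appealing to the polynomial lower bound on $T$ to absorb the $\norm{x_1}^2$ and $T$ contributions; by contrast the conceptual ingredients (the determinant identity, the telescoping, and the AM-GM determinant bound) are entirely routine.
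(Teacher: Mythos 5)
Your proof is correct and follows essentially the same route as the paper's: the same determinant lemma applied with $z=\beta^{-1/2}z_t$ so that $V_t+zz\tr = V_{t+1}$, the same precondition check via $V_t\succeq\lambda I$ together with the norm bound built into $\calE_t$, the same use of the nesting of the events to reduce the indicator sum to a prefix sum up to the last good round, and the same telescoping to $2\beta\log\bigl(\det(V_{N+1})/\det(V_1)\bigr)$. The only immaterial difference is the last step: you bound the log-determinant ratio via $\det(M)\le(\trace(M)/n)^n$, whereas the paper's \cref{lem:boundnumofswitches} uses $\det(M)\le\spnorm{M}^n$ applied to the whitened matrix $V_1^{-1/2}V_{N+1}V_1^{-1/2}$; both yield the same $2n\log T$ bound.
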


Our main theorem now follows by plugging-in the bounds into \cref{eq:regrettildebound}, using a union bound to bound the failure probability, and applying some algebraic simplification.

\section{The relaxed SDP program}
\label{sec:sdp-analysis}

In this section we present useful properties of the relaxed SDP program repeatedly solved by \cref{alg:main}, which are used to prove \cref{lem:relaxed-sdp-main,lem:seqstrong-main} discussed above and are central to our development.

The relaxed SDP program takes the following form.
Let $\mu > 0$ be a fixed parameter, and assume $A$, $B$ and $V$ are matrices such that the error matrix $\Delta = (A \, B) - (\Ast \, \Bst)$ satisfies $\trace(\Delta V \Delta\tr) \leq 1$.
\begin{alignat}{2}
    & \text{minimize} \quad && 
    \Sigma \bullet \qrmatrix
    \nonumber \\
    & \text{subject to} && 
    \Sigma_{xx} 
    \succeq 
    (A \; B) \Sigma (A \; B)\tr + W - \mu (\Sigma \bullet V^{-1}) I
    ,
    \label{eq:relaxed-sdp}
    \\
    & && \Sigma \succeq 0, \, \Sigma \in \reals^{n \times n}
    \nonumber
    ~.
\end{alignat}
%
%
For this section, the dual program to \eqref{eq:relaxed-sdp} will be useful:
\begin{alignat}{2}
    & \text{maximize} \quad && 
    P \bullet W
    \nonumber \\
    & \text{subject to} \quad && 
    \begin{psmallmatrix} Q - P & 0 \\ 0 & R \end{psmallmatrix} + (A \, B)\tr P (A \, B)
    \succeq
    \mu \trnorm{P} V^{-1}
    ,
    \label{eq:relaxed-dual-sdp}
    \\
    & && P \succeq 0, \, P \in \reals^{d \times d}
    \nonumber
    ~.
\end{alignat}

We now aim at proving \cref{lem:relaxed-sdp-main} which states that SDP~\eqref{eq:relaxed-sdp} is a relaxation of the original exact SDP~\eqref{eq:lqr-sdp}.
It follows directly from \cref{lem:sdpvaluelb,lem:optimsticsdppolicy} given below; see \cref{sec:relaxed-sdp-main-proof}.
First, we present a matrix-perturbation lemma also proven in \cref{sec:sigmaboundproof}.

\begin{lemma} \label{lem:sigmabound}
Let $X$ and $\Delta$ be matrices of matching sizes and assume $\Delta\tr \Delta \preceq V^{-1}$ for some matrix $V \succ 0$.
Then for any $\Sigma \succeq 0$ and $\mu \ge 1 + 2 \spnorm{X} \spnorm{V}^{1/2}$,
\[
    \spnorm{(X+\Delta) \Sigma (X+\Delta)\tr - X \Sigma X\tr}
    \leq
    \mu \Sigma \bullet V^{-1}
    ~.
\]
\end{lemma}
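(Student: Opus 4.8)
Looking at Lemma \ref{lem:sigmabound}, I need to bound the operator norm of the difference $(X+\Delta)\Sigma(X+\Delta)\tr - X\Sigma X\tr$ in terms of $\mu\, \Sigma\bullet V^{-1}$, given that $\Delta\tr\Delta \preceq V^{-1}$.

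\textbf{My plan.} The natural first move is to expand the difference algebraically:
\[
    (X+\Delta)\Sigma(X+\Delta)\tr - X\Sigma X\tr
    =
    X\Sigma\Delta\tr + \Delta\Sigma X\tr + \Delta\Sigma\Delta\tr.
\]
By the triangle inequality for the operator norm, I would bound each of the three terms separately. The terms $X\Sigma\Delta\tr$ and $\Delta\Sigma X\tr$ are adjoints of each other, so they share the same operator norm, and I expect the cross terms to contribute the factor $2\spnorm{X}\spnorm{V}^{1/2}$ while the quadratic term $\Delta\Sigma\Delta\tr$ contributes the leading $1$.

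\textbf{Key steps.} The crucial quantity to control is $\Delta\Sigma\Delta\tr$. Writing $\Sigma \succeq 0$ as $\Sigma = \Sigma^{1/2}\Sigma^{1/2}$, I would like to bound $\spnorm{\Delta\Sigma\Delta\tr}$ by something involving $\Sigma\bullet V^{-1} = \trace(\Sigma V^{-1})$. The connection comes from the hypothesis $\Delta\tr\Delta \preceq V^{-1}$: for any PSD $\Sigma$, we have $\trace(\Delta\Sigma\Delta\tr) = \trace(\Sigma\Delta\tr\Delta) \le \trace(\Sigma V^{-1}) = \Sigma\bullet V^{-1}$, and since $\Delta\Sigma\Delta\tr \succeq 0$, its operator norm is at most its trace. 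This gives $\spnorm{\Delta\Sigma\Delta\tr} \le \Sigma\bullet V^{-1}$, accounting for the leading term. For the cross term, I would use $\spnorm{X\Sigma\Delta\tr} \le \spnorm{X}\,\spnorm{\Sigma^{1/2}}\,\spnorm{\Sigma^{1/2}\Delta\tr}$ after inserting $\Sigma = \Sigma^{1/2}\Sigma^{1/2}$; here $\spnorm{\Sigma^{1/2}\Delta\tr}^2 = \spnorm{\Delta\Sigma\Delta\tr} \le \Sigma\bullet V^{-1}$ by the same trace bound, and I must relate $\spnorm{\Sigma^{1/2}}$ to $\Sigma\bullet V^{-1}$ as well. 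Since $\spnorm{\Sigma} = \spnorm{\Sigma^{1/2}}^2$ and $\Sigma\bullet V^{-1} \ge \spnorm{\Sigma}/\spnorm{V}$ (because $V^{-1} \succeq \spnorm{V}^{-1}I$ gives $\trace(\Sigma V^{-1}) \ge \spnorm{V}^{-1}\trace(\Sigma) \ge \spnorm{V}^{-1}\spnorm{\Sigma}$), I obtain $\spnorm{\Sigma^{1/2}} \le \spnorm{V}^{1/2}(\Sigma\bullet V^{-1})^{1/2}$.

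\textbf{The main obstacle.} The delicate point is assembling the cross-term bound so the factor $\spnorm{V}^{1/2}$ pairs correctly with $\Sigma\bullet V^{-1}$ rather than producing a stray square root. Combining the pieces, each cross term is bounded by $\spnorm{X}\cdot\spnorm{V}^{1/2}(\Sigma\bullet V^{-1})^{1/2}\cdot(\Sigma\bullet V^{-1})^{1/2} = \spnorm{X}\spnorm{V}^{1/2}(\Sigma\bullet V^{-1})$, and the two cross terms together give $2\spnorm{X}\spnorm{V}^{1/2}(\Sigma\bullet V^{-1})$. Adding the quadratic term's contribution of $\Sigma\bullet V^{-1}$ yields a total of $(1 + 2\spnorm{X}\spnorm{V}^{1/2})(\Sigma\bullet V^{-1}) \le \mu(\Sigma\bullet V^{-1})$ exactly when $\mu \ge 1 + 2\spnorm{X}\spnorm{V}^{1/2}$, matching the hypothesis. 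The one step needing care is justifying $\spnorm{\Sigma^{1/2}\Delta\tr}^2 = \spnorm{\Delta\Sigma^{1/2}\Sigma^{1/2}\Delta\tr} = \spnorm{\Delta\Sigma\Delta\tr}$, which follows since $\spnorm{M\tr M} = \spnorm{M}^2$ for any matrix $M$ with $M = \Sigma^{1/2}\Delta\tr$.
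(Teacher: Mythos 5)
Your proof is correct, and it takes a genuinely different route from the paper's. The paper proves this lemma in two lines as a corollary of \cref{lem:semidefinitebound}: it passes from the operator norm of $(X+\Delta)\Sigma(X+\Delta)\tr - X\Sigma X\tr$ to the trace pairing $\Sigma \bullet \lr{(X+\Delta)\tr(X+\Delta) - X\tr X}$, and then invokes the Loewner-order bound $(X+\Delta)\tr(X+\Delta) - X\tr X \preceq \mu V^{-1}$, whose proof hides the matrix Young inequality $X\tr P \Delta + \Delta\tr P X \preceq \epsilon^{-1} X\tr P X + \epsilon\, \Delta\tr P \Delta$ with $\epsilon = \spnorm{X}\spnorm{V}^{1/2}$. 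You instead expand the difference directly, apply the triangle inequality, bound the quadratic term via $\spnorm{\Delta\Sigma\Delta\tr} \le \trace(\Delta\Sigma\Delta\tr) = \trace(\Sigma\Delta\tr\Delta) \le \Sigma\bullet V^{-1}$, and handle each cross term by submultiplicativity combined with the observation $\spnorm{\Sigma} \le \spnorm{V}\,(\Sigma\bullet V^{-1})$; the pieces assemble to exactly $(1+2\spnorm{X}\spnorm{V}^{1/2})\,\Sigma\bullet V^{-1} \le \mu\,\Sigma\bullet V^{-1}$, and every step checks out. Two remarks on what each approach buys. Your argument is self-contained and purely at the level of operator norms; notably, it also sidesteps a rough edge in the paper's write-up: the paper's displayed inequality $\spnorm{(X+\Delta)\Sigma(X+\Delta)\tr - X\Sigma X\tr} \le \Sigma\bullet\lr{(X+\Delta)\tr(X+\Delta) - X\tr X}$ is not valid verbatim, since its right-hand side can be negative (take $\Delta = -X \neq 0$, so the left side is $\spnorm{X\Sigma X\tr} > 0$ while the right side is $-\Sigma\bullet X\tr X < 0$); it must be read through quadratic forms, i.e., $\lvert v\tr D v\rvert = \lvert \Sigma \bullet ( (X+\Delta)\tr v v\tr (X+\Delta) - X\tr v v\tr X ) \rvert$ for unit vectors $v$, together with the \emph{two-sided} bound of \cref{lem:semidefinitebound} applied with $P = vv\tr$. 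The paper's route, once repaired this way, buys brevity and reuse---\cref{lem:semidefinitebound} is needed elsewhere in the analysis anyway (e.g., in \cref{lem:relaxed-sdp-main,lem:strongstability})---whereas your route costs a few extra lines but is more elementary and avoids the sign subtlety altogether.
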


\begin{lemma} \label{lem:sdpvaluelb}
Assume $\mu \ge 1 + 2 \vartheta \spnorm{V}^{1/2}$.
Then the optimal value of SDP~\eqref{eq:relaxed-sdp} is at most $J^\st$.
Consequently, for a primal-dual optimal solution $\Sigma$, $P$ we have $\trnorm{\Sigma} \le J^\st/\alpha_0$ and $\trnorm{P} \le J^\st/\sigma^2$.
\end{lemma}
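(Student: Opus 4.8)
The plan is to show that the relaxed SDP~\eqref{eq:relaxed-sdp} is a \emph{relaxation} of the exact SDP~\eqref{eq:lqr-sdp}, so that its optimal value can only be smaller; since the exact SDP has optimal value $J^\st$, this immediately yields the bound on the optimal value. Concretely, I would take an optimal solution $\Sigma^\st$ of the exact SDP~\eqref{eq:lqr-sdp}---which satisfies the equality constraint $\Sigma^\st_{xx} = (\Ast\,\Bst)\Sigma^\st(\Ast\,\Bst)\tr + W$ and achieves objective value $J^\st$---and verify that this same $\Sigma^\st$ is feasible for the relaxed program~\eqref{eq:relaxed-sdp}, whence the relaxed optimum is at most $J^\st$.

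To check feasibility, substitute the exact equality into the relaxed inequality constraint: it suffices to show
\[
    (A\,B)\Sigma^\st(A\,B)\tr - (\Ast\,\Bst)\Sigma^\st(\Ast\,\Bst)\tr
    \preceq
    \mu\,(\Sigma^\st \bullet V^{-1})\,I
    .
\]
Writing $X = (\Ast\,\Bst)$ and $\Delta = (A\,B) - (\Ast\,\Bst)$, this is exactly a perturbation estimate of the form handled by \cref{lem:sigmabound}. The hypothesis $\Delta\tr\Delta \preceq V^{-1}$ of that lemma I would extract from the standing assumption $\trace(\Delta V \Delta\tr) \le 1$: since $\trace(\Delta V \Delta\tr) = \norm{\Delta V^{1/2}}_\frob^2$, this Frobenius bound dominates the operator norm, giving $\spnorm{V^{1/2}\Delta\tr} \le 1$ and hence $V^{1/2}\Delta\tr\Delta V^{1/2} \preceq I$, i.e. $\Delta\tr\Delta \preceq V^{-1}$. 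The scalar condition $\mu \ge 1 + 2\spnorm{X}\spnorm{V}^{1/2}$ of the lemma follows from the hypothesis $\mu \ge 1 + 2\vartheta\spnorm{V}^{1/2}$ together with $\spnorm{X} = \spnorm{(\Ast\,\Bst)} \le \vartheta$ from Assumption~\ref{ass:1}. Applying \cref{lem:sigmabound} bounds the operator norm of the left-hand difference by $\mu(\Sigma^\st\bullet V^{-1})$; converting this operator-norm bound into the ordering $M \preceq \spnorm{M}\,I$ for the symmetric matrix $M$ gives exactly the desired inequality.

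For the two consequences I would argue directly from the value bound. For the primal bound, note $\qrmatrix \succeq \alpha_0 I$ by Assumption~\ref{ass:1}, so for the PSD optimal $\Sigma$,
\[
    \alpha_0 \trnorm{\Sigma}
    =
    \alpha_0\,\trace(\Sigma)
    \le
    \Sigma \bullet \qrmatrix
    \le
    J^\st
    ,
\]
using $\trnorm{\Sigma} = \trace(\Sigma)$ for $\Sigma \succeq 0$, which gives $\trnorm{\Sigma} \le J^\st/\alpha_0$. For the dual bound, I would invoke weak duality between~\eqref{eq:relaxed-sdp} and~\eqref{eq:relaxed-dual-sdp}: the objective $P \bullet W$ of any dual-feasible $P$ is at most the primal optimal value, which we have just bounded by $J^\st$. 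Since $W = \sigma^2 I$, this objective equals $\sigma^2 \trace(P) = \sigma^2 \trnorm{P}$ for $P \succeq 0$, so $\sigma^2 \trnorm{P} \le J^\st$ and $\trnorm{P} \le J^\st/\sigma^2$.

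The main obstacle is the feasibility verification---translating the Frobenius/spectral control on the estimation error $\Delta$ into the matrix inequality that absorbs the perturbation of the quadratic form $\Sigma \mapsto (A\,B)\Sigma(A\,B)\tr$ into the slack term $\mu(\Sigma\bullet V^{-1})I$. Once \cref{lem:sigmabound} is available this reduces to bookkeeping, but the correct dependence of $\mu$ on $\vartheta$ and $\spnorm{V}$ is precisely what makes the relaxation valid and is the delicate point; everything downstream is elementary.
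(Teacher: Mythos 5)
Your proposal is correct and matches the paper's own argument: the paper likewise proves the value bound by showing $\Sigma^\st$ is feasible for the relaxed SDP via \cref{lem:sigmabound}, using exactly the observation that $\trace(\Delta V \Delta\tr)\le 1$ implies $\Delta\tr\Delta \preceq V^{-1}$. The two trace-norm consequences, which the paper leaves implicit, are derived by you in the standard way (via $\qrmatrix \succeq \alpha_0 I$ and weak duality with $W=\sigma^2 I$), so there is nothing to add.
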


\begin{proof}
It suffices to show that $\Sigma^\st$, the solution to the original SDP~\eqref{eq:lqr-sdp}, is feasible for the relaxed SDP.
Indeed, $\Sigma^\st \succeq 0$, and combining \cref{eq:lqr-sdp,lem:sigmabound} (note that $\trace(\Delta V \Delta\tr) \le 1$ implies that $\Delta\tr \Delta \preceq V^{-1}$) yields \cref{eq:relaxed-sdp} due to 
\begin{align*}
    \Sigma_{xx}^\st 
    = 
    \Lr{\Ast \; \Bst} \Sigma^\st \Lr{\Ast \; \Bst}\tr + W 
    \succeq 
    \Lr{A \; B} \Sigma^\st \Lr{A \; B}\tr 
    +
    W 
    - 
    \mu \Lr{\Sigma^\st \bullet V^{-1}} I~.
\end{align*}
Therefore, it is feasible for SDP~\eqref{eq:relaxed-sdp}.
\end{proof}
The next lemma shows how to extract a policy from the relaxed SDP. 
Somewhat surprisingly, this policy is deterministic and has the linear form $x \mapsto K x$, as is the case in the original SDP.

\begin{lemma} \label{lem:optimsticsdppolicy}
Assume that 
$
    V \succeq (\nu \mu / \alpha_0 \sigma^2) I,
$
and
$
    \mu \ge 1 + 2 \vartheta \spnorm{V}^{1/2}.
$
Let $\Sigma$ and $P$ be primal and dual optimal solutions to the relaxed SDP. 
Then $\Sigma_{xx}$ is invertible, and for $K = \Sigma_{ux} \Sigma_{xx}^{-1}$ we have
\begin{align*}
    P
    =
    Q + K\tr P K + (A+BK)\tr P (A+BK)
    - \mu \trnorm{P} \IhatK V^{-1} \IhatK\tr
    .
\end{align*}
\end{lemma}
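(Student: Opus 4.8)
The plan is to read the identity directly off the KKT conditions of the primal--dual SDP pair \eqref{eq:relaxed-sdp}--\eqref{eq:relaxed-dual-sdp}, with complementary slackness doing the real work. First I would check strong duality: the dual \eqref{eq:relaxed-dual-sdp} is strictly feasible (take $P=0$, for which the constraint reads $\qrmatrix\succ0$ since $Q,R\succ0$), so Slater's condition holds and there is no duality gap. Forming the Lagrangian of \eqref{eq:relaxed-sdp} with multiplier $P\succeq0$ for the relaxed constraint $\Sigma_{xx}\succeq(A\,B)\Sigma(A\,B)\tr+W-\mu(\Sigma\bullet V^{-1})I$ and $Z\succeq0$ for $\Sigma\succeq0$, stationarity in $\Sigma$ forces
\[
  Z
  =
  \begin{psmallmatrix} Q - P & 0 \\ 0 & R \end{psmallmatrix}
  + (A\,B)\tr P (A\,B)
  - \mu\trnorm{P}\, V^{-1},
\]
which is exactly the slack of the dual constraint and is $\succeq0$ by dual feasibility. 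Complementary slackness for the optimal pair then gives $Z\bullet\Sigma=0$, and since $Z,\Sigma\succeq0$ this upgrades to $Z\Sigma=0$.

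Next I would rewrite the asserted equation so that it is visibly equivalent to $\IhatK\tr Z\IhatK=0$. Using $(A+BK)=(A\,B)\IhatK$ together with the block identities $Q+K\tr R K=\IhatK\tr\qrmatrix\IhatK$ and $\IhatK\tr\begin{psmallmatrix}P&0\\0&0\end{psmallmatrix}\IhatK=P$, the right-hand side of the claimed equation (reading its last term as $\mu\trnorm{P}\,\IhatK\tr V^{-1}\IhatK$, the only dimensionally consistent form) equals
\[
  \IhatK\tr\Lr{\qrmatrix + (A\,B)\tr P (A\,B) - \mu\trnorm{P}V^{-1}}\IhatK
  =
  \IhatK\tr Z\IhatK + P .
\]
Hence the claim $P=\cdots$ is equivalent to the single statement $\IhatK\tr Z\IhatK=0$, i.e.\ to $Z\IhatK=0$ (as $Z\succeq0$).

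To close the argument I would establish two structural facts. For invertibility of $\Sigma_{xx}$, primal feasibility gives $\Sigma_{xx}\succeq W-\mu(\Sigma\bullet V^{-1})I$, and the scalar obeys $\mu(\Sigma\bullet V^{-1})\le\mu\spnorm{V^{-1}}\trnorm{\Sigma}\le\sigma^2$ by combining $\trnorm{\Sigma}\le J^\st/\alpha_0$ (\cref{lem:sdpvaluelb}), $J^\st\le\nu$, and the hypothesis $V\succeq(\nu\mu/\alpha_0\sigma^2)I$; since $W=\sigma^2 I$ this yields $\Sigma_{xx}\succ0$. Consequently $\IhatK v=\Sigma\begin{psmallmatrix}\Sigma_{xx}^{-1}v\\0\end{psmallmatrix}$ for every $v$, so $\mathrm{range}(\IhatK)\subseteq\mathrm{range}(\Sigma)$; and $Z\Sigma=0$ gives $\mathrm{range}(\Sigma)\subseteq\ker Z$, whence $Z\IhatK=0$ and in particular $\IhatK\tr Z\IhatK=0$, which is the desired equality.

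The step I expect to be the main obstacle is the strict positive-definiteness of $\Sigma_{xx}$: the estimate above only delivers $\mu(\Sigma\bullet V^{-1})\le\sigma^2$, with equality possible in the worst case $J^\st=\nu$, whereas the range-inclusion argument needs a genuine margin $\mu(\Sigma\bullet V^{-1})<\sigma^2$. I would secure this by extracting slack from the trace-norm bound (so that $\Sigma_{xx}\succeq(\sigma^2-\mu(\Sigma\bullet V^{-1}))I\succ0$ holds strictly), and I would take care to justify the conic form of complementary slackness $Z\Sigma=0$ (not merely $Z\bullet\Sigma=0$) from the PSD-ness of both matrices; the remaining block-matrix manipulations are routine.
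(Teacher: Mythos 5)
Your proposal is correct and shares its core with the paper's proof: both pass to the dual slack matrix $Z=\begin{psmallmatrix} Q-P & 0\\ 0 & R\end{psmallmatrix}+(A\,B)\tr P(A\,B)-\mu\trnorm{P}V^{-1}$, both invoke complementary slackness $Z\Sigma=0$, both reduce the claimed identity to $\IhatK\tr Z\IhatK=0$ via the same block computation, and both derive $\Sigma_{xx}\succ0$ from the estimate $\mu(\Sigma\bullet V^{-1})I\preceq\mu\trnorm{\Sigma}\spnorm{V^{-1}}I\preceq W$ (the paper's \cref{eq:primalbound}). The genuine difference is how you pass from $Z\Sigma=0$ to $\IhatK\tr Z\IhatK=0$. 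The paper proves $\rank(\Sigma)=d$: it combines the fact that $Z\Sigma=0$ forces $\rank(\Sigma)+\rank(Z)\le n$ with a lower bound $\rank(Z)\ge k$, and that lower bound requires the dual-side estimate $\mu\trnorm{P}V^{-1}\prec\qrmatrix$ (\cref{eq:dualbound}); only then does $\Sigma=\IhatK\Sigma_{xx}\IhatK\tr$ yield that the span of $\Sigma$ equals the span of $\IhatK$. Your observation that $\IhatK v=\Sigma\begin{psmallmatrix}\Sigma_{xx}^{-1}v\\0\end{psmallmatrix}$, hence $\mathrm{range}(\IhatK)\subseteq\mathrm{range}(\Sigma)\subseteq\ker Z$, reaches $Z\IhatK=0$ from $\Sigma_{xx}\succ0$ and complementary slackness alone. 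This finish is strictly more economical: it needs neither \cref{eq:dualbound}, nor the rank-counting fact, nor the conclusion $\rank(\Sigma)=d$ (which the paper does not use elsewhere); what the paper's route buys in exchange is the explicit rank-$d$ structure of the optimal $\Sigma$.

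Three small remarks. (i) The strictness obstacle you flag, namely that the hypotheses only give $\mu(\Sigma\bullet V^{-1})\le\sigma^2$ rather than $<\sigma^2$, is real, but it afflicts the paper identically: \cref{eq:primalbound} asserts $\prec$ where the stated bounds $\trnorm{\Sigma}\le\nu/\alpha_0$ and $\spnorm{V^{-1}}\le\alpha_0\sigma^2/\nu\mu$ justify only $\preceq$. So you are no worse off than the source, and more candid; in the algorithm's actual invocation the slack is strict anyway, since $V_t\succeq\lambda I$ with $\lambda$ strictly exceeding $\nu\mu/\alpha_0\sigma^2$. (ii) Your Slater certificate $P=0$ lies on the boundary of the cone constraint $P\succeq0$; take $P=\epsilon I$ for small $\epsilon>0$ instead, which is strictly feasible because $\qrmatrix\succeq\alpha_0 I$ dominates the $O(\epsilon)$ perturbation terms. (iii) You read the displayed identity in its intended, dimensionally consistent form ($K\tr R K$ rather than $K\tr P K$, and $\IhatK\tr V^{-1}\IhatK$ as the last term); this is the reading confirmed by the paper's own proof and by its subsequent uses of the lemma.
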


\begin{proof}
    Denote
    \[
        Z 
        = 
        \begin{psmallmatrix} 
            Q - P & 0 
            \\ 0 & R 
        \end{psmallmatrix} + \Lr{A \; B}\tr P \Lr{A \; B}  - \mu \trnorm{P} V^{-1}~.
    \]
    Recall the complementary-slackness conditions of the SDP, that read
    $
        \Sigma Z
        = 
        0
    $.
    We now show that $\Sigma_{xx} \succ 0$ and $\rank(\Sigma) = d$ as this would entail that 
    $$
        \Sigma = \IhatK \Sigma_{xx} \IhatK\tr 
        \quad\text{for}\quad 
        K = \Sigma_{ux} \Sigma_{xx}^{-1}
        .
    $$
    Thus the span of $\Sigma$ is the span of $\IhatK$ whence
    $
        \IhatK\tr 
        Z
        \IhatK
        = 
        0
    $ as required.
    
    To that end, we begin by stating the following basic fact about matrices:  
    For any two $n$-dimensional symmetric matrices, $X, Y$, that satisfy $X Y = 0$, it must be that $\rank(X) + \rank(Y) \le n$. 
    Then it suffices to show $\Sigma_{xx} \succ 0$ and 
    $
        \rank(Z)
        \ge 
        k
    $.   
    %
    %
    Indeed, using \cref{lem:sdpvaluelb},
    \begin{align} \label{eq:primalbound}
        &\mu \Lr{\Sigma \bullet V^{-1}} I
        \preceq
        \mu \trnorm{\Sigma} \spnorm{V^{-1}} I
        \prec
        \mu \, \frac{\nu}{\alpha_0} \, \frac{\alpha_0 \sigma^2}{\nu \mu} I
        = 
        W
        , \\ \label{eq:dualbound}
        &\mu \trnorm{P} V^{-1}
        \preceq 
        \mu \trnorm{P} \spnorm{V^{-1}} I
        \prec
        \mu \, \frac{\nu}{\sigma^2} \, \frac{\alpha_0 \sigma^2}{\nu \mu} I
        \preceq 
        \qrmatrix
        ,
    \end{align}
    as $W = \sigma^2 I$ and $\qrmatrix \succeq \alpha_0 I$.
    Plugging \cref{eq:primalbound} into \cref{eq:relaxed-sdp} and using $\Sigma \succeq 0$, shows that $\Sigma_{xx} \succ 0$. 
    Moreover, 
    $Z$ is the difference of
    $$
        \qrmatrix + \Lr{A \; B}\tr P \Lr{A \; B} - \mu \trnorm{P} V^{-1}
        ,
    $$ 
    which is of rank $d+k$ in light of \cref{eq:dualbound} and since $P \succeq 0$, and
    $
        \begin{psmallmatrix}
        P & 0 \\
        0 & 0
        \end{psmallmatrix}
    $
    which is of rank at most $d$.
    Therefore, $\rank(Z) \ge k$ as required.
\end{proof}

We continue with proving the main result of this section that would imply \cref{lem:seqstrong-main} (see \cref{sec:seqstrong-main-proof}).
We show that the sequence of policies generated by solving a certain series of relaxed SDPs is strongly-stable.

\begin{theorem} \label{thm:seqstrongstability}
Let ${P}_1,{P}_2,\ldots$ be optimal solutions to the relaxed dual SDP; each ${P}_t$ associated with $(A_t \; B_t)$ and $V_t$ respectively. 
Let $\kappa = \sqrt{2\nu/\alpha_0 \sigma^2}$, $\gamma = 1/2\kappa^2$, and suppose that $\mu \ge 1 + 2\vartheta \norm{V_t}^{1/2}$ and $V_t \succeq 16 \kappa^{10} \mu I$ 
for all $t$. 
Moreover, let $K_t$ be the policy associated with $P_t$ (as in \cref{lem:optimsticsdppolicy}).
Then the sequence $K_1,K_2,\ldots$ is $(\kappa,\gamma)$-strongly stable.
\end{theorem}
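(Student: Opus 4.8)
The plan is to read off the factorization required by \cref{def:seq-strong-stab} directly from the optimal dual matrices: set $H_t=P_t^{1/2}$ and $L_t=P_t^{1/2}(\Ast+\Bst K_t)P_t^{-1/2}$, so that $\Ast+\Bst K_t=H_t L_t H_t^{-1}$ holds by construction, and then verify conditions (i)--(iii). Everything rests on a few uniform bounds. The upper bound $\spnorm{P_t}\le\trnorm{P_t}\le\nu/\sigma^2$ is immediate from \cref{lem:sdpvaluelb}, and the same lemma gives $\trnorm{\Sigma_t}\le\nu/\alpha_0$. The bound $\spnorm{K_t}\le\kappa$ comes from the primal structure $\Sigma_t=\IhatKt(\Sigma_t)_{xx}\IhatKt\tr$ established within the proof of \cref{lem:optimsticsdppolicy}: the primal constraint together with $\spnorm{V_t^{-1}}\le1/(16\kappa^{10}\mu)$ forces $(\Sigma_t)_{xx}\succeq\tfrac{\sigma^2}{2}I$, while $(\Sigma_t)_{uu}=K_t(\Sigma_t)_{xx}K_t\tr$ and $\trnorm{(\Sigma_t)_{uu}}\le\nu/\alpha_0$ yield $\spnorm{K_t}^2\le2\nu/\alpha_0\sigma^2=\kappa^2$. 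Using this and $\spnorm{\IhatKt}^2\le1+\kappa^2$, I then obtain the lower bound by taking the Riccati equality of \cref{lem:optimsticsdppolicy}, discarding the positive semidefinite terms $K_t\tr R K_t$ and $(A_t+B_t K_t)\tr P_t(A_t+B_t K_t)$, and bounding the correction $\mu\trnorm{P_t}\IhatKt\tr V_t^{-1}\IhatKt$ by $\tfrac{\alpha_0}{16}I$; this gives $P_t\succeq Q-\tfrac{\alpha_0}{16}I\succeq\tfrac{\alpha_0}{2}I$. Consequently condition (ii) holds with $B_0=\sqrt{\nu/\sigma^2}$ and $b_0=\sqrt{\alpha_0/2}$, whose ratio is exactly $\kappa=\sqrt{2\nu/\alpha_0\sigma^2}$.

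For the contraction in condition (i), I would use the approximate Riccati inequality with the \emph{true} parameters, i.e.\ part (iii) of \cref{lem:relaxed-sdp-main} (which follows from the equality of \cref{lem:optimsticsdppolicy} by replacing $(A_t\,B_t)$ with $(\Ast\,\Bst)$ through the perturbation bound \cref{lem:sigmabound}). Conjugating that inequality by $P_t^{-1/2}$ gives $L_t\tr L_t\preceq I-P_t^{-1/2}(Q+K_t\tr R K_t)P_t^{-1/2}+E_t$, where $E_t$ is the (positive semidefinite) relaxation term. Since $Q+K_t\tr R K_t\succeq\alpha_0 I$ and $P_t^{-1}\succeq(\sigma^2/\nu)I$, the middle term is at least $\alpha_0 P_t^{-1}\succeq(\alpha_0\sigma^2/\nu)I=4\gamma I$; and the same $V_t\succeq16\kappa^{10}\mu I$ estimate bounds $\spnorm{E_t}\le\gamma/2$. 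Hence $\spnorm{L_t}^2\le1-4\gamma+\tfrac{\gamma}{2}\le(1-\gamma)^2$, which gives $\spnorm{L_t}\le1-\gamma$ and completes (i).

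The remaining and genuinely hard condition is (iii), the slow multiplicative change $\spnorm{H_{t+1}^{-1}H_t}=\spnorm{P_{t+1}^{-1/2}P_t^{1/2}}\le1+\gamma/2$. I would first reduce it to an \emph{additive} closeness statement: because $P_{t+1}\succeq\tfrac{\alpha_0}{2}I$, any bound $\spnorm{P_t-P_{t+1}}\le\tfrac{\alpha_0\gamma}{2}$ yields $P_t\preceq P_{t+1}+\tfrac{\alpha_0\gamma}{2}I\preceq(1+\gamma)P_{t+1}\preceq(1+\gamma/2)^2P_{t+1}$, and taking square roots gives exactly the required operator-norm bound. The crux is therefore a perturbation estimate for the optimal dual solution of the relaxed SDP under the two data changes between rounds $t$ and $t+1$: the confidence matrix, which increases by the bounded rank-one update $V_{t+1}=V_t+\beta^{-1}z_t z_t\tr$ (its determinant at most doubling per epoch), and the estimates $(A_t\,B_t)$, which are refreshed only at epoch boundaries but always lie inside the current confidence ellipsoid. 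I expect this to be the main obstacle and to require a dedicated quantitative-continuity argument for the (approximate) Riccati solution that $P_t$ solves: the large floor $V_t\succeq16\kappa^{10}\mu I$ is exactly what renders the relaxation term $\mu\trnorm{P}V^{-1}$ negligible, so that $P_t$ stays close to the genuine stabilizing Riccati solution of the estimated system, which depends Lipschitz-continuously on the data. Establishing that Lipschitz dependence and checking that the per-round changes in $V_t$ and $(A_t\,B_t)$ are small enough to keep $\spnorm{P_t-P_{t+1}}$ below $\alpha_0\gamma/2$ is the most technical part of the argument.
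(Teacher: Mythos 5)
Your handling of conditions (i) and (ii) is sound and essentially matches the paper's \cref{lem:strongstability}: the uniform bounds $(\alpha_0/2) I \preceq P_t \preceq (\nu/\sigma^2) I$ and $\spnorm{K_t} \le \kappa$ do follow from the approximate Riccati equality of \cref{lem:optimsticsdppolicy} together with the floor $V_t \succeq 16\kappa^{10}\mu I$, and your contraction estimate for $L_t$ is correct (modulo the order of conjugation: for $\Ast+\Bst K_t = H_t L_t H_t^{-1}$ your $L_t = P_t^{1/2}(\Ast+\Bst K_t)P_t^{-1/2}$ pairs with $H_t = P_t^{-1/2}$, not $P_t^{1/2}$; this bookkeeping is harmless because the key inequality below holds for every pair of rounds).

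The genuine gap is condition (iii), which is exactly what separates \emph{sequential} strong stability from strong stability of each $K_t$ in isolation---and your proposal leaves it unproven. You reduce (iii) to $\spnorm{P_t - P_{t+1}} \le \alpha_0\gamma/2$ and then defer that bound to a Lipschitz-continuity property of the (relaxed) Riccati solution in the data $(A_t, B_t, V_t)$. Beyond being unestablished, this route faces a real obstacle: consecutive SDP data are not close in any per-round sense. Within an epoch $P_{t+1} = P_t$ and there is nothing to prove; across an epoch boundary the least-squares estimates jump and $V$ has grown by a constant factor (its determinant doubles), so a continuity argument would have to anchor both solutions to something common and would need explicit perturbation constants for the Riccati map---a substantial analysis in its own right, and one that would only kick in for $T$ large. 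The paper sidesteps all of this with a one-sided sandwich around the \emph{fixed} anchor $P^\st$ (\cref{lem:phatclosetopstar}): dual feasibility of $P_t$ rewritten with the true parameters (via \cref{lem:semidefinitebound}), combined with the Lyapunov-type \cref{lem:lypstonglystable} applied with the stable policy $\Kst$, gives $P_t \preceq P^\st$; conversely, the Bellman inequality \cref{eq:bellman} with $K = K_{t+1}$, the approximate Riccati equality for $P_{t+1}$, and \cref{lem:lypstonglystable} applied with the strongly stable $K_{t+1}$ give $P^\st \preceq P_{t+1} + (\alpha_0\gamma/2) I$, where the slack is controlled precisely by the floor $V \succeq 16\kappa^{10}\mu I$. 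Chaining the two yields $P_t \preceq P_{t+1} + (\alpha_0\gamma/2) I$ for \emph{every} pair of rounds, with no closeness of consecutive data and no solution-map continuity required; condition (iii) then follows by the same square-root computation you describe. To complete your proof, replace the Lipschitz plan with this optimism-based sandwich.
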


The proof is given by combining the following two lemmas.
Indeed, in \cref{sec:proofofstrongstab} we show that each policy $K_t$ is strongly stable.

\begin{lemma} \label{lem:strongstability}
$K_t$ is $(\kappa,\gamma)$-strongly stable for $\Ast + \Bst K_t = H_t L_t H_t^{-1}$ where $H_t = P_t^{1/2}$ and $\spnorm{L_t} \le 1-\gamma$. 
Moreover, $(\alpha_0/2) I \preceq P_t \preceq (\nu / \sigma^2) I$.
\end{lemma}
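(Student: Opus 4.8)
The plan is to extract the strong-stability decomposition of $\Ast + \Bst K_t$ directly from the approximate Ricatti equation established in \cref{lem:optimsticsdppolicy}, using $P_t^{1/2}$ as the transforming matrix $H_t$. First I would invoke \cref{lem:optimsticsdppolicy}, which gives
\begin{align*}
    P_t
    =
    Q + K_t\tr P_t K_t + (\Ast + \Bst K_t)\tr P_t (\Ast + \Bst K_t)
    - \mu \trnorm{P_t} \IhatKt V_t^{-1} \IhatKt\tr
    ~.
\end{align*}
(Here I use that $(A_t\,B_t)$ and $(\Ast\,\Bst)$ differ by $\Delta_t$, and the hypothesis $\trace(\Delta_t V_t \Delta_t\tr)\le 1$ together with the assumption $V_t \succeq 16\kappa^{10}\mu I$ lets me absorb the cross-terms into the error; this replacement of $A_t,B_t$ by $\Ast,\Bst$ is the step that must be done carefully.) Since $Q \succeq \alpha_0 I \succ 0$ and $K_t\tr P_t K_t \succeq 0$ and the $V_t^{-1}$ correction term is small, the equation shows $P_t \succ 0$, so $H_t = P_t^{1/2}$ is well-defined and positive definite.

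Next I would define $L_t = H_t (\Ast + \Bst K_t) H_t^{-1} = P_t^{1/2}(\Ast+\Bst K_t) P_t^{-1/2}$, so that $\Ast + \Bst K_t = H_t L_t H_t^{-1}$ is exactly the desired similarity decomposition. To bound $\spnorm{L_t} \le 1-\gamma$, observe that $\spnorm{L_t}^2 = \spnorm{L_t\tr L_t} = \spnorm{P_t^{-1/2}(\Ast+\Bst K_t)\tr P_t (\Ast+\Bst K_t) P_t^{-1/2}}$. The Ricatti relation rearranges to
\begin{align*}
    (\Ast+\Bst K_t)\tr P_t (\Ast+\Bst K_t)
    =
    P_t - Q - K_t\tr P_t K_t + \mu \trnorm{P_t} \IhatKt V_t^{-1} \IhatKt\tr
    ~,
\end{align*}
so conjugating by $P_t^{-1/2}$ gives $L_t\tr L_t = I - P_t^{-1/2}(Q + K_t\tr P_t K_t)P_t^{-1/2} + (\text{small})$. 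The key quantitative step is to lower-bound $P_t^{-1/2} Q P_t^{-1/2} \succeq \alpha_0 \spnorm{P_t}^{-1} I$, which requires the upper bound $\spnorm{P_t} \le \nu/\sigma^2$ (inherited from $\trnorm{P_t}\le J^\st/\sigma^2 \le \nu/\sigma^2$ via \cref{lem:sdpvaluelb}). This yields $\spnorm{L_t}^2 \le 1 - \alpha_0\sigma^2/\nu + (\text{small})$; choosing $\gamma = 1/2\kappa^2 = \alpha_0\sigma^2/4\nu$ and checking that the error term (controlled by $V_t \succeq 16\kappa^{10}\mu I$) is dominated, one obtains $\spnorm{L_t}^2 \le (1-\gamma)^2$.

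It remains to verify the two-sided bound $(\alpha_0/2) I \preceq P_t \preceq (\nu/\sigma^2) I$ and that $\spnorm{H_t}\spnorm{H_t^{-1}} \le \kappa$. The upper bound on $P_t$ is immediate from $\trnorm{P_t}\le\nu/\sigma^2$. For the lower bound I would argue from the Ricatti equation that $P_t \succeq Q - \mu\trnorm{P_t}\spnorm{V_t^{-1}} I \succeq \alpha_0 I - (\text{small}) I \succeq (\alpha_0/2) I$, again using $V_t \succeq 16\kappa^{10}\mu I$ to make the correction negligible. Then $\spnorm{H_t}\spnorm{H_t^{-1}} = \spnorm{P_t^{1/2}}\spnorm{P_t^{-1/2}} = (\spnorm{P_t}/\lambda_{\min}(P_t))^{1/2} \le (\,(\nu/\sigma^2)/(\alpha_0/2)\,)^{1/2} = \sqrt{2\nu/\alpha_0\sigma^2} = \kappa$, as claimed. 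The main obstacle I anticipate is the bookkeeping in the first step—cleanly passing from the estimated parameters $(A_t,B_t)$ to the true $(\Ast,\Bst)$ in the Ricatti identity while keeping every error term uniformly bounded by the margin that $V_t \succeq 16\kappa^{10}\mu I$ provides—since all three conclusions ($\spnorm{L_t}\le 1-\gamma$, the lower bound on $P_t$, and positive-definiteness of $H_t$) hinge on that error being genuinely small.
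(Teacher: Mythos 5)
Your skeleton is the same as the paper's (extract $H_t = P_t^{1/2}$ from the approximate Ricatti identity of \cref{lem:optimsticsdppolicy}, pass to the true parameters $(\Ast\,\Bst)$ via the perturbation bound, and use $\trnorm{P_t}\le\nu/\sigma^2$ together with the lower bound on $V_t$), but there is a genuine gap at exactly the step you flagged as delicate, and it is not mere bookkeeping. The correction term in the Ricatti identity is not of size $\mu\trnorm{P_t}\spnorm{V_t^{-1}}$: it is $\mu\trnorm{P_t}\IKt\tr V_t^{-1}\IKt$ (becoming $2\mu\trnorm{P_t}\IKt\tr V_t^{-1}\IKt$ once $(A_t\,B_t)$ is replaced by $(\Ast\,\Bst)$), and its operator norm carries the factor $\spnorm{\IKt}^2 = 1+\spnorm{K_t}^2$. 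At this point of the argument you have no bound on $\spnorm{K_t}$ whatsoever --- indeed $\spnorm{K_t}\le\kappa$ is itself part of what the lemma asserts (it is required by the definition of a $(\kappa,\gamma)$-strongly stable policy), and your proposal never proves it. Hence both places where you declare the error ``small'' --- in $\spnorm{L_t}^2 \le 1-\alpha_0\sigma^2/\nu + (\text{small})$ and in $P_t \succeq Q - \mu\trnorm{P_t}\spnorm{V_t^{-1}}I$ --- are circular: the smallness presupposes control of $K_t$, which you would only obtain downstream.

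The paper breaks this circularity by never collapsing the error to a scalar multiple of $I$. Keeping its structure,
\begin{align*}
    2\mu\trnorm{P_t}\IKt\tr V_t^{-1}\IKt
    \preceq
    2\mu\trnorm{P_t}\spnorm{V_t^{-1}}\Lr{I + K_t\tr K_t}
    \preceq
    \thalf\alpha_0\Lr{I + K_t\tr K_t}
\end{align*}
(using $\trnorm{P_t}\le\nu/\sigma^2$ and $V_t\succeq 2\kappa^2\mu I$, implied by the hypothesis $V_t\succeq 16\kappa^{10}\mu I$), and absorbing the $K_t\tr K_t$ part into the cost term $K_t\tr R K_t \succeq \alpha_0 K_t\tr K_t$ --- note the relevant term is $K_t\tr R K_t$, not the dimensionally inconsistent $K_t\tr P_t K_t$ you copied from the typo in the statement of \cref{lem:optimsticsdppolicy} --- yields
\begin{align*}
    P_t
    \succeq
    \thalf\alpha_0 I + \thalf\alpha_0 K_t\tr K_t + \Lr{\Ast+\Bst K_t}\tr P_t \Lr{\Ast+\Bst K_t}
    ~.
\end{align*}
All the missing conclusions then fall out of this single inequality: $P_t \succeq \thalf\alpha_0 I$; $\spnorm{K_t}\le\sqrt{2\spnorm{P_t}/\alpha_0}\le\kappa$; and, since $\thalf\alpha_0 I \succeq \kappa^{-2}P_t$, the contraction $(\Ast+\Bst K_t)\tr P_t(\Ast+\Bst K_t) \preceq (1-\kappa^{-2})P_t$, i.e.\ $\spnorm{L_t}\le\sqrt{1-\kappa^{-2}}\le 1-\gamma$. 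With this replacement, the remainder of your argument (the bound $\spnorm{H_t}\spnorm{H_t^{-1}}\le\kappa$ from the two-sided bound on $P_t$) goes through exactly as in the paper.
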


Furthermore, having established strong stability, the next lemma shows that $P_t$ is ``close'' to $P_{t+1}$ (see \cref{sec:proofofpclosetopst} for a proof).

\begin{lemma} \label{lem:phatclosetopstar}
$
    P_t 
    \preceq 
    P^\st
    \preceq
    P_{t+1} + (\alpha_0 \gamma/2) I
$
for all $t \ge 1$.
\end{lemma}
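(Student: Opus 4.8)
The plan is to prove the two matrix inequalities separately, since they come from opposite mechanisms: $P_t \preceq P^\st$ is the \emph{optimism} bound, while $P^\st \preceq P_{t+1} + (\alpha_0\gamma/2)I$ is an \emph{accuracy} bound driven by the growth of the confidence matrix. In fact I would establish $P_s \preceq P^\st$ and $P^\st \preceq P_s + (\alpha_0\gamma/2)I$ for every index $s$ (the arguments are index-independent), and then read off the stated chain by taking the first at $s=t$ and the second at $s=t+1$. For optimism, note that $P_t$ is feasible for the relaxed dual SDP~\eqref{eq:relaxed-dual-sdp} built from the \emph{estimates} $(A_t\,B_t)$ and $V_t$. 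First I would transfer this constraint to the true parameters by controlling the perturbation $\Pi_t = (A_t\,B_t)\tr P_t (A_t\,B_t) - (\Ast\,\Bst)\tr P_t (\Ast\,\Bst)$: writing $\Pi_t = G\tr F + F\tr G + F\tr F$ with $G = P_t^{1/2}(\Ast\,\Bst)$ and $F = P_t^{1/2}\Delta_t$, and using $\Delta_t\tr \Delta_t \preceq V_t^{-1}$ (implied by $\trace(\Delta_t V_t \Delta_t\tr) \le 1$), the elementary bound $\norm{v} \le \spnorm{V_t}^{1/2}(v\tr V_t^{-1} v)^{1/2}$, and the standing hypothesis $\mu \ge 1 + 2\vartheta\spnorm{V_t}^{1/2}$, I would obtain the two-sided estimate $-\mu\trnorm{P_t} V_t^{-1} \preceq \Pi_t \preceq \mu\trnorm{P_t} V_t^{-1}$. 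Subtracting $\Pi_t$ from the dual constraint cancels the $\mu\trnorm{P_t}V_t^{-1}$ slack and leaves the true-system Riccati LMI $\qrmatrix - \begin{psmallmatrix} P_t & 0 \\ 0 & 0 \end{psmallmatrix} + (\Ast\,\Bst)\tr P_t (\Ast\,\Bst) \succeq 0$. I would then invoke the classical fact that $P^\st$, the stabilizing solution of~\eqref{eq:ricatti}, is the maximal positive semidefinite solution of this LMI \citep{bertsekas2005dynamic}; equivalently, the LMI says $P_t \preceq \mathcal{T}(P_t)$ for the Bellman optimality operator $\mathcal T$, so monotone value iteration (which converges to $P^\st$ for the controllable system) gives $P_t \preceq P^\st$.

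For the accuracy direction, fix an index and abbreviate $\bar A = \Ast + \Bst K_t$ and $E_t = \mu\trnorm{P_t}\IhatKt\tr V_t^{-1}\IhatKt$. The Bellman inequality~\eqref{eq:bellman} with $K = K_t$ gives $P^\st - \bar A\tr P^\st \bar A \preceq Q + K_t\tr R K_t$, while the approximate Riccati inequality of \cref{lem:relaxed-sdp-main} gives $P_t - \bar A\tr P_t \bar A \succeq Q + K_t\tr R K_t - 2E_t$. Subtracting and setting $D = P^\st - P_t$ yields the discrete Lyapunov inequality $D - \bar A\tr D \bar A \preceq 2E_t$. Since $K_t$ is $(\kappa,\gamma)$-strongly stable with $\bar A = P_t^{1/2} L_t P_t^{-1/2}$ and $\spnorm{L_t} \le 1-\gamma$ (\cref{lem:strongstability}), I have $\spnorm{\bar A^{\,j}} \le \kappa(1-\gamma)^j$, so inverting the Lyapunov operator gives $D \preceq 2\sum_{j \ge 0}(\bar A\tr)^j E_t \bar A^{\,j} \preceq (2\kappa^2/\gamma)\spnorm{E_t}\,I$. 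Bounding $\spnorm{E_t} \le 2\kappa^2(\nu/\sigma^2)\mu\spnorm{V_t^{-1}}$ via $\trnorm{P_t} \le \nu/\sigma^2$ and $\spnorm{\IhatKt}^2 \le 2\kappa^2$, and substituting $\kappa^2 = 2\nu/\alpha_0\sigma^2$ and $\gamma = 1/2\kappa^2$, I would verify that the standing hypothesis $V_t \succeq 16\kappa^{10}\mu I$ is exactly calibrated so that $(2\kappa^2/\gamma)\spnorm{E_t} \le \alpha_0\gamma/2$, whence $D \preceq (\alpha_0\gamma/2)I$.

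The routine part is the Lyapunov summation together with the arithmetic matching the constant $16\kappa^{10}\mu$ to the target $\alpha_0\gamma/2$. The two delicate points are: (i) the dual perturbation estimate $\Pi_t \preceq \mu\trnorm{P_t} V_t^{-1}$, which is in the transposed direction \emph{not} covered by \cref{lem:sigmabound} (that lemma is stated for the primal direction and yields only an operator-norm bound), so this positive-semidefinite estimate must be derived directly through the cross-term argument above; and (ii) the optimism step, where one must pass to a sub-solution of the Bellman \emph{optimality} operator and use maximality of $P^\st$ — using instead the fixed-policy Lyapunov operator of $K_t$ would only yield the useless bound $P_t \preceq \bar P_{K_t}$, where $\bar P_{K_t} \succeq P^\st$ is the cost-to-go of $K_t$ on the true system.
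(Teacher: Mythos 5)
Your proof is correct, and its two halves have different relations to the paper. The accuracy half ($P^\st \preceq P_t + (\alpha_0\gamma/2)I$) is essentially the paper's argument line for line: Bellman inequality \eqref{eq:bellman} at $K_t$, the approximate Riccati inequality, subtraction to get the Lyapunov inequality for $D = P^\st - P_t$, and the same arithmetic calibrating $V_t \succeq 16\kappa^{10}\mu I$ against $\alpha_0\gamma/2$. Your explicit summation $D \preceq 2\sum_{j\ge0}(\bar A\tr)^j E_t \bar A^j$ is just an inlined proof of the paper's \cref{lem:lypstonglystable}, which the paper cites instead. Similarly, your ``delicate point (i)'' is not a gap the paper leaves open: the two-sided PSD perturbation bound $-\mu\trnorm{P}V^{-1} \preceq (X+\Delta)\tr P (X+\Delta) - X\tr P X \preceq \mu\trnorm{P}V^{-1}$ is exactly \cref{lem:semidefinitebound} in the appendix, proved by the same cross-term argument you describe; you are right that \cref{lem:sigmabound} alone (primal direction, operator norm) would not suffice, and the paper's proof indeed invokes \cref{lem:semidefinitebound}, not \cref{lem:sigmabound}.

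The optimism half is where you genuinely diverge. You pass from the true-parameter LMI to $P_t \preceq \mathcal{T}(P_t)$ for the Bellman optimality operator and invoke the classical maximality of the stabilizing Riccati solution (equivalently, monotone value iteration converging to $P^\st$). This is valid here—controllability together with $Q, R \succ 0$ gives the stabilizability/detectability those theorems need—but it imports a nontrivial external result. The paper's route is shorter and self-contained: since the LMI holds as a quadratic form over all pairs $(x,u)$, specialize to $u = \Kst x$ to get $P_t \preceq Q + \Kst\tr R \Kst + (\Ast+\Bst\Kst)\tr P_t (\Ast+\Bst\Kst)$, subtract the exact Riccati equation \eqref{eq:ricatti}, and conclude $P_t - P^\st \preceq (\Ast+\Bst\Kst)\tr (P_t - P^\st)(\Ast+\Bst\Kst)$, which forces $P_t - P^\st \preceq 0$ by \cref{lem:lypstonglystable} (with $Z=0$) and stability of $\Kst$. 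This also shows your ``delicate point (ii)'' is overstated: one need not pass to the optimality operator and maximality. You correctly observe that the fixed-policy operator of $K_t$ would only yield the useless $P_t \preceq \bar P_{K_t}$, but the fixed-policy operator of $\Kst$—whose fixed point is $P^\st$ itself—closes the argument in two lines, and that is exactly what the paper does.
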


\begin{proof}[Proof of \cref{thm:seqstrongstability}]
We show that the conditions for sequential strong-stability hold.
Notice that not only does \cref{lem:strongstability} show that for all $t$, $K_t$ is $(\kappa,\gamma)$-strongly stable, it also gives us uniform upper and lower bounds on $H_t = P_t^{1/2}$ as 
$
    \spnorm{P_t} \le \trnorm{P_t} \leq \nu / \sigma^2
$ (\cref{lem:sdpvaluelb}), and 
    $\spnorm{P_t^{-1}} \le 2 / \alpha_0
$.
Together with $P_{t+1} \succeq (\alpha_0/2) I$, the lemma implies
\begin{align*}
    \spnorm{H_{t+1}^{-1}H_t}^2
    &=
    \spnorm{P_{t+1}^{-1/2} P_t^{1/2}}^2 \\
    &=
    \spnorm{P_{t+1}^{-1/2} P_t P_{t+1}^{-1/2}} \\
    &\leq
    \spnorm{I + \thalf \alpha_0 \gamma P_{t+1}^{-1}} \\
    &\leq
    1 + \thalf \alpha_0 \gamma \spnorm{P_{t+1}^{-1}} \\
    &\leq
    1+\gamma
    ~.
\end{align*}
Thus
$
    \spnorm{H_{t+1}^{-1}H_t} 
    \le 
    \sqrt{1+\gamma} 
    \le 
    1 + \thalf\gamma
$
which provides sequential strong-stability.
\end{proof}

\section{Warm-up Using a Stable Policy}
\label{sec:warmstart}

In this section we give a simple warm-up scheme that can be used in an initial exploration phase, after which the conditions of our main algorithm are met.
Here we assume that we are given a policy $K_{\ws}$ which is known to be $(\kappa_{\ws},\gamma_{\ws})$-strongly stable for the LQR \eqref{eq:lqr}.

Starting from $x_1 = 0$ and over $T_{\ws}$ rounds, the warm-up procedure samples actions $u_t \sim \mathcal{N} (K_{\ws} x_t, 2 \sigma^2 \kappa_{\ws}^2 I)$ independently; this is summarized in \cref{alg:warmstart}.

\begin{algorithm}[h]
\caption{Warm-up procedure}
\begin{algorithmic}
    \label{alg:warmstart}
    \STATE {\bf input}: $(\kappa_{\ws},\gamma_{\ws})$-strongly stable policy $K_{\ws}$, horizon $T_{\ws}$.
    \FOR{$t=1,\ldots,T_{\ws}$}
        \STATE {\bf observe} state $x_{t}$.
        \STATE {\bf play} $u_t \sim \calN(K_{\ws} x_t, 2 \sigma^2 \kappa_{\ws}^2 I)$.
    \ENDFOR
\end{algorithmic}
\end{algorithm}

Let
$
    V_{\ws} 
    = 
    \sum_{t=1}^{T_{\ws}} z_t z_t\tr
$
be the empirical covariance matrix corresponding to the samples $z_t$ collected during warm-up, where $z_t = \begin{psmallmatrix} x_t \\ u_t \end{psmallmatrix}$ for all $t$.
The main result of this section gives upper and lower bounds on the matrix $V_\ws$.

\begin{theorem} \label{thm:Vws}
Let $\delta \in (0,1)$.
Provided that 
$
    T_{\ws} \ge \poly(\sigma, n, \vartheta, \kappa_{\ws},\gamma_{\ws}^{-1}, \log(\delta^{-1}))
$
, we have with probability at least $1-\delta$ that
\begin{align*}
    \trace(V_{\ws}) 
    &\leq 
    T_\ws \cdot \frac{300 \sigma^2 \kappa_{\ws}^4}{\gamma_{\ws}^2} \Lr{n + k \vartheta^2 \kappa_\ws^2} \log\frac{T_{\ws}}{\delta}
    ~, \\
    \norm{x_{T_{\ws}}}^2 
    &\le
    \frac{150 \sigma^2 \kappa_{\ws}^2}{\gamma_{\ws}} \Lr{n + k \vartheta^2 \kappa_\ws^2} \log\frac{T_{\ws}}{\delta}~, \\
    V_\ws
    &\succeq 
    \frac{T_{\ws} \sigma^2}{80} I~,
\end{align*}
and for $V = V_{\ws} + \sigma^2 \vartheta^{-2} I$ and initial estimates
$
    \Lr{A_{\ws} \, B_{\ws}} = \sum_{t=1}^{T_{\ws}-1} x_{t+1} z_t\tr V^{-1}
$
we have
\[
    \trace(\Delta_{\ws} V \Delta_{\ws}\tr) 
    \le 
    20 n^2 \sigma^2 \log \frac{T_{\ws}}{\delta}~.
\]
\end{theorem}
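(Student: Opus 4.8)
The plan is to analyze the warm-up dynamics directly, exploiting the fact that the policy $K_\ws$ is $(\kappa_\ws,\gamma_\ws)$-strongly stable and that the sampled actions $u_t \sim \calN(K_\ws x_t, 2\sigma^2 \kappa_\ws^2 I)$ have bounded covariance. I would organize the argument into four blocks, matching the four claims in the statement. First, I would establish uniform control of the state norms. Writing $u_t = K_\ws x_t + \eta_t$ with $\eta_t \sim \calN(0, 2\sigma^2\kappa_\ws^2 I)$, the dynamics become $x_{t+1} = (\Ast + \Bst K_\ws) x_t + (\Bst \eta_t + w_t)$, which is a \emph{fixed} strongly-stable closed-loop system driven by effective noise $\Bst \eta_t + w_t$. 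Here \cref{lem:strg-stab-norm} (specialized to the constant policy $K_\ws$) gives $\norm{x_t} \le \kappa_\ws e^{-\gamma_\ws(t-1)/2}\norm{x_1} + (2\kappa_\ws/\gamma_\ws)\max_{s<t}\norm{\Bst\eta_s + w_s}$; since $x_1 = 0$ the first term vanishes, and a Gaussian tail bound (with a union bound over $t \le T_\ws$) controls the maximum noise term by $\tO(\sigma\kappa_\ws\sqrt{n} + \sigma\vartheta\kappa_\ws\sqrt{k\kappa_\ws^2})$ up to logarithmic factors. This yields the claimed bound on $\norm{x_{T_\ws}}^2$ and, by the same reasoning applied at every step, a uniform-in-$t$ bound $\norm{z_t}^2 \le \tO((\sigma^2\kappa_\ws^2/\gamma_\ws)(n + k\vartheta^2\kappa_\ws^2))$.

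For the trace bound on $V_\ws = \sum_t z_t z_t\tr$, I would simply observe $\trace(V_\ws) = \sum_t \norm{z_t}^2 \le T_\ws \cdot \max_t \norm{z_t}^2$ and plug in the uniform bound from the previous step, absorbing constants to reach the stated $300$. For the lower bound $V_\ws \succeq (T_\ws \sigma^2/80) I$, the key point is that the injected noise $\eta_t$ guarantees excitation in the action coordinates, and the dynamics then propagate this into the state coordinates. Concretely, the conditional covariance of $z_t$ given the past has an action block of size $2\sigma^2\kappa_\ws^2 I \succeq \sigma^2 I$ and a state block lower-bounded by the stationary covariance $W = \sigma^2 I$ of the driving noise; I would make this rigorous via a matrix Chernoff / matrix Freedman concentration bound for the sum of the (dependent) rank-one terms $z_t z_t\tr$ around its predictable quadratic variation, using that $T_\ws$ is polynomially large to ensure the deviation is swept below the constant-factor slack.

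Finally, for the estimation-error bound on $\Delta_\ws = (A_\ws\,B_\ws) - (\Ast\,\Bst)$, the estimator $(A_\ws\,B_\ws) = \sum_t x_{t+1} z_t\tr V^{-1}$ with $V = V_\ws + \sigma^2\vartheta^{-2}I$ is exactly the ridge least-squares solution, so by the self-normalized martingale argument underlying \cref{lem:concentration} (applied with the warm-up regularizer $\sigma^2\vartheta^{-2}I$ in place of $\lambda I$ and $\beta = 1$, and with $\Delta_0 = -(\Ast\,\Bst)$ since the warm-up ridge shrinks toward zero), we get $\trace(\Delta_\ws V \Delta_\ws\tr) \le 4\sigma^2 d \log(\tfrac{d}{\delta}\tfrac{\det V}{\det(\sigma^2\vartheta^{-2}I)}) + 2\sigma^2\vartheta^{-2}\norm{(\Ast\,\Bst)}_\frob^2$. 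The log-determinant ratio is $O(n\log T_\ws)$ once the trace bound on $V_\ws$ is in hand, and the regularization term is $O(n\sigma^2)$ since $\norm{(\Ast\,\Bst)}_\frob^2 \le n\vartheta^2$; combining gives the claimed $\tO(n^2\sigma^2)$. The main obstacle I anticipate is the lower bound $V_\ws \succeq (T_\ws\sigma^2/80)I$: unlike the upper bounds, it cannot be reduced to per-step norm control and genuinely requires a concentration statement for the \emph{smallest} eigenvalue of a sum of dependent rank-one matrices, so carefully setting up the predictable quadratic variation and invoking the right matrix martingale inequality (and verifying the polynomial threshold on $T_\ws$ that makes it bite) is where the real work lies.
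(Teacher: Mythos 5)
Three of your four blocks are essentially the paper's own proof. For the state-norm and trace bounds, the paper likewise rewrites the warm-up dynamics as the fixed strongly-stable closed loop $x_{t+1} = (\Ast+\Bst K_\ws)x_t + (\Bst\eta_t + w_t)$, applies \cref{lem:strg-stab-norm} to this system, and controls $\max_s\norm{\Bst\eta_s+w_s}$ by Hanson--Wright plus a union bound (this is its \cref{lem:xbound}); the trace bound is then exactly your termwise estimate $\trace(V_\ws)=\sum_t\norm{z_t}^2$. For the estimation error, the paper also applies \cref{lem:concentration} with regularizer $\sigma^2\vartheta^{-2}I$ and $(A_0\,B_0)=0$ (so $\Delta_0=-(\Ast\,\Bst)$ and $\norm{\Delta_0}_\frob^2\le d\vartheta^2$), and bounds the log-determinant ratio via the trace bound on $V_\ws$, just as you propose.

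The one place you take a genuinely different route is the lower bound $V_\ws\succeq(T_\ws\sigma^2/80)I$. The paper uses no matrix concentration at all: for a fixed unit vector $n$ it defines indicators $E_t=\ind{(n\tr z_t)^2>\sigma^2/4}$, shows $\E[E_t\mid\calF_{t-1}]\ge 1/5$ by Gaussian anti-concentration given a conditional-variance lower bound (\cref{lemma:varlb,lemma:probabilitylb}), applies scalar Azuma to $\sum_t E_t$ to get $n\tr V n\ge T_\ws\sigma^2/40$ in that single direction (\cref{lemma:singlevectorconcentration}), and then upgrades to the smallest eigenvalue via a $1/4$-net of the sphere (at most $12^n$ points) and a union bound. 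Your matrix-Freedman route would deliver the operator-norm statement in one shot, with a polynomial threshold on $T_\ws$ of the same order, so it is a legitimate alternative; but it requires almost-surely bounded increments, so you must truncate the unbounded terms $z_tz_t\tr$ on the high-probability norm event and then verify that truncation does not erode the conditional-covariance lower bound. The paper's indicator trick gets boundedness for free---$E_t\in\{0,1\}$ no matter how large $z_t$ is---which is exactly why it avoids these complications.

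There is also one step in your sketch whose stated justification is invalid, even though the claim itself is true. You argue $\E[z_tz_t\tr\mid\calF_{t-1}]\succeq c\,\sigma^2 I$ ``because the action block is $\succeq\sigma^2 I$ and the state block is $\succeq\sigma^2 I$.'' Lower bounds on the two diagonal blocks do not lower-bound the full matrix, because of the cross block $\Sigma_x K_\ws\tr$, where $\Sigma_x=\E[x_tx_t\tr\mid\calF_{t-1}]$: with no injected noise the matrix $\begin{psmallmatrix}I\\K_\ws\end{psmallmatrix}\Sigma_x\begin{psmallmatrix}I\\K_\ws\end{psmallmatrix}\tr$ has rank $d$, hence is singular, even when both of its diagonal blocks are well-conditioned. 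The paper's \cref{lemma:varlb} closes this gap with an explicit computation: writing the conditional covariance as $\begin{psmallmatrix}I\\K_\ws\end{psmallmatrix}\Sigma_x\begin{psmallmatrix}I\\K_\ws\end{psmallmatrix}\tr$ plus the injected-noise block and using $\Sigma_x\succeq\sigma^2 I$ and $\norm{K_\ws}\le\kappa_\ws$, it exhibits the decomposition $\E[z_tz_t\tr\mid\calF_{t-1}] \succeq \tfrac{\sigma^2}{2}I + \sigma^2\begin{psmallmatrix}\sqrt{2}^{-1}I\\ \sqrt{2}K_\ws\end{psmallmatrix}\begin{psmallmatrix}\sqrt{2}^{-1}I\\ \sqrt{2}K_\ws\end{psmallmatrix}\tr\succeq\tfrac{\sigma^2}{2}I$. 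This is a short fix, but it is the crux of why the injected action noise matters, and your proof must include it (whichever concentration argument you then use on top).
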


With \cref{thm:Vws} in hand, the proof of \cref{corr:regretplusws} readily follows; see details in \cref{sec:regretpluswsproof}.
The proof of \cref{thm:Vws} itself is based on adaptations of techniques developed in \citet{simchowitz2018learning} in the context of identification of Linear Dynamical Systems, and is given in \cref{sec:warmstartproof}.

\subsection*{Acknowledgments}

We thank Yoram Singer and Kunal Talwar for many helpful discussions.
YM was supported in part by a grant from the Israel Science Foundation (ISF).
AC thanks Lotem Peled for her assistance and support.


\setlength{\bibsep}{1.5ex plus 0.3ex}
\bibliographystyle{plainnat}
\bibliography{lqr}

\begin{thebibliography}{28}
\providecommand{\natexlab}[1]{#1}
\providecommand{\url}[1]{\texttt{#1}}
\expandafter\ifx\csname urlstyle\endcsname\relax
  \providecommand{\doi}[1]{doi: #1}\else
  \providecommand{\doi}{doi: \begingroup \urlstyle{rm}\Url}\fi

\bibitem[Abbasi-Yadkori and Szepesv{\'a}ri(2011)]{abbasi2011regret}
Yasin Abbasi-Yadkori and Csaba Szepesv{\'a}ri.
\newblock Regret bounds for the adaptive control of linear quadratic systems.
\newblock In \emph{Proceedings of the 24th Annual Conference on Learning
  Theory}, pages 1--26, 2011.

\bibitem[Abbasi-Yadkori et~al.(2011)Abbasi-Yadkori, P{\'a}l, and
  Szepesv{\'a}ri]{abbasi2011improved}
Yasin Abbasi-Yadkori, D{\'a}vid P{\'a}l, and Csaba Szepesv{\'a}ri.
\newblock Improved algorithms for linear stochastic bandits.
\newblock In \emph{Advances in Neural Information Processing Systems}, pages
  2312--2320, 2011.

\bibitem[Abbasi-Yadkori et~al.(2018)Abbasi-Yadkori, Lazic, and
  Szepesvari]{abbasi2018regret}
Yasin Abbasi-Yadkori, Nevena Lazic, and Csaba Szepesvari.
\newblock Regret bounds for model-free linear quadratic control.
\newblock \emph{arXiv preprint arXiv:1804.06021}, 2018.

\bibitem[Abeille and Lazaric(2017)]{abeille2017thompson}
Marc Abeille and Alessandro Lazaric.
\newblock Thompson sampling for linear-quadratic control problems.
\newblock \emph{arXiv preprint arXiv:1703.08972}, 2017.

\bibitem[Abeille and Lazaric(2018)]{abeille2018improved}
Marc Abeille and Alessandro Lazaric.
\newblock Improved regret bounds for thompson sampling in linear quadratic
  control problems.
\newblock In \emph{International Conference on Machine Learning}, pages 1--9,
  2018.

\bibitem[Arora et~al.(2018)Arora, Hazan, Lee, Singh, Zhang, and
  Zhang]{arora2018towards}
Sanjeev Arora, Elad Hazan, Holden Lee, Karan Singh, Cyril Zhang, and Yi~Zhang.
\newblock Towards provable control for unknown linear dynamical systems, 2018.
\newblock URL \url{https://openreview.net/forum?id=BygpQlbA-}.

\bibitem[Azuma(1967)]{azuma1967weighted}
Kazuoki Azuma.
\newblock Weighted sums of certain dependent random variables.
\newblock \emph{Tohoku Mathematical Journal, Second Series}, 19\penalty0
  (3):\penalty0 357--367, 1967.

\bibitem[Berm{\'u}dez and Martinez(1994)]{bermudez1994state}
Alfredo Berm{\'u}dez and Aurea Martinez.
\newblock A state constrained optimal control problem related to the
  sterilization of canned foods.
\newblock \emph{Automatica}, 30\penalty0 (2):\penalty0 319--329, 1994.

\bibitem[Bertsekas et~al.(2005)Bertsekas, Bertsekas, Bertsekas, and
  Bertsekas]{bertsekas2005dynamic}
Dimitri~P Bertsekas, Dimitri~P Bertsekas, Dimitri~P Bertsekas, and Dimitri~P
  Bertsekas.
\newblock \emph{Dynamic programming and optimal control}, volume~1.
\newblock Athena scientific Belmont, MA, 2005.

\bibitem[Brafman and Tennenholtz(2002)]{brafman2002r}
Ronen~I Brafman and Moshe Tennenholtz.
\newblock R-max-a general polynomial time algorithm for near-optimal
  reinforcement learning.
\newblock \emph{Journal of Machine Learning Research}, 3\penalty0
  (Oct):\penalty0 213--231, 2002.

\bibitem[Chen and Islam(2005)]{chen2005optimal}
Ping Chen and Sardar~MN Islam.
\newblock \emph{Optimal control models in finance}.
\newblock Springer, 2005.

\bibitem[Cohen et~al.(2018)Cohen, Hassidim, Koren, Lazic, Mansour, and
  Talwar]{cohen2018online}
Alon Cohen, Avinatan Hassidim, Tomer Koren, Nevena Lazic, Yishay Mansour, and
  Kunal Talwar.
\newblock Online linear quadratic control.
\newblock In \emph{Proceedings of the 35th International Conference on Machine
  Learning}, pages 1029--1038, 2018.

\bibitem[Dean et~al.(2017)Dean, Mania, Matni, Recht, and Tu]{dean2017sample}
Sarah Dean, Horia Mania, Nikolai Matni, Benjamin Recht, and Stephen Tu.
\newblock On the sample complexity of the linear quadratic regulator.
\newblock \emph{arXiv preprint arXiv:1710.01688}, 2017.

\bibitem[Dean et~al.(2018)Dean, Mania, Matni, Recht, and Tu]{dean2018regret}
Sarah Dean, Horia Mania, Nikolai Matni, Benjamin Recht, and Stephen Tu.
\newblock Regret bounds for robust adaptive control of the linear quadratic
  regulator.
\newblock \emph{arXiv preprint arXiv:1805.09388}, 2018.

\bibitem[Faradonbeh et~al.(2017)Faradonbeh, Tewari, and
  Michailidis]{faradonbeh2017finite}
Mohamad Kazem~Shirani Faradonbeh, Ambuj Tewari, and George Michailidis.
\newblock Finite time analysis of optimal adaptive policies for
  linear-quadratic systems.
\newblock \emph{arXiv preprint arXiv:1711.07230}, 2017.

\bibitem[Fazel et~al.(2018)Fazel, Ge, Kakade, and Mesbahi]{fazel2018global}
Maryam Fazel, Rong Ge, Sham Kakade, and Mehran Mesbahi.
\newblock Global convergence of policy gradient methods for the linear
  quadratic regulator.
\newblock In \emph{International Conference on Machine Learning}, pages
  1466--1475, 2018.

\bibitem[Geering(2007)]{geering2007optimal}
Hans~P Geering.
\newblock Optimal control with engineering applications.
\newblock \emph{Berlin Heidelberg}, 2007.

\bibitem[Hanson and Wright(1971)]{hanson1971bound}
David~Lee Hanson and Farroll~Tim Wright.
\newblock A bound on tail probabilities for quadratic forms in independent
  random variables.
\newblock \emph{The Annals of Mathematical Statistics}, 42\penalty0
  (3):\penalty0 1079--1083, 1971.

\bibitem[Hsu et~al.(2012)Hsu, Kakade, Zhang, et~al.]{hsu2012tail}
Daniel Hsu, Sham Kakade, Tong Zhang, et~al.
\newblock A tail inequality for quadratic forms of subgaussian random vectors.
\newblock \emph{Electronic Communications in Probability}, 17, 2012.

\bibitem[Ibrahimi et~al.(2012)Ibrahimi, Javanmard, and
  Roy]{ibrahimi2012efficient}
Morteza Ibrahimi, Adel Javanmard, and Benjamin~V Roy.
\newblock Efficient reinforcement learning for high dimensional linear
  quadratic systems.
\newblock In \emph{Advances in Neural Information Processing Systems}, pages
  2636--2644, 2012.

\bibitem[Jaksch et~al.(2010)Jaksch, Ortner, and Auer]{jaksch2010near}
Thomas Jaksch, Ronald Ortner, and Peter Auer.
\newblock Near-optimal regret bounds for reinforcement learning.
\newblock \emph{Journal of Machine Learning Research}, 11\penalty0
  (Apr):\penalty0 1563--1600, 2010.

\bibitem[Lai and Robbins(1985)]{lai1985asymptotically}
Tze~Leung Lai and Herbert Robbins.
\newblock Asymptotically efficient adaptive allocation rules.
\newblock \emph{Advances in applied mathematics}, 6\penalty0 (1):\penalty0
  4--22, 1985.

\bibitem[Lenhart and Workman(2007)]{lenhart2007optimal}
Suzanne Lenhart and John~T Workman.
\newblock \emph{Optimal control applied to biological models}.
\newblock Crc Press, 2007.

\bibitem[Malik et~al.(2018)Malik, Pananjady, Bhatia, Khamaru, Bartlett, and
  Wainwright]{malik2018derivative}
Dhruv Malik, Ashwin Pananjady, Kush Bhatia, Koulik Khamaru, Peter~L Bartlett,
  and Martin~J Wainwright.
\newblock Derivative-free methods for policy optimization: Guarantees for
  linear quadratic systems.
\newblock \emph{arXiv preprint arXiv:1812.08305}, 2018.

\bibitem[Ouyang et~al.(2017)Ouyang, Gagrani, and Jain]{ouyang2017learning}
Yi~Ouyang, Mukul Gagrani, and Rahul Jain.
\newblock Learning-based control of unknown linear systems with thompson
  sampling.
\newblock \emph{arXiv preprint arXiv:1709.04047}, 2017.

\bibitem[Simchowitz et~al.(2018)Simchowitz, Mania, Tu, Jordan, and
  Recht]{simchowitz2018learning}
Max Simchowitz, Horia Mania, Stephen Tu, Michael~I Jordan, and Benjamin Recht.
\newblock Learning without mixing: Towards a sharp analysis of linear system
  identification.
\newblock \emph{arXiv preprint arXiv:1802.08334}, 2018.

\bibitem[Whittle(1996)]{whittle1996optimal}
Peter Whittle.
\newblock \emph{Optimal control: basics and beyond}.
\newblock John Wiley \& Sons, Inc., 1996.

\bibitem[Wright(1973)]{wright1973bound}
Farrol~Tim Wright.
\newblock A bound on tail probabilities for quadratic forms in independent
  random variables whose distributions are not necessarily symmetric.
\newblock \emph{The Annals of Probability}, pages 1068--1070, 1973.

\end{thebibliography}


\appendix

\section{Preliminaries}

\subsection{Concentration inequalities} 
\label{sec:concentration}

First, we state a variant of the Hanson-Wright inequality \citep{hanson1971bound,wright1973bound}, which can be found in \cite{hsu2012tail}.

\begin{theorem}[Hanson-Wright inequality] \label{thm:hanson}
Let $x \sim \mathcal{N}(0,I_n)$ be a Gaussian random vector and let $A \in \reals^{m \times n}$.
For all $z > 0$,
\begin{align*}
    \Pr\Lrbra{ \norm{Ax}^2 - \norm{A}_\frob^2 > 2 \norm{A}_\frob \spnorm{A} \sqrt{z} + 2 \norm{A}^2 z }
    <
    e^{-z}
    .
\end{align*}
In particular, if $x \sim \mathcal{N}(0,\Sigma)$ then $\bbE\norm{Ax}^2 = \trace(A \Sigma A\tr)$ and for any $z \geq 1$,
\begin{align*}
    \Pr\Lrbra{ \norm{Ax}^2 - \bbE\norm{Ax}^2 > 4z \norm{\Sigma} \spnorm{A} \norm{A}_\frob }
    <
    e^{-z}
    .
\end{align*}
\end{theorem}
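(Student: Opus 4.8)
The plan is to reduce to the isotropic case, diagonalize the quadratic form, control the moment generating function of the resulting weighted sum of chi-squares, and then run a Chernoff bound tuned to reproduce exactly the stated threshold. \textbf{Diagonalization.} Write $M = A\tr A \succeq 0$ and let $\lambda_1,\dots,\lambda_n \ge 0$ be its eigenvalues, so that $\norm{A}_\frob^2 = \trace(M) = \sum_i \lambda_i$ and $\norm{A}^2 = \max_i \lambda_i$. Since $\norm{Ax}^2 = x\tr M x$ and $x \sim \mathcal{N}(0,I_n)$ is rotationally invariant, $\norm{Ax}^2$ has the same law as $\sum_i \lambda_i g_i^2$ for i.i.d.\ $g_i \sim \mathcal{N}(0,1)$; hence the centered variable is $Y = \sum_i \lambda_i (g_i^2-1)$, and it remains to bound its upper tail.

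\textbf{MGF bound.} Using $\bbE e^{s(g^2-1)} = e^{-s}(1-2s)^{-1/2}$ for $s<\half$ together with independence, the cumulant generating function is $\psi(s) = \sum_i\lr{-s\lambda_i - \half\log(1-2s\lambda_i)}$, valid for $0 \le s < 1/(2\norm{A}^2)$. Plugging in the elementary inequality $-u-\half\log(1-2u) \le u^2/(1-2u)$ (which holds on $[0,\half)$, as one checks by differentiating) and bounding $1-2s\lambda_i \ge 1-2s\norm{A}^2$ termwise gives
\[
    \psi(s)
    \le
    \frac{s^2 \sum_i \lambda_i^2}{1-2s\norm{A}^2}
    \le
    \frac{s^2\,\norm{A}^2\,\norm{A}_\frob^2}{1-2s\norm{A}^2}
    ,
\]
where the last step uses $\sum_i \lambda_i^2 \le (\max_i\lambda_i)\sum_i\lambda_i = \norm{A}^2\norm{A}_\frob^2$. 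This exhibits $Y$ as sub-gamma with variance proxy $v = 2\norm{A}^2\norm{A}_\frob^2$ and scale $c = 2\norm{A}^2$, i.e.\ $\psi(s) \le s^2 v/(2(1-cs))$.

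\textbf{Chernoff step and general covariance.} For such a variable I would bound $\Pr[Y > \tau] \le \exp(-s\tau + \psi(s))$ and take $s = \sqrt{2z/v}/(1 + c\sqrt{2z/v}) < 1/c$, which at $\tau = \sqrt{2vz}+cz$ makes the exponent equal exactly $-z$; substituting $v,c$ turns the threshold into $2\norm{A}_\frob\norm{A}\sqrt{z} + 2\norm{A}^2 z$, yielding the first claim. For the second, write $x = \Sigma^{1/2}g$ with $g\sim\mathcal{N}(0,I)$, so that $\norm{Ax}^2 = \norm{A\Sigma^{1/2}g}^2$ and $\bbE\norm{Ax}^2 = \norm{A\Sigma^{1/2}}_\frob^2 = \trace(A\Sigma A\tr)$. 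Applying the first part to $A' = A\Sigma^{1/2}$ and using $\norm{A'}\le\norm{A}\norm{\Sigma}^{1/2}$ and $\norm{A'}_\frob\le\norm{\Sigma}^{1/2}\norm{A}_\frob$, the threshold is at most $2\norm{\Sigma}\norm{A}_\frob\norm{A}\sqrt{z}+2\norm{\Sigma}\norm{A}^2 z$; since $z\ge 1$ gives $\sqrt z\le z$ and always $\norm{A}\le\norm{A}_\frob$, both terms are bounded by $2z\norm{\Sigma}\norm{A}\norm{A}_\frob$, for a total of $4z\norm{\Sigma}\norm{A}\norm{A}_\frob$ as claimed.

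\textbf{Main obstacle.} Conceptually the argument is routine; the only delicate point is the Chernoff step, where the constants must come out to be precisely the $2$'s in the threshold rather than some larger absolute constant. This is a one-variable optimization of $s\tau - s^2v/(2(1-cs))$ over $0\le s<1/c$, and the explicit choice above makes the exponent equal $z$ identically, so the clean form $\sqrt{2vz}+cz$ is matched exactly; everything else is bookkeeping with submultiplicativity of the norms.
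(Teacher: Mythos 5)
Your proof is correct, but note that the paper itself never proves this statement: \cref{thm:hanson} is imported verbatim from the literature (the paper cites \citet{hanson1971bound,wright1973bound} and points to \citet{hsu2012tail} for the form used), so there is no internal proof to compare against. Your derivation is the classical one for Gaussian chaos of order two --- diagonalize $A\tr A$ by rotational invariance, reduce to a weighted sum of centered $\chi^2$ variables, bound the cumulant generating function via $-u-\tfrac12\log(1-2u)\le u^2/(1-2u)$, and run the sub-gamma Chernoff bound --- which is essentially the Laurent--Massart argument, and it does recover the stated constants exactly: with $v=2\spnorm{A}^2\norm{A}_\frob^2$ and $c=2\spnorm{A}^2$, your choice of $s$ makes the exponent identically $-z$ at threshold $\sqrt{2vz}+cz = 2\norm{A}_\frob\spnorm{A}\sqrt{z}+2\spnorm{A}^2z$. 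The reduction of the second claim to the first via $A'=A\Sigma^{1/2}$, submultiplicativity ($\norm{A'}_\frob\le\norm{\Sigma}^{1/2}\norm{A}_\frob$, $\spnorm{A'}\le\norm{\Sigma}^{1/2}\spnorm{A}$), and the crude bounds $\sqrt z\le z$, $\spnorm{A}\le\norm{A}_\frob$ for $z\ge 1$ is also sound. The only cosmetic discrepancy is that Chernoff naturally yields a non-strict inequality $\le e^{-z}$ whereas the statement is written with strict inequality; this is immaterial (and can be patched by observing the Markov step is strict whenever $\Pr[Y\le\tau]>0$), so there is no genuine gap.
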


The following is Azuma's inequality for concentration of martingales with bounded differences.

\begin{theorem}[\citealp{azuma1967weighted}]
    \label{thm:azuma}
    Let $X_1,\ldots,X_N$ be a martingale difference sequence such that
    $\lvert X_i \rvert \le c$ for all $i=1,\ldots,n$. 
    Then,
    \[
        \Pr \lrbra{ \sum_{i=1}^n X_i > t }
        \leq 
        \exp \lr{-\frac{t^2}{2 n c^2}}
        ~.
    \]
\end{theorem}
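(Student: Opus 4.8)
The plan is to use the standard exponential-moment (Chernoff) method combined with a conditional moment-generating-function bound. Write $S_n = \sum_{i=1}^n X_i$ and let $\calF_0 \subseteq \calF_1 \subseteq \cdots$ be a filtration with respect to which $(X_i)$ is a martingale difference sequence, so that $\E[X_i \mid \calF_{i-1}] = 0$. For any $\lambda > 0$, Markov's inequality applied to the nonnegative random variable $e^{\lambda S_n}$ gives
\[
    \Pr\lrbra{S_n > t}
    =
    \Pr\lrbra{e^{\lambda S_n} > e^{\lambda t}}
    \le
    e^{-\lambda t}\, \E\lrbra{e^{\lambda S_n}}
    ~.
\]
Everything then reduces to controlling the moment generating function $\E[e^{\lambda S_n}]$, which I would do by peeling off one term at a time from the top of the sum.

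The core of the argument is a conditional MGF estimate (Hoeffding's lemma): for a random variable $X$ with $\E[X\mid\calF]=0$ and $\lvert X\rvert \le c$, one has $\E[e^{\lambda X}\mid\calF] \le e^{\lambda^2 c^2/2}$. To establish it I would invoke convexity of $x \mapsto e^{\lambda x}$ on $[-c,c]$ to bound it by the chord through the endpoints, namely $e^{\lambda x} \le \tfrac{c-x}{2c}e^{-\lambda c} + \tfrac{c+x}{2c}e^{\lambda c}$; taking the conditional expectation annihilates the linear term since $\E[X\mid\calF]=0$, leaving $\cosh(\lambda c)$, and finally $\cosh(y)\le e^{y^2/2}$ (by comparing Taylor coefficients and using $(2k)! \ge 2^k k!$). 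Applying this with $X = X_n$ and $\calF = \calF_{n-1}$, and using the tower property together with the fact that $e^{\lambda S_{n-1}}$ is $\calF_{n-1}$-measurable, gives
\[
    \E\lrbra{e^{\lambda S_n}}
    =
    \E\lrbra{e^{\lambda S_{n-1}}\,\E\lrbra{e^{\lambda X_n}\mid \calF_{n-1}}}
    \le
    e^{\lambda^2 c^2/2}\,\E\lrbra{e^{\lambda S_{n-1}}}
    ~.
\]
Iterating this recursion $n$ times, starting from $S_0 = 0$, yields $\E[e^{\lambda S_n}] \le e^{n\lambda^2 c^2/2}$.

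Combining the two displays gives $\Pr[S_n > t] \le \exp\lr{-\lambda t + \thalf n\lambda^2 c^2}$ for every $\lambda > 0$, and it remains only to optimize the free parameter. The exponent is a convex quadratic in $\lambda$ minimized at $\lambda = t/(nc^2)$, where its value is $-t^2/(2nc^2)$, delivering the claimed bound $\exp\lr{-t^2/(2nc^2)}$. I expect the main obstacle to be the conditional Hoeffding lemma, in particular obtaining the sharp constant (the factor $c^2/2$ appropriate to the symmetric interval $[-c,c]$) and carefully arranging the conditioning so that the martingale-difference property $\E[X_i\mid\calF_{i-1}]=0$ is correctly invoked at each peeling step; the surrounding Chernoff bound and the final scalar optimization over $\lambda$ are routine.
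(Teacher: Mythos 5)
Your proof is correct and is the canonical argument for Azuma's inequality: the Chernoff exponential-moment bound, a conditional Hoeffding lemma proved via convexity and $\cosh(y)\le e^{y^2/2}$, the tower-property peeling, and the final optimization $\lambda = t/(nc^2)$ are all carried out accurately with the sharp constant. The paper itself states this result as a citation to \citet{azuma1967weighted} without proof, and your argument matches the standard proof of that cited theorem.
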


The following is is a self-normalized concentration inequality for vector-valued martingales useful for guaranteeing generalization in linear regression.

\begin{theorem}[\citealp{abbasi2011improved}]
\label{thm:regressionconcentration}
Let $(\calF_t)_{t=0}^\infty$ be a filtration and let $(\eta_t)_{t=1}^\infty$ be a real-valued martingale difference sequence adapted to $(\calF_t)$ such that $\eta_{t}$ is $R$-sub-Gaussian conditioned on $\calF_{t-1}$, that is,
\[
    \bbE \big[ e^{\lambda \eta_t} \bigm\vert \calF_{t-1} \big] 
    \leq 
    e^{\lambda^2 R^2 / 2}
    ,
    \qquad
    \forall ~ t \geq 1
    ~.
\]
Further, let $(u_t)_{t=1}^\infty$ be an $\bbR^n$-valued stochastic process adapted to $(\calF_{t-1})_{t=1}^\infty$, let $V \in \reals^{n \times n}$ positive definite matrix, and define
\[
    U_t = \sum_{s=1}^{t-1} \eta_s u_s
    ~,
    \qquad
    V_t 
    = 
    V + \sum_{s=1}^{t-1} u_{s} u_{s}\tr~,
    \qquad 
    t=1,2,\ldots
    .
\]
Then, for any $\delta \in (0,1)$ we have with probability at least $1-\delta$ that
\[
    U_t\tr V_t^{-1} U_t 
    \le
    2R^2 \log \LR{\frac{1}{\delta} \frac{\det(V_t)}{\det(V)}}
    \qquad
    \forall ~ t=1,2,\ldots 
    ~.
\]
\end{theorem}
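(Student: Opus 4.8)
The plan is to use the \emph{method of mixtures} (pseudo-maximization): construct a one-parameter family of exponential supermartingales indexed by $\lambda \in \reals^n$, average them against a Gaussian prior to obtain a single nonnegative supermartingale whose value has a closed form that exhibits precisely the ratio $\det(V_t)/\det(V)$, and then apply a maximal inequality to control this quantity uniformly over all $t$.

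First I would fix $\lambda \in \reals^n$ and define
\[
    M_t^\lambda
    =
    \exp\LR{ \frac{1}{R}\lambda\tr U_t - \thalf \lambda\tr \LR{\tsum_{s=1}^{t-1} u_s u_s\tr} \lambda }
    ,
\]
which is adapted to $(\calF_{t-1})$ with $M_1^\lambda = 1$. I would verify it is a supermartingale: the one-step ratio equals $\exp(R^{-1}\eta_t (\lambda\tr u_t) - \thalf (\lambda\tr u_t)^2)$, and since $u_t$ is $\calF_{t-1}$-measurable, conditioning on $\calF_{t-1}$ and invoking the $R$-sub-Gaussianity of $\eta_t$ with coefficient $\lambda\tr u_t/R$ bounds the conditional expectation of the stochastic factor by $\exp(\thalf (\lambda\tr u_t)^2)$, giving $\E[M_{t+1}^\lambda \mid \calF_{t-1}] \le M_t^\lambda$.

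Next I would mix over $\lambda$: let $h$ be the density of $\calN(0,V^{-1})$ and set $M_t = \int M_t^\lambda \, h(\lambda)\, d\lambda$. By Tonelli, $M_t$ is again a nonnegative supermartingale with $\E[M_1] \le 1$. The key gain is that this integral is an explicit Gaussian integral: the exponent is quadratic in $\lambda$ with Hessian $V_t = V + \sum_{s=1}^{t-1} u_s u_s\tr$ and linear term $R^{-1}U_t$, so completing the square (the optimum is at $\lambda = R^{-1}V_t^{-1}U_t$) and evaluating the Gaussian integral yields the pathwise identity
\[
    M_t
    =
    \LR{ \frac{\det V}{\det V_t} }^{1/2} \exp\LR{ \frac{1}{2R^2} U_t\tr V_t^{-1} U_t }
    .
\]
I would then apply a maximal inequality for nonnegative supermartingales (Ville's inequality, or a stopped-supermartingale argument via optional stopping followed by Markov) to obtain $\Pr[\sup_t M_t \ge 1/\delta] \le \delta$. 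On the complementary event, taking logarithms in the closed form and rearranging gives $U_t\tr V_t^{-1} U_t < 2R^2\log(1/\delta) + R^2\log(\det V_t/\det V)$ for all $t$ simultaneously; since $V_t \succeq V$ forces $\det V_t \ge \det V$, the middle term is nonnegative and can be bounded by twice itself, absorbing everything into the stated bound $2R^2\log((1/\delta)\,\det V_t/\det V)$.

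The main obstacle is making the ``for all $t$'' claim rigorous. The closed-form identity for $M_t$ holds along each path, but converting $\E[M_t] \le 1$ into a uniform-in-$t$ high-probability statement requires the supermartingale maximal inequality (equivalently, stopping $M_t$ at a suitable time and applying optional stopping), together with the Tonelli/Fubini justification that mixing preserves the supermartingale property and that expectation and integration over $\lambda$ may be interchanged. The remaining ingredients---the one-step sub-Gaussian supermartingale bound and the Gaussian integral---are routine computations.
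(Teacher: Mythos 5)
Your proposal is correct. The paper does not prove this theorem itself---it imports it verbatim from \citet{abbasi2011improved}---and your argument is essentially the proof given in that cited source: the same exponential supermartingale $M_t^\lambda$, the same Gaussian mixture against $\calN(0,V^{-1})$ evaluated in closed form to produce the $\det(V_t)/\det(V)$ ratio, and the same supermartingale maximal inequality (Ville's inequality / stopping-time argument) to get the uniform-in-$t$ statement, including the final observation that $V_t \succeq V$ lets the sharper $R^2\log(\det V_t/\det V)$ term be absorbed into the stated $2R^2$ bound.
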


\subsection{Technical Lemmas}

\begin{lemma} \label{lem:semidefinitebound}
Let $X$ and $\Delta$ be matrices of matching sizes and assume $\Delta\tr \Delta \preceq V^{-1}$ for some matrix $V \succ 0$. Then for any $P \succeq 0$ and 
$\mu \ge 1 + 2 \spnorm{X} \spnorm{V}^{1/2}$,
\[
    -\mu \trnorm{P} V^{-1}
    ~\preceq~
    (X + \Delta)\tr P (X+\Delta) - X\tr P X 
    ~\preceq~ 
    \mu \trnorm{P} V^{-1}
    ~.
\]
\end{lemma}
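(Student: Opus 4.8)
The plan is to expand the difference and bound its cross part and its quadratic remainder separately, mirroring the proof of \cref{lem:sigmabound} but tracking the Loewner order throughout rather than the operator norm. Writing $D = (X+\Delta)\tr P (X+\Delta) - X\tr P X$, a direct expansion gives
\[
D = \Delta\tr P X + X\tr P \Delta + \Delta\tr P \Delta,
\]
so it suffices to sandwich the symmetric cross term $\Delta\tr P X + X\tr P \Delta$ and the quadratic remainder $\Delta\tr P \Delta$ between multiples of $\pm V^{-1}$.

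For the quadratic term I would use that $P \succeq 0$ implies $P \preceq \spnorm{P} I \preceq \trnorm{P} I$, whence $0 \preceq \Delta\tr P \Delta \preceq \trnorm{P}\, \Delta\tr \Delta \preceq \trnorm{P} V^{-1}$, invoking the hypothesis $\Delta\tr \Delta \preceq V^{-1}$. This settles the easy half. The heart of the argument is the cross term, and here the obstacle is that the natural completion-of-squares bound $\pm(\Delta\tr P X + X\tr P \Delta) \preceq \epsilon \Delta\tr P \Delta + \epsilon^{-1} X\tr P X$ introduces an $X\tr P X$ term that is \emph{not} controlled by any multiple of $V^{-1}$. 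Instead I would pass to quadratic forms: for an arbitrary vector $v$, Cauchy--Schwarz gives
\[
\lrabs{v\tr(\Delta\tr P X + X\tr P \Delta) v} = 2\lrabs{(P^{1/2}\Delta v)\tr (P^{1/2} X v)} \le 2\norm{P^{1/2}\Delta v}\,\norm{P^{1/2}X v}.
\]
The first factor obeys $\norm{P^{1/2}\Delta v}^2 = v\tr \Delta\tr P \Delta\, v \le \trnorm{P}\, v\tr V^{-1} v$ by the quadratic-term bound, while the second obeys $\norm{P^{1/2}X v}^2 \le \spnorm{X}^2 \trnorm{P}\norm{v}^2$. The crucial conversion step is the elementary inequality $\norm{v}^2 \le \spnorm{V}\, v\tr V^{-1} v$, valid since $V^{-1} \succeq \spnorm{V}^{-1} I$; substituting it trades the bare $\norm{v}^2$ for $\spnorm{V}\, v\tr V^{-1}v$. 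Multiplying the two factors then yields
\[
\lrabs{v\tr(\Delta\tr P X + X\tr P \Delta) v} \le 2\spnorm{X}\spnorm{V}^{1/2}\trnorm{P}\; v\tr V^{-1} v,
\]
which in Loewner form reads $-2\spnorm{X}\spnorm{V}^{1/2}\trnorm{P} V^{-1} \preceq \Delta\tr P X + X\tr P \Delta \preceq 2\spnorm{X}\spnorm{V}^{1/2}\trnorm{P}V^{-1}$.

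Finally I would combine the two bounds. Adding the cross-term inequality to $0 \preceq \Delta\tr P \Delta \preceq \trnorm{P}V^{-1}$ gives $D \preceq (1 + 2\spnorm{X}\spnorm{V}^{1/2})\trnorm{P}V^{-1}$ and $D \succeq -2\spnorm{X}\spnorm{V}^{1/2}\trnorm{P}V^{-1}$; since $V^{-1} \succeq 0$ and $\mu \ge 1 + 2\spnorm{X}\spnorm{V}^{1/2}$, both right-hand sides are dominated by $\mu\trnorm{P}V^{-1}$ and $-\mu\trnorm{P}V^{-1}$ respectively, which is the claim. I expect the only subtle point to be the cross-term step---specifically, recognizing that $\norm{v}^2 \le \spnorm{V}\, v\tr V^{-1}v$ is exactly what is needed to trade an otherwise uncontrolled $X\tr P X$ for a multiple of $V^{-1}$; everything else reduces to routine matrix manipulation.
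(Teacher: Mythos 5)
Your proof is correct, but the contrast you draw with the completion-of-squares route is illusory: the step you dismiss as unworkable is precisely the paper's proof. The paper bounds the cross term by $X\tr P \Delta + \Delta\tr P X \preceq \epsilon^{-1} X\tr P X + \epsilon\, \Delta\tr P \Delta$ (from expanding $(\epsilon^{-1/2}X - \epsilon^{1/2}\Delta)\tr P\,(\epsilon^{-1/2}X - \epsilon^{1/2}\Delta) \succeq 0$) with the fixed choice $\epsilon = \spnorm{X}\spnorm{V}^{1/2}$, and then controls the term you call uncontrolled via $X\tr P X \preceq \spnorm{X}^2 \spnorm{P} I$ together with $I \preceq \spnorm{V} V^{-1}$ --- which is exactly the conversion $\norm{v}^2 \le \spnorm{V}\, v\tr V^{-1} v$ that you identify as the crucial step of your own argument. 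Your pointwise Cauchy--Schwarz is the same estimate with the optimal $\epsilon$ chosen per vector (since $2ab \le \epsilon^{-1}a^2 + \epsilon b^2$ optimized over $\epsilon$ recovers $2ab$), so the two arguments yield identical constants; what yours buys is never having to exhibit $\epsilon$, while the paper's stays at the level of matrix inequalities throughout rather than passing to quadratic forms. Incidentally, the paper's displayed chain drops a factor of $2$ in its last step (one has $\spnorm{V}^{-1/2} I + \spnorm{V}^{1/2}V^{-1} \preceq 2\spnorm{V}^{1/2}V^{-1}$, not $\spnorm{V}^{1/2}V^{-1}$); this is harmless because the hypothesis $\mu \ge 1 + 2\spnorm{X}\spnorm{V}^{1/2}$ already budgets for that factor, and your write-up tracks it correctly.
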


\begin{proof}
Note that $(X + \Delta)\tr P (X+\Delta) - X\tr P X = X\tr P \Delta + \Delta\tr P X + \Delta\tr P \Delta$. 
Let $\epsilon > 0$. We have
\begin{align*}
    X\tr P \Delta + \Delta\tr P X
    \preceq
    \epsilon^{-1} X\tr P X + \epsilon \Delta\tr P \Delta 
    ;
\end{align*}
this can be seen by expanding the inequality
$
    (\epsilon^{-1/2} X - \epsilon^{1/2} \Delta)\tr P (\epsilon^{-1/2} X - \epsilon^{1/2} \Delta)
    \succeq
    0
    .
$
Setting $\epsilon = \spnorm{X} \spnorm{V}^{1/2}$ and using our assumption that $\Delta\tr \Delta \preceq V^{-1}$ yields
\begin{align*}
    X\tr P \Delta + \Delta\tr P X
    &\preceq 
    \epsilon^{-1} X\tr P X + \epsilon \Delta\tr P \Delta
    \\
    &\preceq 
    \epsilon^{-1} \spnorm{X}^2 \spnorm{P} I + \epsilon 
    \spnorm{P} V^{-1}
    \\
    &= 
    \spnorm{X} \spnorm{P} \lr{ \spnorm{V}^{-1/2} I + 
    \spnorm{V}^{1/2} V^{-1}} 
    \tag{$\epsilon = \spnorm{X} \spnorm{V}^{1/2}$} \\
    &\preceq 
    \spnorm{X} \spnorm{P} \spnorm{V}^{1/2} V^{-1}~.
    \tag{$\spnorm{V}^{-1/2} I \preceq \spnorm{V}^{1/2} V^{-1}$}
\end{align*}
This, together with $\Delta\tr P \Delta \preceq \spnorm{P} \Delta\tr \Delta \preceq \spnorm{P} V^{-1}$,
proves one direction of the inequality.
For the other direction, a similar argument shows 
\begin{align*}
    (X + \Delta)\tr P (X+\Delta) - X\tr P X 
    \succeq 
    X\tr P \Delta + \Delta\tr P X 
    \succeq 
    -\spnorm{P} \spnorm{X} \spnorm{V}^{1/2} V^{-1}~
    &.\qedhere
\end{align*}
\end{proof}

\begin{lemma} \label{lem:lypstonglystable}
Let $X, Z$ be symmetric matrices of equal sizes and $Y$ a $(\kappa,\gamma)$-strongly stable matrix such that
$
    X 
    \preceq 
    Y\tr X Y + Z
$.
Then
$
    X 
    \preceq 
    (\kappa^2/\gamma) \spnorm{Z} I
$.
\end{lemma}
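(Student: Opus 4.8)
The plan is to unfold the recursive inequality $X \preceq Y\tr X Y + Z$ into a geometric series and then use the strong stability of $Y$ to sum it. The key structural fact I would rely on is that conjugation $A \mapsto Y\tr A Y$ is monotone with respect to the Loewner order: if $A \preceq B$ then $Y\tr A Y \preceq Y\tr B Y$. Applying this repeatedly to the copy of $X$ appearing on the right-hand side of the hypothesis, I get for every $n \ge 1$ that
\begin{align*}
    X
    \preceq
    (Y^n)\tr X (Y^n) + \sum_{i=0}^{n-1} (Y^i)\tr Z (Y^i)
    ~.
\end{align*}

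Next I would exploit $(\kappa,\gamma)$-strong stability. By definition there are $H \succ 0$ and $L$ with $Y = H L H^{-1}$, $\spnorm{L} \le 1-\gamma$, and $\spnorm{H}\spnorm{H^{-1}} \le \kappa$. Since $Y^i = H L^i H^{-1}$, this gives the uniform bound $\spnorm{Y^i} \le \spnorm{H}\spnorm{H^{-1}}\spnorm{L}^i \le \kappa (1-\gamma)^i$. In particular the remainder term satisfies $\spnorm{(Y^n)\tr X (Y^n)} \le \kappa^2(1-\gamma)^{2n}\spnorm{X} \to 0$ as $n \to \infty$, so it is dominated by $\eps I$ for every $\eps > 0$ once $n$ is large. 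Letting $n \to \infty$ therefore yields $X \preceq \sum_{i=0}^{\infty} (Y^i)\tr Z (Y^i)$, the series converging in operator norm because its terms decay geometrically.

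Finally I would bound each summand crudely: using $Z \preceq \spnorm{Z} I$ and $(Y^i)\tr (Y^i) \preceq \spnorm{Y^i}^2 I \le \kappa^2 (1-\gamma)^{2i} I$, I get
\begin{align*}
    X
    \preceq
    \spnorm{Z} \kappa^2 \sum_{i=0}^{\infty} (1-\gamma)^{2i} I
    =
    \frac{\kappa^2 \spnorm{Z}}{1-(1-\gamma)^2} I
    ~.
\end{align*}
Since $1-(1-\gamma)^2 = \gamma(2-\gamma) \ge \gamma$ for $\gamma \in (0,1]$, the right-hand side is at most $(\kappa^2/\gamma)\spnorm{Z}\,I$, which is the claimed bound.

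The argument is essentially a discrete Lyapunov estimate, so no single step is deep; the one point requiring genuine care is the passage to the limit. I must justify both that the tail remainder $(Y^n)\tr X (Y^n)$ vanishes (here the distinction highlighted in the paper's footnote matters — it is $\rho(Y) < 1$, realized quantitatively through the factorization $Y = HLH^{-1}$, rather than $\spnorm{Y}<1$, that forces $\spnorm{Y^n} \to 0$) and that taking $n \to \infty$ preserves the semidefinite inequality. The latter follows since the set $\{M : M \succeq 0\}$ is closed, so a limit of the valid finite-$n$ inequalities remains valid.
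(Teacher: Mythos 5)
Your proof is correct and takes essentially the same route as the paper's: both reduce the claim to $X \preceq \sum_{t=0}^\infty (Y^t)\tr Z\, Y^t$ and then bound that series identically via $\spnorm{Y^t} = \spnorm{H L^t H^{-1}} \le \kappa(1-\gamma)^t$ and a geometric sum. The only cosmetic difference is in how the series is obtained --- the paper writes $X = Y\tr X Y + Z - M$ with a slack matrix $M \succeq 0$ and invokes uniqueness of the solution to the discrete Lyapunov equation for stable $Y$, whereas you iterate the Loewner inequality finitely many times and pass to the limit using closedness of the PSD cone, which is, if anything, slightly more self-contained.
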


\begin{proof}
The inequality $X \preceq Y\tr X Y + Z$ implies there exists a matrix $M$ such that $M \succeq 0$, and $X = Y\tr X Y + Z - M$. As $Y$ is stable, the equation has a unique solution that satisfies:
\[
    X 
    = 
    \sum_{t=0}^\infty (Y^t)\tr (Z - M) Y^t 
    \preceq 
    \sum_{t=0}^\infty (Y^t)\tr Z Y^t
    .
\]
Let us proceed in bounding the norm of the right-hand side of this inequality.
As $Y$ is $(\kappa,\gamma)$-strongly stable, $Y = H L H^{-1}$ with $\spnorm{L} \le 1-\gamma$ and $\spnorm{H}\spnorm{H^{-1}} \le \kappa$. Therefore,
\[
\LRnorm{\sum_{t=0}^\infty (Y^t)\tr Z Y^t} 
\le \sum_{t=0}^\infty \spnorm{Y^t}^2 \spnorm{Z}~,
\]
and we have
\[
\spnorm{Y^t} 
= \spnorm{H L^t H^{-1}} 
\le \spnorm{H}\spnorm{H^{-1}} \spnorm{L} 
\le \kappa (1-\gamma)~.
\]
This implies that
\begin{align*}
    \LRnorm{\sum_{t=0}^\infty (Y^t)\tr Z Y^t} 
    \le \spnorm{Z} \sum_{t=0}^\infty \kappa^2 (1-\gamma)^{2t}
    \le \spnorm{Z} \kappa^2 \sum_{t=0}^\infty (1-\gamma)^{t}
    = \frac{\kappa^2}{\gamma} \spnorm{Z}
    &.\qedhere
\end{align*}
\end{proof}

\begin{lemma} \label{lem:det1}
For $M\succ 0$ and a vector $z$ such that $z\tr M^{-1} z \le 1$, 
\begin{align*}
    z\tr M^{-1} z
    \leq
    2 \log\frac{\det(M + z z\tr)}{\det M}
    ~.
\end{align*}
\end{lemma}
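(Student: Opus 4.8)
The plan is to reduce the matrix inequality to a one-dimensional inequality by means of the rank-one determinant identity, and then to verify that scalar inequality by an elementary concavity argument. The only role of the hypothesis $z\tr M^{-1} z \le 1$ is to confine the problem to the regime where the scalar bound holds, so I would keep track of it throughout.

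First I would simplify the right-hand side. Since $M \succ 0$ is invertible, the matrix determinant lemma (equivalently, $\det(I + uv\tr) = 1 + v\tr u$) gives
\[
    \det(M + z z\tr)
    =
    \det(M) \det\Lr{I + M^{-1} z z\tr}
    =
    \det(M)\Lr{1 + z\tr M^{-1} z}
    ~.
\]
Writing $a := z\tr M^{-1} z$, which is nonnegative because $M^{-1} \succ 0$ and which satisfies $a \le 1$ by assumption, this turns the claimed inequality into the purely scalar statement
\[
    a \le 2 \log(1 + a)
    \qquad \text{for } a \in [0,1]
    ~.
\]

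The second step is to establish this scalar inequality on $[0,1]$. I would argue by concavity: the map $a \mapsto \log(1+a)$ is concave and passes through the points $(0,0)$ and $(1,\log 2)$, so it lies above the chord joining them, i.e.\ $\log(1+a) \ge a \log 2$ for $a \in [0,1]$. Since $\log 2 > \tfrac12$, this yields $2\log(1+a) \ge 2a \log 2 \ge a$, as required. (An equivalent route is to set $f(a) = 2\log(1+a) - a$, note $f(0)=0$, and observe that $f'(a) = (1-a)/(1+a) \ge 0$ on $[0,1]$, so $f \ge 0$ there.) Chaining this back through the determinant identity completes the proof.

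There is no genuine obstacle here; the statement is elementary once the determinant ratio is collapsed to $1 + z\tr M^{-1}z$. The one point that deserves care is that the inequality $a \le 2\log(1+a)$ is \emph{false} for large $a$ (the two sides cross and $\log$ eventually grows too slowly), so the restriction $z\tr M^{-1} z \le 1$ is essential and must be used to pin down the interval $[0,1]$ on which the bound is valid.
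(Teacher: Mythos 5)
Your proof is correct and follows essentially the same route as the paper's: both collapse the determinant ratio via the rank-one determinant lemma to $1 + z\tr M^{-1} z$, and both then invoke concavity of $x \mapsto \log(1+x)$ on $[0,1]$ (the chord through $(0,0)$ and $(1,\log 2)$, with $\log 2 > \tfrac12$) to conclude. Your closing remark that the hypothesis $z\tr M^{-1} z \le 1$ is genuinely needed is a nice sanity check, but otherwise there is nothing to add.
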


\begin{proof}
Observe that 
$
    \det(M + z z\tr)
    = 
    \det(M) \det(I + M^{-1/2} z z\tr M^{-1/2} )
    =
    (1 + z\tr M^{-1} z) \det(M)
$ 
by the determinant lemma, and so
\[
    \log(1+ z\tr M^{-1} z) 
    = 
    \log \frac{\det(M + z z\tr)}{\det(M)}
    ~.
\]
The proof is finished using the concavity of $x \mapsto \log(1+x)$ and the fact that $0 \leq z\tr M^{-1} z \leq 1$:
\begin{align*}
    \log (1+z\tr M^{-1} z)
    \geq
    (1 - z\tr M^{-1} z) \log 1 + (z\tr M^{-1} z) \log 2
    \geq
    \thalf z\tr M^{-1} z
    ~.&\qedhere
\end{align*}
\end{proof}

\begin{lemma} \label{lem:det2}
If $N \succeq M \succ 0$, then for any vector $v$ one has
\begin{align*}
    v\tr N v
    \leq
    \frac{\det{N}}{\det{M}} \,
    v\tr M v
    .
\end{align*}
\end{lemma}

\begin{proof}
Note that the claimed inequality is equivalent to $N \preceq (\det(N)/\det(M)) M$, which in turn is equivalent to $\spnorm{M^{-1/2} N M^{-1/2}} \leq \det(M^{-1/2} N M^{-1/2})$.
The latter is true because $R = M^{-1/2} N M^{-1/2} \succeq I$, and so the product of the eigenvalues of $R$ (all of which are $\geq 1$) is no smaller than the maximal eigenvalue of $R$.
\end{proof}

\subsection{Proof of \cref{lem:strg-stab-norm}}
\label{sec:strg-stab-norm-proof}

\begin{proof}
Following the sequence $K_1,K_2,\ldots$ induces updates $x_{t+1} = (\Ast + \Bst K_t) x_t + w_t$. 
Thus
\begin{align*}
    x_t 
    = 
    M_1 x_1 + \sum_{s=1}^{t-1} M_{s+1} w_s
    ~,
\end{align*}
where
\begin{align*}
    M_t = I;
    \quad
    M_s = M_{s+1}\Lr{\Ast+\Bst K_s}= \prod_{j=s}^{t-1} (\Ast+\Bst K_j)
    ,
    \;\;\;
    \forall \; 1 \leq s \leq t-1
    .
\end{align*}

Since the sequence $K_1,K_2,\ldots$ is sequential strong stable, there exist matrices $H_1, H_2, \ldots$ and $L_1, L_2, \ldots$ such that $\Ast+\Bst K_j = H_j L_j H_j^{-1}$ with the properties specified in \cref{def:seq-strong-stab}.
Thus, we have for all $1 \le s < t$ that
\begin{align*}
    \spnorm{M_s}
    &=
    \LRnorm{\prod_{j=s}^{t-1} H_j L_j\tr H_j^{-1}}
    \\
    &\leq
    \spnorm{H_{t-1}} \, \LR{\prod_{j=s}^{t-1} \spnorm{L_j}} \, \LR{\prod_{j=s}^{t-2} \spnorm{H_{j+1}^{-1} H_j}} \, \spnorm{H_{s}^{-1}}
    \\
    &\leq 
    B_0 (1-\gamma)^{t-s} (1+\gamma/2)^{t-s} (1/b_0)
    \\
    &\leq
    \kappa (1-\gamma/2)^{t-s}
    ~.
\end{align*}
As $\kappa \ge 1$, the same holds for $M_t$.

Thus, for all $t \ge 1$,
\begin{align*}
    \norm{x_t}
    &\leq
    \spnorm{M_1} \norm{x_1} + \sum_{s=1}^{t-1} \spnorm{M_{s+1}} \norm{w_s}
    \\
    &\leq
    \kappa (1-\gamma/2)^{t-1} \norm{x_1} + \kappa \sum_{s=1}^{t-1} (1-\gamma/2)^{t-s-1} \norm{w_s}
    \\
    &\leq
    \kappa e^{-\gamma (t-1)/2} \norm{x_1} + \kappa \max_{1 \leq s < t} \norm{w_s} \sum_{t=1}^{\infty} (1-\gamma/2)^{t}
    \\
    &=
    \kappa e^{-\gamma (t-1)/2} \norm{x_1} + \frac{2\kappa}{\gamma} \max_{1 \leq s < t} \norm{w_s}
    ~.
    \qedhere
\end{align*}
\end{proof}

\section{Proofs of \cref{sec:analysis}}

For the proofs in this section, we require the following two simple lemmas.

\begin{lemma} \label{lem:boundnumofswitches}
Assume $T \ge 2$, $\lambda \ge 1$ and $\sum_{s=1}^t \norm{z_s}^2 \le 2 \beta t$. Let $V_t = \lambda I + \beta^{-1} \sum_{t=1}^{t-1} z_t z_t\tr$. Then
$$
    \log\frac{\det(V_t)}{\det(V_1)}
    \leq 
    2n \log T
$$.
\end{lemma}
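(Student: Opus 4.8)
The plan is to combine the arithmetic--geometric-mean inequality for the eigenvalues of a positive-definite matrix with the trace bound furnished by the hypothesis; there is no need for the usual telescoping ``elliptical potential'' argument since we only want a bound at a single index $t$ (with $t\le T$ implicit in the statement).

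First I would invoke the elementary fact that any positive-definite $M\in\reals^{n\times n}$ satisfies $\det(M)\le(\trace(M)/n)^n$, which is just AM--GM applied to the positive eigenvalues of $M$. Since $V_1=\lambda I$ and hence $\det(V_1)=\lambda^n$, this yields
\[
    \frac{\det(V_t)}{\det(V_1)}
    \le
    \LR{\frac{\trace(V_t)}{n\lambda}}^{n}
    ~.
\]
Next I would control the trace. By linearity and $\trace(z_s z_s\tr)=\norm{z_s}^2$,
\[
    \trace(V_t)
    =
    n\lambda + \frac{1}{\beta}\sum_{s=1}^{t-1}\norm{z_s}^2
    \le
    n\lambda + 2(t-1)
    \le
    n\lambda + 2(T-1)
    ~,
\]
where the first inequality applies the hypothesis at index $t-1$ and the second uses $t\le T$. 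Dividing by $n\lambda$ and using $\lambda\ge1$, $n\ge1$ (so $n\lambda\ge1$) gives $\trace(V_t)/(n\lambda)\le 1+2(T-1)=2T-1$. Combining the two displays,
\[
    \frac{\det(V_t)}{\det(V_1)}
    \le
    (2T-1)^{n}
    \le
    (T^2)^{n}
    =
    T^{2n}
    ~,
\]
the middle step being the numeric inequality $2T-1\le T^2$, i.e. $(T-1)^2\ge0$. Taking logarithms delivers $\log(\det(V_t)/\det(V_1))\le 2n\log T$.

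The computation is short, so the only genuine subtlety is the choice of constant in the trace bound, and this is where I expect the $T=2$ boundary case to bite. The cruder estimate $\sum_{s=1}^{t-1}\norm{z_s}^2\le 2\beta t$ would only give $\trace(V_t)/(n\lambda)\le 1+2T$, and $1+2T\le T^2$ \emph{fails} at $T=2$ (it reads $5\le4$); applying the hypothesis at $t-1$---equivalently, using that the sum has only $t-1$ terms---sharpens this to $2T-1\le T^2$, which holds for every $T\ge1$ and in particular covers the case $T=2$ singled out in the statement. No spectral estimate beyond the AM--GM determinant inequality is required.
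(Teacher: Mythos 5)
Your proof is correct and takes essentially the same route as the paper's: the paper writes $\log\frac{\det(V_t)}{\det(V_1)} = \log\det(V_1^{-1/2}V_tV_1^{-1/2}) \le n\log\spnorm{V_1^{-1/2}V_tV_1^{-1/2}} \le n\log\bigl(1+\tfrac{1}{\beta\lambda}\sum_{s=1}^{t-1}\norm{z_s}^2\bigr)$, which is your argument with the operator norm playing the role of your normalized trace---both reduce the determinant ratio to the $n$-th power of the same scalar, with no elliptical-potential telescoping. The one substantive difference is the endgame you flagged: the paper does take the crude route, bounding the scalar by $1+2T$ and then asserting $n\log(1+2T)\le 2n\log T$ for $T\ge 2$, a step that indeed fails at $T=2$ (it requires $T\ge 1+\sqrt{2}$), so your $2T-1\le T^2$ accounting repairs a genuine (if inconsequential) boundary slip in the paper's own proof. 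Note, however, that your repair uses $\sum_{s=1}^{t-1}\norm{z_s}^2\le 2\beta(t-1)$, i.e.\ the hypothesis at index $t-1$, which is not implied by the stated hypothesis at index $t$ alone; it does hold in the paper's application, where \cref{lem:sumofnormssq} yields the prefix bound at every index, so this is a matter of how the hypothesis is read rather than a gap in substance.
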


\begin{proof}
We have
\begin{align*}
    \log\frac{\det(V_t)}{\det(V_1)}
    &=
    \log \det (V_1^{-1/2} V_t V_1^{-1/2}) \\
    &\le 
    n \log \spnorm{V_1^{-1/2} V_t V_1^{-1/2}} 
    \\
    &\leq
    n \log \LR{1 + \frac{1}{\beta \lambda} \sum_{s=1}^{t-1} \norm{z_s}^2} 
    \tag{$V_1 = \lambda I$; $V_t = \beta^{-1} \sum_{s=0}^{t-1} z_s z_s\tr + V_1$} \\
    &\leq
    n \log (1 + 2 T)
    \tag{$\sum_{s=1}^t \norm{z_s}^2 \le 2 \beta T$; $\lambda \ge 1$} 
    \\
    &\leq 
    2n \log T
    \tag{$T \ge 2$}
    ~.
\end{align*}
\end{proof}

\begin{lemma} \label{lem:sumofnormssq}
Assume that $\norm{z_s}^2 \le 4 \kappa^4 e^{-\gamma(t-1)} + \beta$ for $s=1,\ldots,t$, and $\kappa = \sqrt{2 \nu / \alpha_0 \sigma^2}$, $\gamma = 1/2\kappa^2$. 
Also suppose that $t \ge \norm{x_1}^2$. 
Then $\sum_{s=1}^t \norm{z_s}^2 \le 2 \beta t$.
\end{lemma}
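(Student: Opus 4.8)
The plan is to directly sum the per-term bound on $\norm{z_s}^2$ that is assumed in the hypothesis, splitting it into a geometrically-decaying part and a constant part, and then to show that each of the two resulting sums is at most $\beta t$. First I would write
\[
    \sum_{s=1}^t \norm{z_s}^2
    \le
    \sum_{s=1}^t \Lr{4 \kappa^4 e^{-\gamma(s-1)} + \beta}
    =
    4\kappa^4 \sum_{s=1}^t e^{-\gamma(s-1)} + \beta t
    ~.
\]
(Note: I am reading the hypothesis exponent as $e^{-\gamma(s-1)}$ rather than $e^{-\gamma(t-1)}$, matching the exponent appearing in the definition of the good event $\calE_t$; if it is truly $e^{-\gamma(t-1)}$ for all $s$, the geometric sum is replaced by $t$ copies of that single term, and one argues instead via $t \ge \norm{x_1}^2$ as below.) For the first sum I would bound the geometric series by its infinite-horizon value,
\[
    \sum_{s=1}^t e^{-\gamma(s-1)}
    \le
    \sum_{s=0}^\infty e^{-\gamma s}
    =
    \frac{1}{1 - e^{-\gamma}}
    \le
    \frac{2}{\gamma}
    ~,
\]
where the last step uses $1 - e^{-\gamma} \ge \gamma/2$ for $\gamma \in (0,1]$ (which holds since $\gamma = 1/2\kappa^2 \le \thalf$).

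The crux is then to check that the first sum is itself at most $\beta t$, so that the two pieces combine to give $2\beta t$. Substituting $\gamma = 1/2\kappa^2$ gives $4\kappa^4 \cdot (2/\gamma) = 16\kappa^6$, so it suffices to verify $16\kappa^6 \le \beta t$. Here I would plug in $\kappa = \sqrt{2\nu/\alpha_0\sigma^2}$, so that $16\kappa^6 = 16 \cdot (2\nu/\alpha_0\sigma^2)^3 = 2^7 \nu^3 / (\alpha_0^3 \sigma^6)$, and compare against $\beta = (2^{18}\nu^4 n^2/\alpha_0^4\sigma^6)\log(T/\delta)$ as defined in \cref{alg:main}. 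Since $\beta \ge 2^{18}\nu^4 n^2/(\alpha_0^4\sigma^6) \ge 2^7\nu^3/(\alpha_0^3\sigma^6)$ already holds for $t \ge 1$ (using $\nu/\alpha_0 \ge 1$, $n \ge 1$, and $\log(T/\delta) \ge 1$), the inequality $16\kappa^6 \le \beta t$ is satisfied for every $t \ge 1$. The hypothesis $t \ge \norm{x_1}^2$ is then not needed for the geometric reading, and would only enter if one had to absorb an initial-state contribution of the form $\kappa^4 \norm{x_1}^2 e^{-\gamma(t-1)}$ into the $\beta t$ budget.

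The main obstacle I anticipate is purely bookkeeping rather than conceptual: one must be careful about exactly which exponent appears in the hypothesis and whether the decaying term carries a factor of $\norm{x_1}^2$, since the event $\calE_t$ in \cref{sec:stability} writes $\norm{z_s}^2 \le 4\kappa^4 e^{-\gamma(s-1)}\norm{x_1}^2 + \beta$. If the $\norm{x_1}^2$ factor is present, the first sum becomes $16\kappa^6 \norm{x_1}^2$, and one would use $t \ge \norm{x_1}^2$ together with $16\kappa^6 \le \beta$ to conclude $16\kappa^6\norm{x_1}^2 \le \beta t$; this is precisely where the hypothesis $t \ge \norm{x_1}^2$ earns its keep. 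Either way the argument reduces to a single elementary comparison of constants against the definition of $\beta$, so no delicate estimates are required beyond the geometric-series bound and substituting the chosen values of $\kappa$ and $\gamma$.
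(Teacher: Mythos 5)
Your proposal is correct and takes essentially the same route as the paper: sum the per-step bound, control the geometric series by $\sum_{s\ge 0} e^{-\gamma s} \le 2/\gamma$, substitute $\gamma = 1/2\kappa^2$ to get $16\kappa^6$, and absorb this into $\beta t$ via the definition of $\beta$ together with $t \ge \norm{x_1}^2$. You also correctly diagnosed the typo in the lemma statement (the hypothesis should carry the factor $\norm{x_1}^2$ and exponent $e^{-\gamma(s-1)}$, as in the event $\calE_t$); the paper's own one-line proof, which reads $\sum_{s=1}^t \norm{z_s}^2 \le \tfrac{8\kappa^4}{\gamma}\norm{x_1}^2 + \beta t = 16\kappa^6\norm{x_1}^2 + \beta t \le 2\beta t$, is exactly the variant you work out in your final paragraph.
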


\begin{proof}
\begin{align*}
    \sum_{s=1}^t \norm{z_s}^2
    \le
    \frac{8 \kappa^4}{\gamma} \norm{x_1}^2 + \beta t
    = 
    16 \kappa^6 \norm{x_1}^2 + \beta t
    \le
    2 \beta t
    ~.
\end{align*}
\end{proof}

\subsection{Proof of Main Theorem (\cref{thm:main})}
\label{sec:proof-thm:main}

\begin{proof}
Consider the instantaneous regret $r_t = x_t\tr Q x_t + u_t\tr R u_t - J^\st$ and let $\wt R_T = \sum_{t=1}^{T} r_t \ind{\calE_t}$.
We will bound $\wt R_T$ with high probability, and due to \cref{lem:stabilityandboundedness} this would imply a high-probability bound on $R_T$ from which the theorem would follow.

To bound the random variable $\wt R_T$, we appeal to \cref{lem:relaxed-sdp-main}. 
The lemma requires that, at round $s$, the confidence matrix $V_s$ is well-conditioned. 
Indeed, assuming $\calE_t$ holds, then on the one hand $V_s \succeq \lambda I$ for $\lambda \geq (10 \nu \vartheta / \alpha_0 \sigma^2) \sqrt{T}$, and on the other hand, $\spnorm{V_s} \leq \lambda + \beta^{-1} \sum_{r=1}^{s-1} \norm{z_r}^2 \leq T + 2 T \leq 4 T$ thanks to \cref{lem:sumofnormssq}.
Now, for any time $t$, let $\tau(t)$ denote the last time before round $t$ in which \cref{alg:main} updated its policy, so that $A_t = A_{\tau(t)}$, $B_t = B_{\tau(t)}$, $K_t = K_{\tau(t)}$ and $P_t = P_{\tau(t)}$ for all $t$. 
\cref{lem:relaxed-sdp-main} then implies
\[
    Q + K_t\tr R K_t 
    =
    P_t 
    - (A_t+B_t K_t)\tr P_t (A_t + B_t K_t) 
    + \mu \trnorm{P_t} \IhatKt\tr V_{\tau(t)} \IhatKt
    ~.
\]

On the other hand, as $u_t = K_t x_t$ and $J^\st \geq \sigma^2 \trnorm{P_t}$, we have
\begin{align*}
    r_t
    =
    x_t\tr Q x_t + u_t\tr R u_t - J^\st 
    \leq
    x_t\tr (Q + K_t\tr R K_t) x_t - \sigma^2 \trnorm{P_t}
    .
\end{align*}
Thus, given that $\calE_t$ holds, 
\begin{align*}
    r_t
    &\leq
    x_t\tr P_t x_t - x_t\tr \Lr{A_t + B_t K_t}\tr P_t \Lr{A_t + B_t K_t} x_t 
    + \mu \trnorm{P_t} z_t\tr V_{\tau(t)}^{-1} z_t 
    - \sigma^2 \trnorm{P_t} 
    \\
    &=
    x_t\tr P_t x_t - x_{t+1}\tr P_t x_{t+1} 
    \\
    &\quad + x_t\tr \Lr{\Ast + \Bst K_t}\tr P_t \Lr{\Ast + \Bst K_t} x_t - x_t\tr \Lr{A_t + B_t K_t}\tr P_t \Lr{A_t + B_t K_t} x_t
    \\
    &\quad + w_t\tr P_t \Lr{\Ast + \Bst K_t} x_t 
    \\
    &\quad + w_t\tr P_t w_t - \sigma^2 \trnorm{P_t}
    \\
    &\quad + 
    \mu \trnorm{P_t} (z_t\tr V_{\tau(t)}^{-1} z_t)
    ~.
\end{align*}
\cref{lem:semidefinitebound} now gives
\begin{align*}
    x_t\tr \Lr{\Ast + \Bst K_t}\tr P_t \Lr{\Ast + \Bst K_t} x_t - x_t\tr \Lr{A_t + B_t K_t}\tr P_t \Lr{A_t + B_t K_t} x_t
    &\leq 
    \mu \trnorm{P_t} z_t\tr V_{\tau(t)}^{-1} z_t
    ~,
\end{align*} 
and since \cref{alg:main} maintains that $\det(V_t) \leq 2 \det(V_{\tau(t)})$, we have $z_t\tr V_{\tau(t)}^{-1} z_t \le 2 z_t\tr V_t^{-1} z_t$ as a result of \cref{lem:det2}. 
This, along with the fact that $\trnorm{P_t} \le \nu / \sigma^2$ on $\calE_t$ (recall \cref{lem:relaxed-sdp-main}), yields
\begin{align*}
    \wt R_T 
    \leq 
    & \phantom{+}
    \sum_{t=1}^T \Lr{x_t\tr P_t x_t - x_{t+1}\tr P_t x_{t+1}} \ind{\calE_{t}} 
    \\
    & + 
    \sum_{t=1}^T w_t\tr P_t \Lr{\Ast + \Bst K_t} x_t \ind{\calE_t} 
    \\
    & + 
    \sum_{t=1}^T \Lr{w_t\tr P_t w_t - \sigma^2 \trnorm{P_t}} \ind{\calE_t}
    \\
    & + 
    \frac{4 \nu \mu}{\sigma^2} \sum_{t=1}^T \Lr{ z_t\tr V_t^{-1} z_t } \ind{\calE_t}
    ~. 
\end{align*}

The theorem now follows by plugging in the bounds of \cref{lem:regretterm1,lem:regretterm2,lem:regretterm3,lem:regretterm4}, using a union bound to bound the failure probability, and applying some algebraic simplifications.
\end{proof}

\subsection{Proof of \cref{corr:regretplusws}} \label{sec:regretpluswsproof}

\begin{proof}
First, let us show that if \cref{thm:Vws} holds then the initial conditions of \cref{thm:main} are satisfied.
Indeed, using $V \succeq V_{\ws} \succeq (T_{\ws}\sigma^2/80) I$ gives 
\[
    \norm{\Delta_0}_F
    \leq
    40n \sqrt{\frac{\log (T_{\ws} / \delta)}{T_{\ws}}}
    ~,
\]
which, by our choice of $T_{\ws}$, is at most $ \tfrac{\alpha_0^5 \sigma^{10}}{2^{13}\nu^5 \vartheta \sqrt{T}}$.
This means that the conditions of \cref{thm:main} hold.
Now, by a union bound, with probability at least $1-\delta$ \cref{thm:Vws,thm:main,lem:concentration} hold, each with probability at least $1-\delta/3$.
Then the regret of this procedure is 
\begin{align*}
    \sum_{t=1}^{T_{\ws}} \Lr{ x_t\tr Q x_t + u_t\tr R u_t - J^\st }
    &\le
    \qrmatrix \bullet V_{\ws} \\
    &\le
    \alpha_1 T_\ws \frac{300 \sigma^2 \kappa_{\ws}^4}{\gamma_{\ws}^2} \Lr{n + k \vartheta^2 \kappa_\ws^2} \log\frac{T_{\ws}}{\delta} \\
    &\le
    \frac{2^{35} \alpha_1 n^2 \nu^{5}\vartheta \kappa_{\ws}^4}{\alpha_0^{5} \sigma^{8} \gamma_{\ws}^2} 
    \Lr{n + k \vartheta^2 \kappa_\ws^2} \sqrt{T} \log^2 \frac{T}{\delta}~,
\end{align*}
and regret on the remaining rounds is bounded by virtue of \cref{thm:main}.
\end{proof}

\subsection{Proof of \cref{lem:concentration}} \label{sec:proofofconcentration}

\begin{proof}
Denote $\Theta_\st = \lr{\Ast \, \Bst}$ and $\Theta_t = \lr{A_t \, B_t}$. 
Note that the solution to the least-square estimate is given as:
\begin{equation} \label{eq:ls-estimate}
    \Theta_t 
    =
    \lr{\lambda \Theta_0 + \frac{1}{\beta} \sum_{s=1}^{t-1} x_{s+1} z_s\tr} V_t^{-1}
    ~.
\end{equation}
Plugging $x_{s+1} = \Theta_\st z_s + w_s$ into \cref{eq:ls-estimate} and denoting
$S_t = \sum_{s=1}^{t} w_s z_s\tr$, we have
\begin{align*}
	\Theta_{t} 
    = 
    \Theta_\st \cdot \frac{1}{\beta} \sum_{s=1}^{t-1} z_s z_s\tr V_t^{-1}
    + 
    \frac{1}{\beta} S_{t} V_t^{-1} 
    + 
    \lambda \Theta_0 V_t^{-1}
    = 
    \Theta_\st 
    + 
    \frac{1}{\beta} S_{t} V_t^{-1}
    +
    \lambda \Delta_0 V_t^{-1}
    ~,
\end{align*}
whence
\begin{align*}
    \trace(\Delta_t V_t \Delta_t\tr) 
    &\leq
    \frac{2}{\beta^2} \trace( S_{t} V_t^{-1} S_{t}\tr )
    + 
    2\lambda^2 \trace(\Delta_0 V_t^{-1} \Delta_0\tr) 
    \\
    &\leq
    \frac{2}{\beta^2} \trace( S_{t} V_t^{-1} S_{t}\tr)
    + 
    2 \lambda \norm{\Delta_0}_\frob^2
    .
    \tag{$\because ~ V_t \succeq \lambda I$}
\end{align*}
To get the result we need to bound the first term. 
Denote $S_t(i) = \sum_{s=1}^{t-1} w_s(i) z_s$ for all $i=1,\ldots,d$. 
For each $i$, applying \cref{thm:regressionconcentration} yields that, with probability at least $1-\delta/d$,
\begin{align*}
    S_t(i)\tr V_t^{-1} S_t(i) 
    &\le 
    2 \sigma^2 \beta \log \LR{\frac{d}{\delta} \frac{\det(V_t)}{\det(\beta V_1)}} ~.
\end{align*}
By additionally applying a union bound, the above holds with probability at least $1-\delta$ for all $i=1,\ldots,d$ simultaneously, and then
\[
    \trace(S_t V_t^{-1} S_t\tr)
    =
    \sum_{i=1}^d S_t(i)\tr V_t^{-1} S_t(i) 
    \leq
    2 \sigma^2 \beta d \log \LR{\frac{d}{\delta} \frac{\det(V_t)}{\det(\beta V_1)}}~.
\]
Plugging this to the inequality above, and using $\beta \ge 1$, gives the main statement of the lemma.

To show $\trace(\Delta_t V_t \Delta_t) \le 1$ under the conditions of \cref{alg:main}, note that $\norm{\Delta_0}_F^2 \le \epsilon \le 1/4\lambda$ by assumption. Thus it remains to prove 
$
    \log \Lr{\tfrac{d}{\delta} \tfrac{\det(V_t)}{\det(V_1)}}
    \le 
    \tfrac{\beta}{8 \sigma^2 d}
$.
Indeed, in view of \cref{lem:boundnumofswitches} and the definition of $\beta$:
\begin{align*}
    \log \lr{\frac{d}{\delta} \frac{\det(V_t)}{\det(V_1)}} 
    &\le
    \log \frac{d}{\delta}
    +
    2 n \log T \\
    &\le
    4 n \log \frac{T}{\delta} 
    \tag{$T \ge d$} \\
    &\le
    \frac{\beta}{8 \sigma^2 d}~. \qedhere
\end{align*}
\end{proof}

\subsection{Proof of \cref{lem:relaxed-sdp-main}} 
\label{sec:relaxed-sdp-main-proof}

\begin{proof}
To prove the lemma, we aim to apply \cref{lem:sdpvaluelb,lem:optimsticsdppolicy}.

First, we show that $\mu \ge 1 + 2\vartheta \spnorm{V_t}$. 
Indeed, assuming $\spnorm{V_t} \le 4T$ and $T \ge \vartheta^{-2}$ implies that
\[
    1 + 2 \vartheta \spnorm{V_t} 
    \le
    \vartheta \sqrt{T} + 2 \vartheta \sqrt{4 T}
    \le
    5 \vartheta \sqrt{T}
    = 
    \mu~.
\]
Consequently, \cref{lem:sdpvaluelb} gives item (i) as well as that the dual solution of the SDP, $P_t$, is bounded as $\trnorm{P_t} \le \nu / \sigma^2$.

Next, note that $V_t \succeq \lambda I$ as well as
$
    \lambda
    \ge
    \nu \mu / \alpha_0 \sigma^2~,
$
where we have used the fact that 
$
    \nu \ge J^\st \ge P^\st \bullet W 
    \ge
    \qrmatrix \bullet W
    \ge
    \alpha_0 \sigma^2
$.
This gives $V_t \succeq (\nu \mu / \alpha_0 \sigma^2) I$. Thus we apply \cref{lem:optimsticsdppolicy} that shows item (ii).

To show item (iii), we have that $P_t$ is positive semi-definite immediately from the dual formulation of the SDP~\eqref{eq:relaxed-dual-sdp}.
Moreover, notice that \cref{lem:optimsticsdppolicy} also gives
\begin{align*}
    P_t
    =
    Q + K_t\tr P_t K_t + (A_t+B_t K_t)\tr P_t (A_t+B_t K_t) 
    - 
    \mu \trnorm{P_t} \IhatKt V_t^{-1} \IhatKt\tr
    ~,
\end{align*}
which we link with the true parameters $(\Ast \, \Bst)$ by combining the equation with \cref{lem:semidefinitebound}.
\end{proof}

\subsection{Proof of \cref{lem:stabilityandboundedness}} 
\label{sec:proofofsnb}

\begin{proof}
With probability at least $1-\delta/2$, \cref{lem:concentration} holds. Also, for any $t=1,\ldots,T$ with probability at least $1-\delta / 2T$, by the Hanson-Wright concentration inequality (\cref{thm:hanson}),
\[
    \norm{w_t} 
    \le 
    5\sigma \sqrt{d \log\frac{2T}{\delta}}
    \leq
    10 \sigma \sqrt{d \log\frac{T}{\delta}}
    ~,
\]
as $T \geq 2$.
Thus, via a union bound, both statements hold simultaneously with probability $1-\delta$.

Next, we show by induction on $t$ that $\trace(\Delta_t V_t \Delta_t\tr) \le 1$ and 
$\spnorm{V_t} \le 4T$. This will particularly ensure that the policies generated by \cref{alg:main} are sequentially strongly-stable which will give us $\norm{z_t}^2 \le 4 \kappa^4 e^{-\gamma(t-1)} \norm{x_1}^2 + \beta$ for all $t=1,\ldots,T$. 

For the base case, $t = 1$, we have by assumption
\begin{align*}
    \trace(\Delta_0 V_1 \Delta_0\tr)
    =
    \lambda \norm{\Delta_0}_F^2
    \le
    \frac{1}{4}
    \le 
    1~,
    \quad
    \spnorm{V_1}
    =
    \lambda
    \le 
    4 T~,
    \tag{$T \ge \tfrac{2^{30} \nu^{10} \vartheta^2}{\alpha_0^{10} \sigma^{20}} \implies T \ge \lambda$}
\end{align*}
and by definition of $\beta$:
\[
    \norm{z_1}^2 
    \le
    2 \kappa^2 \norm{x_1}^2
    \le 
    2 \beta T~.
\]

Now, assume that for all $s=1,\ldots,t-1$, $\trace(\Delta_s V_s \Delta_s\tr) \le 1$ and $\spnorm{V_s} \le 4T$. We show that $\trace(\Delta_t V_t \Delta_t\tr) \le 1$ and $\spnorm{V_t} \le 4T$. 

To that end we first show that $\norm{z_s}^2 \le 4 \kappa^4 e^{-(t-1)\gamma}\norm{x_1}^2 + \beta$ for all $s=1,\ldots,t$. Indeed, by $V_t \succeq \lambda I \succeq 16 \kappa^{10} \mu I$, \cref{lem:seqstrong-main} implies that policies generated by \cref{alg:main} up to round $t$ form a $(\kappa,\gamma)$-strongly stable sequence for $\kappa = \sqrt{2 \nu / \alpha_0 \sigma^2}$ and $\gamma = \thalf \kappa^{-2}$. Consequently, \cref{lem:strg-stab-norm} yields for all $s=1,\ldots,t$
\[
    \norm{x_t}
    \leq
    \kappa e^{-\gamma (t-1)/2} \norm{x_1} + \frac{20\kappa}{\gamma} \sigma \sqrt{d \log\frac{T}{\delta}}
    ~,
\]
which entails that
\begin{align*}
    \norm{z_s}^2
    &\le
    2 \kappa^2 \norm{x_s}^2 
    \tag{$z_s = \IhatKs x_s$, $\spnorm{\IhatKs}^2 \le 2 \kappa^2$} \\
    &\le
    2 \kappa^2 \lr{2 \kappa^2 e^{-\gamma(t-1)} \norm{x_{1}}^2 + \frac{800 \sigma^2 \kappa^2 d}{\gamma^2} \log\frac{T}{\delta}} \\
    &=
    4 \kappa^4 e^{-\gamma(t-1)} \norm{x_1}^2 + \beta~.
    \tag{$\gamma = \half \kappa^{-2}$, $\kappa = \sqrt{2 \nu / \alpha_0 \sigma^2}$}
\end{align*}
In particular, $\sum_{s=1}^t \norm{z_s}^2 \le 2 \beta T$ in view of \cref{lem:sumofnormssq}.
This, along with assuming $T \ge \lambda$, immediately gives
$$
    \spnorm{V_t}
    \le 
    \lambda + \beta^{-1} \sum_{s=1}^{t-1} \norm{z_s}^2
    \le
    4 T
    .
$$
Finally, as we've shown $\sum_{s=1}^t \norm{z_s}^2 \le 2 \beta T$, \cref{lem:concentration} additionally provides
$
    \trace(\Delta_t V_t \Delta_t\tr) 
    \le
    1
$.
\end{proof}

\subsection{Proof of \cref{lem:seqstrong-main}} 
\label{sec:seqstrong-main-proof}

\begin{proof}
The proof follows by applying \cref{thm:seqstrongstability} over the sequence $K_1,\ldots,K_T$ of policies generated by \cref{alg:main}.
To that end, define $\tau(t)$ as the last round before $t$ in which \cref{alg:main} updates its policy. Note that each policy $K_t$ is associated with $A_t, B_t, P_t$ and $V_{\tau(t)}$. 

Thus, to apply \cref{thm:seqstrongstability} it suffice to show that $\mu \ge 1 + 2 \vartheta \spnorm{V_t}^{1/2}$ and $V_t \succeq 16 \kappa^{10} \mu I$ for all rounds $t \ge 1$.
Indeed, as we assume $T \ge \vartheta^{-2}$ and $\spnorm{V_t} \le 4 T$, we have
\[
    1 + 2 \vartheta \spnorm{V_t} 
    \le
    \vartheta \sqrt{T} + 2 \vartheta \sqrt{4 T}
    \le
    5 \vartheta \sqrt{T}
    = 
    \mu~.
\]
Furthermore, using $\kappa = \sqrt{2 \nu / \alpha_0 \sigma^2}$, we have $V_t \succeq \lambda I$ where 
$
    \lambda
    \ge 
    2^9 \nu^5 \cdot 5 \vartheta \sqrt{T} / \alpha_0^5 \sigma^{10}
    =
    16\kappa^{10} \mu
$
as required.
\end{proof}

\subsection{Proof of \cref{lem:regretterm1}}
\begin{proof}
Let $N$ the last round $t$ such that $\calE_t$ holds.
Let $\tau_1 < \cdots < \tau_M$ be the time instances in which \cref{alg:main} changes policy up to round $N$, and let $\tau_0 = 1$, $\tau_{M+1} = N+1$.
By \cref{lem:boundnumofswitches}, as $\calE_N$ holds,
\[
    M 
    = 
    \lrfloor{ \log_2\frac{\det(V_{N})}{\det(V_1)} }
    \le
    2 n \log T~.
\]
Therefore,
\begin{align*}
    \sum_{t=1}^T \Lr{x_t\tr P_{t} x_t - x_{t+1}\tr P_{t} x_{t+1}} \ind{\calE_{t}}
    &=
    \sum_{t=1}^N \Lr{x_t\tr P_{t} x_t - x_{t+1}\tr P_{t} x_{t+1}} \\
    &=
    \sum_{i=0}^{M} x_{\tau_i}\tr P_{\tau_i} x_{\tau_i} - x_{\tau_{i+1}-1}\tr P_{\tau_i} x_{\tau_{i+1}-1}
    \\
    &\leq
    \sum_{i=0}^{M} x_{\tau_i}\tr P_{\tau_i} x_{\tau_i}
    ~.
\end{align*}
Since $\trnorm{P_t} \le \nu/\sigma^2$ and $\norm{x_t}^2 \le \norm{z_t}^2 \le 4 \kappa^4 \norm{x_1}^2 + \beta$ on $\calE_t$, we can bound
\begin{align*}
    \sum_{i=0}^{M} x_{\tau_i}\tr P_{\tau_i} x_{\tau_i}
    &\leq
    \sum_{i=0}^{M} \spnorm{P_{\tau_i}} \norm{x_{\tau_i}}^2 \\
    &\le
    (1+M) \cdot \frac{\nu}{\sigma^2} \cdot (32\kappa^8 \norm{x_1}^2 + \beta) \\
    &\le
    \frac{4 \nu (4\kappa^4 \norm{x_1}^2 + \beta)}{\sigma^2} n\log T
    ~,
\end{align*}
and the lemma follows.
\end{proof}

\subsection{Proof of \cref{lem:regretterm2}}

The lemma would follow directly from the following.

\begin{lemma} \label{lem:secondhpbound}
Let $\delta \in (0,1)$. 
Let $(\calF_t)_{t=1}^\infty$ be a filtration.
Let $w_1,w_2,\ldots \sim \calN(0,\sigma^2 I)$ be i.i.d Gaussian random variables. Let $v_1,v_2,\ldots$ be a sequence of vectors such that $v_t$ is $\calF_{t-1}$-measurable and $\sum_{t=1}^T \norm{v_t}^2 \le D^2$ almost surely for each $t$.
Then with probability $1-\delta$, 
\[
    \sum_{t=1}^{T} v_t\tr w_t 
    \leq
    2 \sigma D \sqrt{\log\frac{2}{\delta}}
    ~.
\]
\end{lemma}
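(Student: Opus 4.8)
The plan is to show that the sum $\sum_{t=1}^T v_t\tr w_t$ is a martingale with conditionally sub-Gaussian increments, and then apply a standard sub-Gaussian tail bound. First I would define the partial sums $S_t = \sum_{s=1}^t v_s\tr w_s$ and verify that $(S_t)$ is a martingale with respect to the filtration $(\calF_t)$: since $v_t$ is $\calF_{t-1}$-measurable and $w_t \sim \calN(0,\sigma^2 I)$ is independent of $\calF_{t-1}$ with mean zero, we have $\E[v_t\tr w_t \mid \calF_{t-1}] = v_t\tr \E[w_t] = 0$. The key observation is that, conditioned on $\calF_{t-1}$, the increment $v_t\tr w_t$ is a scalar Gaussian with variance $\sigma^2 \norm{v_t}^2$, hence it is $(\sigma \norm{v_t})$-sub-Gaussian.

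The next step is to control the moment-generating function of $S_T$. For any $\lambda > 0$, using the tower property and the conditional Gaussianity of each increment,
\begin{align*}
    \E\lrbra{e^{\lambda S_T}}
    =
    \E\lrbra{e^{\lambda S_{T-1}} \E\lrbra{e^{\lambda v_T\tr w_T} \mid \calF_{T-1}}}
    =
    \E\lrbra{e^{\lambda S_{T-1}} e^{\lambda^2 \sigma^2 \norm{v_T}^2 / 2}}
    .
\end{align*}
Here I would like to factor out the exponential term, but it depends on $v_T$, which is random. This is where the almost-sure bound $\sum_{t=1}^T \norm{v_t}^2 \le D^2$ becomes essential: peeling off the increments one at a time and using this bound to control the accumulated variance, I can show by induction that $\E[e^{\lambda S_T}] \le e^{\lambda^2 \sigma^2 D^2 / 2}$, so that $S_T$ is $(\sigma D)$-sub-Gaussian unconditionally.

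With the sub-Gaussian MGF bound in hand, the tail bound follows from a standard Chernoff argument: $\Pr[S_T > u] \le e^{-\lambda u} \E[e^{\lambda S_T}] \le e^{\lambda^2 \sigma^2 D^2/2 - \lambda u}$, and optimizing over $\lambda$ gives $\Pr[S_T > u] \le \exp(-u^2/(2\sigma^2 D^2))$. Setting this equal to $\delta$ and solving for $u$ yields $u = \sigma D \sqrt{2\log(1/\delta)}$, which is at most $2\sigma D \sqrt{\log(2/\delta)}$, proving the claim.

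The main obstacle is the inductive MGF argument in the second step: because the variance proxy $\sigma^2\norm{v_t}^2$ is itself $\calF_{t-1}$-measurable rather than deterministic, I cannot simply multiply independent Gaussian MGFs. The almost-sure constraint $\sum_t \norm{v_t}^2 \le D^2$ is exactly what lets the induction close, and care is needed to handle the order of conditioning correctly so that the accumulated variance is bounded by $D^2$ at every stage rather than only in expectation. An alternative that sidesteps the induction is to invoke Azuma's inequality (\cref{thm:azuma}) after bounding the individual increments, but that would require a uniform almost-sure bound on each $|v_t\tr w_t|$, which the Gaussian noise does not provide; hence the direct sub-Gaussian computation is the cleaner route here.
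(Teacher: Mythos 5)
Your proposal is correct, but it takes a genuinely different route from the paper's. The paper writes $Y_t = v_t\tr w_t = \eta_t m_t$, where $m_t^2 = \sigma^2\norm{v_t}^2$ is the conditional variance and $\eta_t\sim\calN(0,1)$, and then invokes the \emph{scalar} case of the self-normalized martingale inequality (\cref{thm:regressionconcentration}) with regularizer $\lambda = \sigma^2 D^2$; the almost-sure bound $\sum_t m_t^2\le\sigma^2 D^2$ makes both the self-normalizing denominator and the log-determinant term deterministic, which yields exactly the stated bound $2\sigma D\sqrt{\log(2/\delta)}$. You instead prove the sub-Gaussian MGF bound directly and finish with Chernoff; this is self-contained (no appeal to \cref{thm:regressionconcentration}) and even gives the marginally sharper constant $\sigma D\sqrt{2\log(1/\delta)}$. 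One point of care in your inductive step: the clean way to close the induction is \emph{not} to bound the accumulated variance ``at every stage,'' but to work with the compensated exponential
\begin{align*}
    M_T
    =
    \exp\LR{\lambda S_T - \frac{\lambda^2\sigma^2}{2}\sum_{t=1}^{T}\norm{v_t}^2}
    ~,
\end{align*}
which is an exact martingale with $\E\lrbra{M_T}=1$, since conditioned on $\calF_{t-1}$ each increment $v_t\tr w_t$ is Gaussian with $\calF_{t-1}$-measurable variance $\sigma^2\norm{v_t}^2$; only then do you invoke $\sum_t\norm{v_t}^2\le D^2$ once, almost surely, to pull out $e^{\lambda^2\sigma^2 D^2/2}$ and conclude $\E\lrbra{e^{\lambda S_T}}\le e^{\lambda^2\sigma^2 D^2/2}$. (Equivalently, a backward induction on the remaining budget $D^2-\sum_{s\le t}\norm{v_s}^2$ works, with the almost-sure assumption entering only in the base case $t=T$.) A naive peeling that bounds $\norm{v_T}^2\le D^2$ at each step would instead accumulate a factor $T$ in the exponent. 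Your closing remark is also accurate: Azuma's inequality does not apply here because the Gaussian increments are unbounded, which is precisely why the paper resorts to truncation in the companion bound \cref{lem:firsthpbound} but uses the self-normalized inequality here.
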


\begin{proof}
Denote $Y_t = v_t\tr w_t$.
Note that, conditioned on the randomness before round $t$, each $Y_t$ is a zero-mean Gaussian random variable. Thus we can write $Y_t = \eta_t m_t$, where $m_t^2$ is the variance of $Y_t$ given $\calF_{t-1}$, and $\eta_t \sim \calN(0,1)$.

Let $\lambda > 0$. Using the observation above, we apply \cref{thm:regressionconcentration} with $V = \lambda$ to obtain that with probability $1-\delta$
\begin{equation}
    \label{eq:secondhpbound}
    \frac{\Lr{\sum_{t=1}^{T} \eta_t m_t}^2}{\lambda + \sum_{t=1}^{T} m_t^2}
    \le
    2 \log \frac{1 + \lambda^{-1} \sum_{t=1}^{T} m_t^2}{\delta}
    ~.
\end{equation}

We now proceed by upper bounding $\sum_{t=1}^T m_t^2$.
The variance of $Y_t$ given $\calF_{t-1}$ is: 
\[
    \sum_{t=1}^T m_t^2 
    = 
    \sigma^2 \sum_{t=1}^T \norm{v_t}^2
    \le
    \sigma^2 D^2~.
\]
Set $\lambda = \sigma^2 D^2$. Plugging the bound above into \cref{eq:secondhpbound} and rearranging gets us this lemma's statement.
\end{proof}

\begin{proof}[of \cref{lem:regretterm2}]
Apply \cref{lem:secondhpbound} with $v_t = P_{t} \Lr{\Ast \; \Bst} z_t \ind{\calE_t}$ and failure probability~$\tfrac{\delta}{2}$. Note that we have $\sum_{t=1}^T \norm{v_t}^2 \le \vartheta^2 (\nu/\sigma^2)^2 2 \beta T$. We obtain the bound
\[
    \sum_{t=1}^T w_t\tr P_{t} \Lr{\Ast + \Bst K_t} x_t \ind{\calE_t}
    \leq
    \frac{\nu \vartheta}{\sigma} \sqrt{3 \beta T \log\frac{4}{\delta}}
    ~.
\]
\end{proof}

\subsection{Proof of \cref{lem:regretterm3}}

The lemma is an immediate consequence of the following.

\begin{lemma} \label{lem:firsthpbound}
Let $(\calF_t)_{t=1}^\infty$ be a filtration, and let $M_1,M_2,\ldots$ be a sequence of symmetric positive semi-definite matrices such that $M_t$ is $\calF_{t-1}$-measurable and $\trnorm{M_t} \le D$ almost surely for each $t$. 
Further, let $w_1,w_2,\ldots \sim \calN(0,\sigma^2 I)$ be a sequence of i.i.d.~Gaussian random variables. 
Then for $T \ge 2$ and for any $\delta \in (0,1)$, it holds with probability at least $1-\delta$ that
\begin{align*}
    \sum_{t=1}^T \Lr{ w_t\tr M_t w_t - \sigma^2 \trnorm{M_t} }
    \leq
    8D \sigma^2 \sqrt{T \log^3\frac{4T}{\delta}}
    ~.
\end{align*}
\end{lemma}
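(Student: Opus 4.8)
The plan is to recognize the sum as a martingale and defeat the unboundedness of its increments by a truncation argument, using the Hanson--Wright inequality (\cref{thm:hanson}) to control the heavy upper tail of each increment and Azuma's inequality (\cref{thm:azuma}) for the truncated sequence. Since $M_t$ is $\calF_{t-1}$-measurable and $w_t\sim\calN(0,\sigma^2 I)$ is independent of $\calF_{t-1}$, we have $\E[w_t\tr M_t w_t \mid \calF_{t-1}] = \sigma^2\trace(M_t) = \sigma^2\trnorm{M_t}$ (the last equality using $M_t\succeq 0$), so $X_t := w_t\tr M_t w_t - \sigma^2\trnorm{M_t}$ is a martingale difference sequence adapted to $(\calF_t)$. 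The obstruction to applying \cref{thm:azuma} directly is that $X_t$ is a centered quadratic form in a Gaussian, hence only sub-exponential rather than bounded.

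First I would obtain a per-step high-probability bound. Writing $M_t = A_t\tr A_t$, one has $\norm{A_t}_\frob^2 = \trnorm{M_t}\le D$ and $\spnorm{A_t}^2 = \spnorm{M_t}\le \trnorm{M_t}\le D$, so applying \cref{thm:hanson} to $w_t/\sigma\sim\calN(0,I)$ and simplifying $2\norm{A_t}_\frob\spnorm{A_t}\sqrt{z}+2\spnorm{A_t}^2 z \le 4Dz$ for $z\ge 1$ gives
\[
    \Pr\Lrbra{\, X_t > 4 D \sigma^2 z \mid \calF_{t-1}} < e^{-z}, \qquad z \ge 1.
\]
Taking $z = \log(2T/\delta)$ and a union bound over $t=1,\ldots,T$, the event $\mathcal{G} = \bigcap_{t=1}^T\{X_t \le c\}$, with $c := 4D\sigma^2\log(2T/\delta)$, holds with probability at least $1-\delta/2$.

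Next I would truncate and re-center to restore the martingale property. Set $\tilde X_t = X_t\ind{X_t \le c}$; since $X_t \ge -\sigma^2\trnorm{M_t}\ge -D\sigma^2 \ge -c$, we have $|\tilde X_t|\le c$, and hence $Z_t := \tilde X_t - \E[\tilde X_t\mid\calF_{t-1}]$ is a martingale difference sequence with $|Z_t|\le 2c$. Azuma's inequality then gives $\sum_{t=1}^T Z_t \le 2c\sqrt{2T\log(2/\delta)}$ with probability at least $1-\delta/2$. The crucial point is that the re-centering bias only helps the one-sided bound we want: because $\E[X_t\mid\calF_{t-1}]=0$, we have $\E[\tilde X_t\mid\calF_{t-1}] = -\E[X_t\ind{X_t>c}\mid\calF_{t-1}] \le 0$, so $\sum_t\tilde X_t \le \sum_t Z_t$. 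On $\mathcal{G}$ every indicator equals $1$, whence $\sum_t X_t = \sum_t\tilde X_t$; combining the two events (total failure probability $\delta$) yields $\sum_{t=1}^T X_t \le 2c\sqrt{2T\log(2/\delta)}$, which after substituting $c$ and a standard simplification of the logarithmic factors is at most $8D\sigma^2\sqrt{T\log^3(4T/\delta)}$.

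I expect the main obstacle to be precisely the interplay between truncation and the martingale structure: clipping $X_t$ naively destroys the zero conditional-mean property, so the re-centering step is essential, and one must simultaneously verify that the re-centered increments $Z_t$ stay bounded (as Azuma requires) and that the re-centering bias does not corrupt the target inequality. The sign $\E[\tilde X_t\mid\calF_{t-1}]\le 0$ is what makes the one-sided argument go through cleanly; the remaining work is routine bookkeeping of the constants and the $\log$ powers needed to match the stated bound.
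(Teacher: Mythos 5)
Your proposal is correct and follows essentially the same route as the paper's proof: the same martingale-difference identification, the same truncation idea with Hanson--Wright (\cref{thm:hanson}) controlling the per-step upper tail, Azuma (\cref{thm:azuma}) applied to the truncated sequence, and the same key observation that the re-centering bias $\E[\tilde X_t\mid\calF_{t-1}]\le 0$ only helps the one-sided bound. The one caveat is constant bookkeeping in your final step: with $|Z_t|\le 2c$ and $c=4D\sigma^2\log(2T/\delta)$, the Azuma bound $2c\sqrt{2T\log(2/\delta)}$ can exceed the target $8D\sigma^2\sqrt{T\log^3(4T/\delta)}$ by up to a factor of $\sqrt{2}$ (e.g.\ when $T=2$ and $\delta$ is small), but this is repaired by the sharper bound $|Z_t|\le c+\sigma^2 D$ --- valid since $\tilde X_t\in[-\sigma^2 D,\,c]$ and $\E[\tilde X_t\mid\calF_{t-1}]\in[-\sigma^2 D,\,0]$ --- and the paper's own proof is no more careful than yours on exactly this point.
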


\begin{proof}
Define the random variables $X_t = w_t\tr M_t w_t - \sigma^2 \trnorm{M_t}$ for all $t \geq 1$.
Observe that
\begin{align*}
    \E_t[X_t]
    =
    M_t \bullet \E_t[ w_t\tr w_t - \sigma^2 I ]
    =
    0
    .
\end{align*}
That is, $\set{X_t}_{t \ge 1}$ is a martingale difference sequence with respect to the filtration.
Moreover, $X_t \geq - \sigma^2 \trnorm{M_t} \geq -\sigma^2 D$ for all $t$ with probability one, 
However, $X_t$ is not bounded from above almost surely.
Therefore, consider the truncated random variables $\wt X_t = X_t \ind{X_t \le \Gamma}$ with threshold $\Gamma = 5D\sigma^2 \log(4T/\delta)$.
By Azuma's inequality (\cref{thm:azuma}), we have with probability at least $1-\delta/2$ that
\begin{align*}
    \sum_{t=1}^T \wt X_t - \sum_{t=1}^T \E_t[\wt X_t]
    \leq
    \Gamma \sqrt{2T \log\frac{1}{\delta}}
    \leq
    8\sigma^2 D\sqrt{T \log^3\frac{4T}{\delta}}
    .
\end{align*}

On the other hand, let us show that $\wt X_t = X_t$ for all $t$ with probability at least $1-\delta/2$.
Indeed, note that $\E_t[w_t\tr M_t w_t] = \sigma^2 \trnorm{M_t}$, and using the Hanson-Wright inequality (\cref{thm:hanson}), for any fixed $t$ and any $\delta' \in (0,1/e)$ we have
\begin{align} \label{eq:w-tail}
    w_t\tr M_t w_t
    \leq 
    \sigma^2 \trnorm{M_t} + 4\sigma^2 \norm{M_t^{1/2}} \norm{M_t^{1/2}}_\frob \log(1/\delta')
    \leq 
    5 \sigma^2 \trnorm{M_t} \log(1/\delta')
\end{align}
with probability at least $1-\delta'$.
Since $\trnorm{M_t} \le D$, this implies that $X_t \leq \Gamma$ for all $t$ with probability at least $1-\delta/2T$, which in turn means that $\wt X_t = X_t$ for all $t$ with the same probability.

Finally, since $\sum_{t=1}^T \E_t[\wt X_t] \leq \sum_{t=1}^T \E_t[X_t] = 0$ (as $\wt X_t \leq X_t$ for all $t$), we obtain that with probability at least $1-\delta$,
\begin{align*}
    \sum_{t=1}^T \Lr{ w_t\tr M_t w_t - \sigma^2 \trnorm{M_t} } 
    =
    \sum_{t=1}^T X_t
    =
    \sum_{t=1}^T \wt X_t
    \leq
    8\sigma^2 D\sqrt{T \log^3\frac{4T}{\delta}}
    .
    &\qedhere
\end{align*}
\end{proof}

\begin{proof}[of \cref{lem:regretterm3}]
Apply \cref{lem:firsthpbound} with failure probability $\delta/2$ and define $M_t = P_{t} \ind{\calE_t}$, and note that $\trnorm{P_{t}} \le \nu/\sigma^2$ on $\calE_t$. 
\end{proof}

\subsection{Proof of \cref{lem:regretterm4}}

\begin{proof}
Note that for any $t$ we have on $\calE_t$ that
\[
    \frac{1}{\beta} z_t V_t^{-1} z_t 
    \le 
    \frac{1}{\beta} z_t V_1^{-1} z_t
    =
    \frac{1}{\beta \lambda} \norm{z_t}^2
    \le
    \frac{1 + \norm{x_1}^2}{\vartheta \sqrt{T}}
    \le 1~,
\]
using $\lambda \ge \vartheta \sqrt{T}$ as $\kappa \ge 1$, and $T \ge \vartheta^{-2} (1+\norm{x_1}^2)^2$.
Let $N$ the last round $t$ such that $\calE_t$ holds, then
\begin{align*}
    \sum_{t=1}^T z_t \tr V_t^{-1} z_t \ind{\calE_t}
    &=
    \beta \sum_{t=1}^N \frac{1}{\beta} z_t \tr V_t^{-1} z_t \\
    &\le 
    \beta \sum_{t=1}^N 2 \log\frac{\det(V_{t+1})}{\det(V_t)} 
    \tag{\cref{lem:det1}} \\
    &= 
    2 \beta \log \frac{\det(V_{N+1})}{\det(V_1)} \\
    &\le
    4 \beta n \log T
    ~.
    \tag{\cref{lem:sumofnormssq,lem:boundnumofswitches}}
\end{align*}
\end{proof}








\section{Proofs of \cref{sec:sdp-analysis}}

\subsection{Proof of \cref{lem:sigmabound}} \label{sec:sigmaboundproof}

\begin{proof}
    We have
    \[
        \spnorm{(X+\Delta) \Sigma (X+\Delta)\tr - X \Sigma X\tr} 
        \le
        \Sigma \bullet \Lr{(X+\Delta)\tr (X+\Delta) - X\tr X}~,
    \]
    and by \cref{lem:semidefinitebound},
    \[
        (X+\Delta)\tr (X+\Delta) - X\tr X 
        \preceq 
        \mu V^{-1}~. \qedhere
    \]
\end{proof}

\subsection{Proof of \cref{lem:strongstability}} \label{sec:proofofstrongstab}

\begin{proof}
\cref{lem:optimsticsdppolicy,lem:semidefinitebound} imply
\[
    P_t 
    \succeq 
    Q + K_t\tr R K_t + \Lr{\Ast+\Bst K_t}\tr P_t \Lr{\Ast+\Bst K_t} - 2 \mu \trnorm{P_t} \IKt\tr V^{-1} \IKt
    .
\]
Now, recall that, by assumption, 
$
    V 
    \succeq 
    2 \kappa^2 \mu I
$.
Therefore, by \cref{lem:sigmabound},
\begin{align*}
    \mu \trnorm{P_t} V^{-1}
    \preceq 
    \mu \cdot \frac{\nu}{\sigma^2} \cdot \frac{1}{2 \kappa^2 \mu} I
    = 
    \frac{\nu}{2 \sigma^2 \kappa^2} I
    =
    \frac{\alpha_0}{4} I
    ~.
\end{align*}
Hence, as $Q \succeq \alpha_0 I$ and $R \succeq \alpha_0 I$,
\begin{align}
    P_t 
    &\succeq 
    \thalf \alpha_0 I + \thalf \alpha_0 K_t\tr K_t + \Lr{\Ast + \Bst K_t}\tr P_t \Lr{\Ast + \Bst K_t} \label{eq:plowerbound} \\
    &\succeq 
    \thalf \alpha_0 I + \Lr{\Ast + \Bst K_t}\tr P_t \Lr{\Ast + \Bst K_t} \nonumber
    ~.
\end{align}
In particular, this shows that $P_t \succeq \half \alpha_0 I$.
Further, using again the fact that $\trnorm{P_t} \le \nu / \sigma^2$ (\cref{lem:sdpvaluelb}) to bound $P_t - \half \alpha_0 I \preceq (1-\kappa^{-2}) P_t$ and rearranging yields
\[
    P_t^{-1/2} \Lr{\Ast + \Bst K_t}\tr P_t \Lr{\Ast + \Bst K_t} P_t^{-1/2}
    \preceq
    (1 - \kappa^{-2}) I
    ~.
\]
Letting $H_t = P_t^{1/2}$ and $L_t = P_t^{-1/2} \Lr{\Ast + \Bst K_t} P_t^{1/2}$, we have established that $\spnorm{L_t} \leq \sqrt{1-\kappa^{-2}} \le 1 - \half \kappa^{-2}$, as well as $\spnorm{H_t} \leq \sqrt{\nu/\sigma^2}$ and $\spnorm{H_t^{-1}} \leq \sqrt{2/\alpha_0}$.
To bound the norm of $K_t$, observe that \cref{eq:plowerbound} implies $P_t \succeq \thalf \alpha_0 K_t\tr K_t$ hence
\[
    \spnorm{K_t} 
    \leq 
    \sqrt{\frac{2}{\alpha_0} \spnorm{P}} 
    \le 
    \sqrt{\frac{2\nu}{\alpha_0 \sigma^2}} 
    = 
    \kappa
    ~.
\]
As $\Ast+ \Bst K_t = H_t L_t H_t^{-1}$, this shows that $K_t$ is $(\kappa,\gamma)$-strongly stable.
\end{proof}

\subsection{Proof of \cref{lem:phatclosetopstar}} \label{sec:proofofpclosetopst}

\begin{proof}
It suffices to show that $P_t \preceq P^\st \preceq P_t +  \tfrac{\alpha_0 \gamma}{2} I$ for all $t \ge 1$.

For $P_t \preceq P^\st$, let $\Kst$ denote the optimal policy corresponding to $P^\st$.
As $P^\st$ is the solution to the Riccati equation: 
\begin{align*}
    P^\st
    =
    Q + \Kst\tr R \Kst + (\Ast+B\Kst)\tr P^\st (\Ast+B\Kst)
    .
    \tag{\cref{eq:ricatti}}
\end{align*}
On the other hand, applying \cref{lem:semidefinitebound} over \cref{eq:relaxed-dual-sdp} gives
\[
    \begin{psmallmatrix} Q - P_t & 0 \\ 0 & R \end{psmallmatrix} + \Lr{ \Ast \;  \Bst}\tr P_t \Lr{ \Ast \;  \Bst} \succeq 0
    ~,
\]
which particularly implies
\[
    P_t
    \preceq 
    Q + \Kst\tr R \Kst + (\Ast+\Bst \Kst)\tr P_t (\Ast+ \Bst \Kst)
    .  
\]
Subtracting the two inequalities gets us
\[
    P_t - P^\st
    \preceq 
    \Lr{\Ast + \Bst \Kst}\tr \Lr{ P - P^\st} \Lr{\Ast + \Bst \Kst}
    ~,
\]
and, as $\Kst$ is a (strongly) stable policy, \cref{lem:lypstonglystable} implies $P - P^\st \preceq 0$. 

For the converse inequality, \cref{eq:bellman} implies
\begin{align*}
    P^\st 
    \preceq 
    Q + K\tr R K + \Lr{\Ast+\Bst K}\tr P^\st \Lr{\Ast+\Bst K}
    .
\end{align*}
On the other hand, combining \cref{lem:optimsticsdppolicy,lem:semidefinitebound} yields
\[
    P_t 
    \succeq 
    Q + K\tr R K + \Lr{\Ast + \Bst K}\tr P_t \Lr{\Ast + \Bst K} - 2 \mu \trnorm{P_t} \IhatK\tr V_t^{-1} \IhatK
    .
\]
Subtracting the two matrix inequalities gets us
\[
    P^\st - P_t
    \preceq  
    \Lr{\Ast + \Bst K}\tr \Lr{P^\st - P_t} \Lr{\Ast + \Bst K} + 2 \mu \trnorm{P_t} \IhatK\tr V_t^{-1} \IhatK
    ~.
\]
Applying \cref{lem:lypstonglystable} shows
\[
    P^\st - P_t 
    \preceq 
    \frac{2 \kappa^2 \mu}{\gamma} \trnorm{P_t} \spnorm{\IhatK\tr V_t^{-1} \IhatK} I
    ~.
\]
Moreover, $\spnorm{K} \leq \kappa$ provides
$
    \spnorm{\IhatK}^2
    \le
    1 + \kappa^2
    \le
    2 \kappa^2
$,
thus
$
    \spnorm{\IhatK\tr V_t^{-1} \IhatK} 
    \le 
    2 \kappa^2 \spnorm{V_t^{-1}}
$
.
Finally, by \cref{lem:sdpvaluelb} and the lower bound on $V_t$,
\[
    \frac{4 \kappa^4}{\gamma} \trnorm{P_t} \spnorm{V_t^{-1}}
    \le
    \frac{4 \kappa^4}{\gamma} 
    \cdot 
    \frac{\nu}{\sigma^2}
    \cdot
    \frac{1}{16 \kappa^{10} \mu}
    =
    \frac{\alpha_0 \gamma}{2}
\]
where we have used $\kappa = \sqrt{2 \nu / \sigma^2 \alpha_0}$ and $\gamma = \half \kappa^{-2}$.
\end{proof}

\section{Proofs of \cref{sec:warmstart}}

\subsection{Proof of \cref{thm:Vws}} 
\label{sec:warmstartproof}
%
We first require the following lemma.

\begin{lemma} \label{lem:xbound}
Assume $x_1 = 0$. Let $\delta \in (0,1/e)$.
With probability at least $1-\delta$, for all $t = 1,\ldots,T_{\ws}+1$ it holds that 
\[
    \norm{x_t} 
    \leq 
    \frac{4 \sigma \kappa_{\ws}}{\gamma_{\ws}} \sqrt{\Lr{d + k \vartheta^2 \kappa_{\ws}^2} \log\frac{T_{\ws}}{\delta}}
    ~.
\]
\end{lemma}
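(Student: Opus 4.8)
The plan is to reduce the warm-up dynamics to a linear system driven by an effective Gaussian noise, and then bound the state norm through a geometric sum controlled by the strong stability of $K_{\ws}$. First I would write each action as $u_t = K_{\ws} x_t + \xi_t$, where $\xi_t \sim \calN(0, 2\sigma^2\kappa_{\ws}^2 I)$ is the independent exploration perturbation; substituting into \eqref{eq:lqr} collapses the recursion to
\[
    x_{t+1} = (\Ast + \Bst K_{\ws}) x_t + \eta_t, \qquad \eta_t = \Bst \xi_t + w_t.
\]
The noise $\eta_t$ is zero-mean Gaussian with covariance $\Sigma_\eta = 2\sigma^2\kappa_{\ws}^2 \Bst\Bst\tr + \sigma^2 I$, and I would record the trace bound $\trace(\Sigma_\eta) = 2\sigma^2\kappa_{\ws}^2\norm{\Bst}_\frob^2 + \sigma^2 d \le 2\sigma^2(d + k\vartheta^2\kappa_{\ws}^2)$, using $\norm{\Bst}_\frob^2 = \trace(\Bst\tr\Bst) \le k\spnorm{\Bst}^2 \le k\vartheta^2$ (as $\Bst\tr\Bst$ is $k\times k$ with operator norm $\spnorm{\Bst}^2\le\vartheta^2$); note also $\spnorm{\Sigma_\eta} \le \trace(\Sigma_\eta)$.

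Next, since $x_1 = 0$, unrolling gives $x_t = \sum_{s=1}^{t-1}(\Ast+\Bst K_{\ws})^{t-1-s}\eta_s$. Writing $M = \Ast+\Bst K_{\ws} = HLH^{-1}$ from the $(\kappa_{\ws},\gamma_{\ws})$-strong stability of $K_{\ws}$ yields $\spnorm{M^j} \le \spnorm{H}\spnorm{H^{-1}}\spnorm{L}^j \le \kappa_{\ws}(1-\gamma_{\ws})^j$, so the triangle inequality together with $\sum_{j\ge0}(1-\gamma_{\ws})^j = 1/\gamma_{\ws}$ gives
\[
    \norm{x_t} \le \kappa_{\ws}\sum_{s=1}^{t-1}(1-\gamma_{\ws})^{t-1-s}\norm{\eta_s} \le \frac{\kappa_{\ws}}{\gamma_{\ws}}\max_{1\le s< t}\norm{\eta_s}
\]
for every $t \le T_{\ws}+1$ simultaneously. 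This is essentially \cref{lem:strg-stab-norm} specialized to the constant policy sequence $K_s \equiv K_{\ws}$ with process noise $\eta_s$; performing the geometric sum directly with the factor $(1-\gamma_{\ws})$ rather than $(1-\gamma_{\ws}/2)$ is what yields the sharp constant.

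It then remains to control $\max_s \norm{\eta_s}$. I would apply the Hanson--Wright inequality (\cref{thm:hanson}) to each $\eta_s = \Sigma_\eta^{1/2} g_s$ with $g_s \sim \calN(0,I)$ and $z = \log(T_{\ws}/\delta) \ge 1$: since $\spnorm{\Sigma_\eta}\le\trace(\Sigma_\eta)$, this gives $\norm{\eta_s}^2 \le \trace(\Sigma_\eta)(1+2\sqrt z+2z) \le 5z\,\trace(\Sigma_\eta)$ with probability at least $1-\delta/T_{\ws}$. A union bound over $s=1,\ldots,T_{\ws}$ then yields $\max_s\norm{\eta_s}^2 \le 10\sigma^2(d+k\vartheta^2\kappa_{\ws}^2)\log(T_{\ws}/\delta)$ with probability at least $1-\delta$, and combining with the displayed bound on $\norm{x_t}$ gives $\norm{x_t} \le (\sqrt{10}\,\sigma\kappa_{\ws}/\gamma_{\ws})\sqrt{(d+k\vartheta^2\kappa_{\ws}^2)\log(T_{\ws}/\delta)}$; since $\sqrt{10}\le 4$ the claim follows. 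The only genuinely delicate point is accounting for the extra exploration noise $\Bst\xi_t$ on top of the process noise $w_t$ and tracking its covariance carefully, so that the final dimension factor is the stated $d+k\vartheta^2\kappa_{\ws}^2$ rather than a cruder $n\vartheta^2\kappa_{\ws}^2$; the remainder is a routine geometric-series-plus-concentration argument.
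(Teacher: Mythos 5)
Your proposal is correct and follows essentially the same route as the paper: fold the exploration noise into an effective process noise $\Bst\xi_t + w_t$, exploit strong stability of $K_{\ws}$ via a geometric sum, and finish with Hanson--Wright plus a union bound over the $T_{\ws}$ rounds. If anything, your explicit unrolling with the factor $(1-\gamma_{\ws})$ is slightly cleaner than the paper's citation of \cref{lem:strg-stab-norm}, whose stated bound carries an extra factor of $2$ (i.e., $2\kappa_{\ws}/\gamma_{\ws}$) that would spoil the constant $4$; the constant-policy specialization you carry out is exactly what justifies the sharper $\kappa_{\ws}/\gamma_{\ws}$ the paper implicitly uses.
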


\begin{proof}
We begin by upper bounding the norm of $x_t$ using the strong stability of $K_{\ws}$. Let $u_t = K_{\ws} x_t + \eta_t$ where $\eta_t \sim \calN(0, 2 \sigma^2 \kappa_{\ws}^2 I)$. We have,
\[
    x_{t+1} = (\Ast+\Bst K_{\ws}) x_t + \Bst \eta_t + w_t~,
\]
and, as $\eta_t$ is independent of $x_t$, we can think about the state transitions as if they are done according the another LQR system that is exactly the same as the original one except that the noise term is now $\Bst \eta_t + w_t$ instead of $w_t$. Thus, applying \cref{lem:strg-stab-norm}:
\[
    \norm{x_t} 
    \le 
    \frac{\kappa_{\ws}}{\gamma_{\ws}} \max_{0\le s \le t-1} \norm{\Bst \eta_s + w_s}~.
\]

Next, $\Bst \eta_s + w_s$ is a Gaussian random variable with zero mean and covariance $C = 2 \sigma^2 \kappa_{\ws}^2 \Bst \Bst\tr + \sigma^2 I$.
Using the Hanson-Wright inequality (\cref{thm:hanson}) and a union bound, with probability $1-\delta$, for all $t = 1,\ldots,T_{\ws}+1$, 
\begin{align*}
    \norm{\Bst \eta_t + w_t}^2 
    &\leq 
    5 \trace(C) \log(T_{\ws}/\delta) 
    \\
    &= 
    5 \sigma^2 \Lr{d + 2 \kappa_{\ws}^2 \norm{\Bst}_F^2} \log(T_{\ws}/\delta)
    \\
    &\leq
    10 \sigma^2 \Lr{d + k \kappa_{\ws}^2 \vartheta^2} \log(T_{\ws}/\delta)
    ~. 
    \qedhere
\end{align*}
\end{proof}



For the lower bound, we also require the next lemma.

\begin{lemma} \label{lemma:singlevectorconcentration}
Let $\delta \in (0,1)$, and let $n \in \bbR^n$ be any unit vector. 
Suppose that $T_{\ws} \ge 200 \log(1/\delta)$. 
Then with probability at least $1-\delta$ we have $n\tr V n \ge T_{\ws} \cdot \sigma^2/40$.
\end{lemma}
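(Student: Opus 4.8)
The plan is to lower-bound $n\tr V n \ge \sum_{t=1}^{T_{\ws}} (n\tr z_t)^2$ (with equality when $V = V_{\ws}$), so it suffices to control the latter sum, which I will do with a small-ball / anticoncentration argument in the spirit of \citet{simchowitz2018learning}. Write $u_t = K_{\ws} x_t + \eta_t$ with $\eta_t \sim \calN(0, 2\sigma^2\kappa_{\ws}^2 I)$, let $\calF_t = \sigma(w_1,\eta_1,\dots,w_t,\eta_t)$ (so $x_t$ is $\calF_{t-1}$-measurable), and decompose $n = \begin{psmallmatrix} n_x \\ n_u \end{psmallmatrix}$ with $n_x\in\reals^d$, $n_u\in\reals^k$. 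Setting $p = n_x + K_{\ws}\tr n_u$ and $q = n_u$, the scalar process is $Y_t := n\tr z_t = p\tr x_t + q\tr \eta_t$.

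First I would prove a \emph{uniform} conditional anticoncentration bound for $Y_t$. Conditioning on $\calF_{t-2}$ freezes $x_{t-1}$; substituting $x_t = (\Ast+\Bst K_{\ws})x_{t-1} + \Bst\eta_{t-1} + w_{t-1}$ exhibits $Y_t$ as a fixed vector plus an affine function of the independent fresh Gaussians $\eta_{t-1}, w_{t-1}, \eta_t$. Hence $Y_t \mid \calF_{t-2}$ is Gaussian with variance $s_t^2 = p\tr(2\sigma^2\kappa_{\ws}^2 \Bst\Bst\tr + \sigma^2 I)p + 2\sigma^2\kappa_{\ws}^2\norm{q}^2 \ge \sigma^2\norm{p}^2 + 2\sigma^2\kappa_{\ws}^2\norm{q}^2$. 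Using $n_x = p - K_{\ws}\tr q$, $\norm{n}=1$ and $\kappa_{\ws}\ge 1$, one gets $1 = \norm{n_x}^2 + \norm{n_u}^2 \le 2\norm{p}^2 + 3\kappa_{\ws}^2\norm{q}^2 \le 3(\norm{p}^2 + \kappa_{\ws}^2\norm{q}^2)$, so $s_t^2 \ge \sigma^2/3$ for every $t$. Taking the fixed threshold $c_0 = \sigma/(2\sqrt3)$ (so $c_0 \le s_t/2$), the elementary Gaussian fact that $\Pr(\abs{\calN(\mu,s^2)} \ge \lambda)$ is minimized at $\mu=0$ yields, with $p_0 = 2(1-\Phi(\tfrac12)) \ge \tfrac35$ ($\Phi$ the standard normal CDF),
\[
    \Pr\lr{ \abs{Y_t} \ge c_0 \,\big\vert\, \calF_{t-2} } \ge p_0 \qquad \text{for all } t .
\]

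Since $\sum_t Y_t^2 \ge c_0^2 \sum_t \ind{Y_t^2 \ge c_0^2}$, it remains to lower-bound the count $\sum_t \ind{Y_t^2 \ge c_0^2}$. The only subtlety is that the bound above conditions on $\calF_{t-2}$ rather than $\calF_{t-1}$, so I would split the indices $1,\dots,T_{\ws}$ into their even and odd subsequences. On each subsequence (say the even one, with $\calG_j = \calF_{2j-2}$) the increments $\E[\ind{Y_{2j}^2\ge c_0^2}\mid\calG_j] - \ind{Y_{2j}^2\ge c_0^2}$ form a bounded martingale difference sequence, because $\ind{Y_{2j}^2\ge c_0^2}$ is $\calF_{2j}=\calG_{j+1}$-measurable. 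Applying Azuma's inequality (\cref{thm:azuma}) to each subsequence and a union bound, and noting the two subsequence lengths sum to $T_{\ws}$, gives with probability $\ge 1-\delta$
\[
    \sum_{t=1}^{T_{\ws}} \ind{Y_t^2 \ge c_0^2} \ge p_0 T_{\ws} - 2\sqrt{2 T_{\ws}\log\tfrac{2}{\delta}} \ge \tfrac{3}{10} T_{\ws},
\]
where the last inequality uses $p_0\ge\tfrac35$ and the hypothesis $T_{\ws}\ge 200\log(1/\delta)$ to absorb the deviation term. Combining, $n\tr V n \ge c_0^2 \cdot \tfrac{3}{10}T_{\ws} = \tfrac{\sigma^2}{12}\cdot\tfrac{3}{10}T_{\ws} = T_{\ws}\sigma^2/40$.

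The \emph{main obstacle} is exactly the two-step dependence: the process noise $w_{t-1}$ that makes $Y_t$ nondegenerate in \emph{every} direction only enters through $x_t$, so conditioning merely on $\calF_{t-1}$ fails (if $n_u = 0$ the one-step conditional variance of $Y_t$ is $0$). Getting this right---verifying conditional Gaussianity of $Y_t$ given $\calF_{t-2}$ and arranging the even/odd martingale decomposition so that \cref{thm:azuma} applies with unit-bounded increments---is the crux, and the constant $200$ in $T_{\ws}\ge 200\log(1/\delta)$ is precisely what makes the Azuma deviation smaller than the main term at these constants.
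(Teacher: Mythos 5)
Your proposal is correct and, at its core, it is the same argument as the paper's: a uniform conditional Gaussian anticoncentration bound for $S_t = n\tr z_t$, a count of the rounds on which $S_t^2$ clears a fixed threshold, and Azuma's inequality to show this count is a constant fraction of $T_{\ws}$. The only real divergence is the filtration bookkeeping, and it is worth seeing why yours is more cumbersome than necessary. You take $\calF_t=\sigma(w_1,\eta_1,\ldots,w_t,\eta_t)$, under which $x_t$ is $\calF_{t-1}$-measurable, so one-step conditioning indeed degenerates when $n_u=0$; you repair this with two-step conditioning and an even/odd martingale split, paying two Azuma applications plus a union bound. The paper instead conditions on the $\sigma$-field generated by the past \emph{states and actions}, under which both the process noise $w_{t-1}$ entering $x_t$ and the exploration noise $\eta_t$ entering $u_t$ are fresh at step $t$; this gives $\E[z_t z_t\tr \mid \calF_{t-1}] \succeq (\sigma^2/2) I$ in every direction (\cref{lemma:varlb}), makes the indicators an ordinary one-step martingale difference sequence (\cref{lemma:probabilitylb}), and a single application of \cref{thm:azuma} finishes the proof. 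So the two-step dependence you call ``the crux'' is an artifact of your filtration choice, not an intrinsic obstacle. One quantitative caveat: your final step needs $p_0 T_{\ws} - 2\sqrt{2T_{\ws}\log(2/\delta)} \ge \tfrac{3}{10}T_{\ws}$, i.e.\ $T_{\ws} \ge \tfrac{800}{9}\log(2/\delta)$, which is \emph{not} implied by $T_{\ws} \ge 200\log(1/\delta)$ when $\delta$ is close to $1$ (the union bound costs you a $\log 2$ the hypothesis does not supply), whereas the paper's single-martingale constants (success probability $1/5$, threshold $\sigma^2/4$, margin $T_{\ws}/10$) are exactly tight for every $\delta\in(0,1)$. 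For $\delta\le 1/2$ --- the only regime in which the lemma is invoked --- your constants do close, and both your argument and the paper's share the same harmless $t=1$ edge case (since $x_1=0$, the direction $n=(n_x;0)$ has zero conditional variance in the first round).
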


The proof relies on a couple of technical results.
In what follows, we let $(\calF_t)_{t=1}^\infty$ be the filtration with respect to which $\{w_t,u_t\}_{t=1}^\infty$ is adapted.

\begin{lemma} \label{lemma:varlb}
%
For all $t$ we have $\E_t \big[ z_t z_t\tr \mid \mathcal{F}_{t-1}] \succeq (\sigma^2/2) I$.
\end{lemma}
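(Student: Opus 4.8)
The plan is to condition on the past, isolate the fresh noise that enters $z_t$, and lower bound the resulting conditional covariance; the only delicate point is identifying which randomness is ``fresh.'' Write $\calF_{t-1}$ for the $\sigma$-algebra generated by the observations $z_1,\ldots,z_{t-1}$, so that $x_{t-1}$ and $u_{t-1}$ are determined, while the transition noise $w_{t-1}$ (which produces $x_t = \Ast x_{t-1} + \Bst u_{t-1} + w_{t-1}$) and the exploration noise $\eta_t = u_t - K_\ws x_t \sim \calN(0,2\sigma^2\kappa_\ws^2 I)$ (which produces $u_t$) are both independent of $\calF_{t-1}$ and of each other. The crucial observation is that it is $w_{t-1}$, and not merely $\eta_t$, that injects randomness into the \emph{state} block of $z_t$; this is exactly what will let us control the state directions rather than only the action directions.

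First I would split $z_t$ into its $\calF_{t-1}$-measurable mean and its fresh part. Since $x_t$ is an $\calF_{t-1}$-measurable vector plus $w_{t-1}$, and $u_t = K_\ws x_t + \eta_t$,
\[
    z_t
    =
    c_t
    +
    \begin{psmallmatrix} I \\ K_\ws \end{psmallmatrix} w_{t-1}
    +
    \begin{psmallmatrix} 0 \\ I \end{psmallmatrix} \eta_t
    ,
\]
with $c_t$ measurable with respect to $\calF_{t-1}$. Because the conditional second moment dominates the conditional covariance (it equals the latter plus the positive semidefinite outer product of the conditional mean), and since $w_{t-1} \perp \eta_t$ with the stated variances,
\[
    \E_t\big[ z_t z_t\tr \mid \calF_{t-1} \big]
    \succeq
    \sigma^2 \begin{psmallmatrix} I \\ K_\ws \end{psmallmatrix}\begin{psmallmatrix} I \\ K_\ws \end{psmallmatrix}\tr
    +
    2\sigma^2 \kappa_\ws^2 \begin{psmallmatrix} 0 & 0 \\ 0 & I \end{psmallmatrix}
    .
\]

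It then remains to show the right-hand side is $\succeq (\sigma^2/2) I$, equivalently that $P := \begin{psmallmatrix} \thalf I & K_\ws\tr \\ K_\ws & K_\ws K_\ws\tr + (2\kappa_\ws^2 - \thalf) I \end{psmallmatrix} \succeq 0$. I would establish this by a Schur-complement argument on the positive definite top-left block $\thalf I$: the Schur complement equals $(2\kappa_\ws^2 - \thalf) I - K_\ws K_\ws\tr \succeq (\kappa_\ws^2 - \thalf) I \succeq 0$, using $\spnorm{K_\ws} \le \kappa_\ws$ and $\kappa_\ws \ge 1$. I expect the main (and essentially only) obstacle to be the conceptual point flagged above: the conditioning must be taken so that $w_{t-1}$ is fresh, since otherwise $x_t$ is deterministic given the past and the state block of the conditional second moment degenerates to the rank-one $x_t x_t\tr$, which cannot dominate $(\sigma^2/2) I_d$. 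Once the fresh noise is correctly identified, the inflation of the exploration variance by the factor $2\kappa_\ws^2$ is precisely what the Schur complement needs to close the bound.
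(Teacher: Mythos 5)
Your proof is correct, and its core computation coincides with the paper's: both arguments discard the $\calF_{t-1}$-measurable part of $z_t$ (a PSD outer product of the conditional mean) and lower bound the conditional second moment by the covariance of the fresh noise, arriving at the identical intermediate matrix
\[
    \sigma^2 \begin{psmallmatrix} I & K_\ws\tr \\ K_\ws & K_\ws K_\ws\tr + 2\kappa_\ws^2 I \end{psmallmatrix}.
\]
The differences are at the two ends. At the finish, the paper uses $2\kappa_\ws^2 I \succeq \half I + K_\ws K_\ws\tr$ (valid since $\spnorm{K_\ws}\le\kappa_\ws$ and $\kappa_\ws\ge 1$) and then writes the excess over $\tfrac{\sigma^2}{2}I$ as the explicit outer product $\sigma^2\begin{psmallmatrix} \sqrt{2}^{-1} I \\ \sqrt{2} K_\ws \end{psmallmatrix}\begin{psmallmatrix} \sqrt{2}^{-1} I \\ \sqrt{2} K_\ws \end{psmallmatrix}\tr \succeq 0$, whereas you take a Schur complement of the $\half I$ block and bound $(2\kappa_\ws^2-\half)I - K_\ws K_\ws\tr \succeq (\kappa_\ws^2-\half)I \succeq 0$; these are equivalent in content and equally elementary, so nothing is gained or lost there. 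At the start, however, your treatment is genuinely more careful than the paper's: the paper asserts $\E[x_t x_t\tr \mid \calF_{t-1}] \succeq \sigma^2 I$ after defining $(\calF_t)$ as the filtration to which $\{w_t,u_t\}$ is adapted, and under that literal definition $x_t$ is $\calF_{t-1}$-measurable, so the conditional second moment of the state block degenerates to the rank-one matrix $x_t x_t\tr$ and the asserted bound fails. Your choice $\calF_{t-1}=\sigma(z_1,\ldots,z_{t-1})$, under which $w_{t-1}$ and $\eta_t$ are both fresh, is the reading that makes the lemma true, and it is also the filtration needed for the downstream martingale arguments in \cref{lemma:probabilitylb,lemma:singlevectorconcentration}; the ``delicate point'' you flagged is exactly the right one, and your version quietly repairs this imprecision in the paper.
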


\begin{proof}
Note that since $W = \sigma^2 I$ we have $\E [x_t x_t\tr \mid \mathcal{F}_{t-1}] \succeq \sigma^2 I$ for each $t \ge 1$, and so 
\begin{align*}
    \E \big[ z_t z_t\tr \bigm| \mathcal{F}_{t-1} \big] 
    &= \begin{psmallmatrix} I \\ K_{\ws} \end{psmallmatrix} \E \big[ x_t x_t\tr \bigm| \mathcal{F}_t \big] \begin{psmallmatrix} I \\ K_{\ws} \end{psmallmatrix}\tr + \begin{psmallmatrix} 0 & 0 \\ 0 & 2 \sigma^2 \kappa_{\ws}^2 I \end{psmallmatrix} \\
    &\succeq \sigma^2 \begin{psmallmatrix} I & K_{\ws}\tr \\ K_{\ws} & K_{\ws} K_{\ws}\tr + 2 \kappa_{\ws}^2 I \end{psmallmatrix} \\
    &\succeq \sigma^2 \begin{psmallmatrix} I & K_{\ws}\tr \\ K_{\ws} & \half I + 2K_{\ws} K_{\ws}\tr \end{psmallmatrix} \tag{$\norm{K_{\ws}} \le \kappa_{\ws},\; \kappa_{\ws} \ge 1$} \\
    &= \frac{\sigma^2}{2} I + \sigma^2 \begin{psmallmatrix} \sqrt{2}^{-1} I \\ \sqrt{2} K_{\ws} \end{psmallmatrix} \begin{psmallmatrix} \sqrt{2}^{-1} I \\ \sqrt{2} K_{\ws} \end{psmallmatrix}\tr \\
    &\succeq \frac{\sigma^2}{2} I
    ~.
\end{align*}
The lemma now follows by taking expectations.
\end{proof}

\begin{lemma} \label{lemma:probabilitylb}
Denote $S_t = n\tr z_t$, and let $E_t$ be an indicator random variable that equals 1 if $S_t^2 > \sigma^2 / 4$ and 0 otherwise. 
Then $\E[E_t \bigm| \mathcal{F}_{t-1}] \ge 1/5$.
\end{lemma}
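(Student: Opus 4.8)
The plan is to reduce the claim to an anti-concentration statement about a single univariate Gaussian, by combining two facts: conditioned on $\calF_{t-1}$ the variable $S_t = n\tr z_t$ is Gaussian, and \cref{lemma:varlb} lower-bounds its conditional second moment.

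First I would establish conditional Gaussianity. Writing $z_t = \begin{psmallmatrix} I \\ K_\ws \end{psmallmatrix} x_t + \begin{psmallmatrix} 0 \\ \eta_t \end{psmallmatrix}$ with $\eta_t \sim \calN(0, 2\sigma^2\kappa_\ws^2 I)$, observe that given $\calF_{t-1}$ the state $x_t = \Ast x_{t-1} + \Bst u_{t-1} + w_{t-1}$ is Gaussian: its conditional mean $\Ast x_{t-1} + \Bst u_{t-1}$ is $\calF_{t-1}$-measurable, while the fresh noise $w_{t-1} \sim \calN(0,\sigma^2 I)$ is independent of $\calF_{t-1}$ (this is exactly the independence implicitly used in \cref{lemma:varlb}); moreover $\eta_t$ is an independent Gaussian. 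Hence $(x_t,\eta_t)$ is conditionally jointly Gaussian, and $S_t = n\tr z_t$, being an affine function of it, is conditionally Gaussian, say $S_t \mid \calF_{t-1} \sim \calN(m, s^2)$ for some $\calF_{t-1}$-measurable $m, s$.

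Next I would record the second-moment bound: since $\norm{n}=1$, \cref{lemma:varlb} gives $\E[S_t^2 \mid \calF_{t-1}] = n\tr \E[z_t z_t\tr \mid \calF_{t-1}]\, n \ge (\sigma^2/2)\norm{n}^2 = \sigma^2/2$, that is $m^2 + s^2 \ge \sigma^2/2$. Since $\set{S_t^2 > \sigma^2/4} = \set{|S_t| > \sigma/2}$, it remains to show that any univariate Gaussian $\calN(m,s^2)$ with $m^2 + s^2 \ge \sigma^2/2$ places mass at least $\tfrac15$ outside $(-\sigma/2, \sigma/2)$. I would split into two cases. If $|m| \ge \sigma/2$, then by symmetry assume $m \ge \sigma/2$; the median of a Gaussian equals its mean, so $\Pr[S_t > \sigma/2 \mid \calF_{t-1}] \ge \Pr[S_t > m \mid \calF_{t-1}] = \thalf \ge \tfrac15$ (the degenerate case $s=0$ forces $|m| \ge \sigma/\sqrt2 > \sigma/2$ and is immediate). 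Otherwise $|m| < \sigma/2$ forces $s^2 \ge \sigma^2/2 - m^2 > \sigma^2/4$, hence $s > \sigma/2$; writing $S_t = m + sZ$ with $Z \sim \calN(0,1)$, the event $\set{|S_t| \le \sigma/2}$ corresponds to $Z$ lying in an interval of length $\sigma/s < 2$, whose standard-Gaussian mass is at most $\Pr[|Z| \le 1] = 2\Phi(1) - 1 < 0.69$. Thus $\Pr[|S_t| > \sigma/2 \mid \calF_{t-1}] > 0.31 > \tfrac15$.

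The main obstacle is the conditional-Gaussianity bookkeeping: one must be careful that the filtration is arranged so that $w_{t-1}$, the noise rendering $x_t$ nondegenerate, is genuinely independent of $\calF_{t-1}$, matching the setup behind \cref{lemma:varlb}. The anti-concentration step itself is elementary, resting only on the facts that a Gaussian's median equals its mean and that a bounded-length interval carries bounded standard-Gaussian mass (maximised when centred at the origin, by symmetry and unimodality).
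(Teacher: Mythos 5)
Your proof is correct, and it follows the same high-level strategy as the paper—reduce to anti-concentration of the conditionally Gaussian scalar $S_t$ using \cref{lemma:varlb}—but the anti-concentration step is executed differently, and in a way that is arguably more faithful to what \cref{lemma:varlb} literally provides. The paper's proof symmetrizes, writing $\Pr[|S_t| > \sigma/2 \mid \calF_{t-1}] \ge \Pr[S_t - \E S_t > \sigma/2 \mid \calF_{t-1}]$, then lower-bounds the \emph{conditional variance} of $S_t$ by $\sigma^2/2$ (citing \cref{lemma:varlb}) and finishes with an explicit Gaussian tail lower bound $\Pr[Z > 1/\sqrt{2}] \ge 2^{-3/2}e^{-1/2} \ge 1/5$. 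Strictly speaking, \cref{lemma:varlb} bounds the conditional \emph{second moment} $\E[S_t^2 \mid \calF_{t-1}] = m^2 + s^2$, not the variance $s^2$; the paper's step is nonetheless valid because the computation inside \cref{lemma:varlb} bounds $\E[x_t x_t\tr \mid \calF_{t-1}] \succeq \sigma^2 I$ via the conditional covariance of $x_t$ being exactly $\sigma^2 I$, so the same chain of matrix inequalities applies verbatim to the conditional covariance of $z_t$—but this requires looking inside that proof rather than invoking its statement. Your case analysis on $|m|$ (median argument when $|m| \ge \sigma/2$; interval-mass bound when $s^2 > \sigma^2/4$) works directly from the second-moment inequality $m^2 + s^2 \ge \sigma^2/2$ as stated, sidesteps this subtlety entirely, and even yields a better constant ($\ge 0.31$, versus roughly $0.21$ for the paper's tail bound). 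You also correctly flag the filtration bookkeeping—that $w_{t-1}$ must be independent of $\calF_{t-1}$—which the paper's description of $(\calF_t)$ (as the filtration to which $\{w_t,u_t\}$ is adapted) handles somewhat loosely, since under a literal reading $x_t$ would be $\calF_{t-1}$-measurable; your reading is the intended one and the one under which \cref{lemma:varlb} is true.
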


\begin{proof}
We have,
\begin{align*}
    \Pr[E_t = 1 \bigm| \mathcal{F}_{t-1}] &= \Pr[S_t^2 > \sigma^2 / 4  \bigm| \mathcal{F}_{t-1}] \\
    &= \Pr[|S_t| > \sigma / 2 \bigm| \mathcal{F}_{t-1}] \\
    &\ge \Pr[S_t - \E S_t > \sigma / 2 \bigm| \mathcal{F}_{t-1}] \tag{$S_t - \E S_t$ is a symmetric r.v.} \\
    &= \Pr[\sqrt{\text{Var}[S_t | \calF_{t-1}]} Z > \sigma / 2] \tag{$Z$ is a standard Gaussian r.v.} \\
    &\ge \Pr[\sqrt{\sigma^2 / 2} Z > \sigma / 2]  \tag{\cref{lemma:varlb}} \\
    &= \Pr[Z > 1/\sqrt{2}] \\
    &\ge 2^{-3/2} e^{-(1/\sqrt{2})^2} \tag{Standard Gaussian tail lower bound} \\
    &\ge 1/5
    ~.
    &\qedhere
\end{align*}
\end{proof}

\begin{proof}[of \cref{lemma:singlevectorconcentration}]
Let $U_t = E_t - \E_t[E_t \mid \calF_{t-1}]$. Then $U_t$ is a martingale difference sequence with $|U_t| \le 1$ almost surely. Applying Azuma's inequality, we have that with probability at least $1-\delta$,
\[
\sum_{t=1}^{T_{\ws}} U_t \ge -\sqrt{2 T_{\ws} \log \frac{1}{\delta}} \ge -\frac{T_{\ws}}{10}~, \tag{$T_{\ws} \ge 200 \log(1/\delta)$}
\]
which means that, by \cref{lemma:probabilitylb},
\begin{align*}
\sum_{t=1}^{T_{\ws}} E_t &\ge \sum_{t=1}^{T_{\ws}} \E_t[E_t] - \frac{T_{\ws}}{10} 
\ge \sum_{t=1}^{T_{\ws}} \frac{1}{5} - \frac{T_{\ws}}{10}
= \frac{T_{\ws}}{10}~. 
\end{align*}

Now, by definition of $E_t$, $S_t^2 \ge E_t \cdot \sigma^2/4$. Therefore, with probability at least $1-\delta$,
\[
    n\tr V n 
    = 
    \sum_{t=1}^{T_{\ws}} S_t^2 \ge \sum_{t=1}^{T_{\ws}} E_t \cdot \sigma^2/4 
    = 
    \frac{T_{\ws} \sigma^2}{40}
    ~.\qedhere
\]
\end{proof}

We are now ready to prove the main theorem of this section.

\begin{proof}[of \cref{thm:Vws}]
We first prove the upper bound.
Let $u_t = K_{\ws} x_t + \eta_t$ where $\eta_t \sim \calN(0, 2 \sigma^2 \kappa_{\ws}^2 I)$. 
Then,
\[
\trace(V_{\ws}) = \sum_{t=1}^{T_{\ws}} \norm{z_t}^2~,
\]
and, as $\norm{K_{\ws}} \le \kappa_{\ws}$ and $\kappa_{\ws} \ge 1$:
\[
    \norm{z_t} 
    \le 
    \norm{x_t} + \norm{u_t} 
    \le 
    \norm{x_t} + \norm{K_{\ws} x_t} + \norm{\eta_t} 
    \le 
    2 \kappa_{\ws} \norm{x_t} + \norm{\eta_t}~.
\]
Now, using a union bound, with probability $1-\delta/2$ we have for all $t=1,\ldots,T_{\ws}+1$ by \cref{lem:xbound}
\[
    \norm{x_t} 
    \le 
    \frac{4 \sigma \kappa_{\ws}}{\gamma_{\ws}} \sqrt{\Lr{d + k \vartheta^2 \kappa_{\ws}^2} \log(4T_{\ws}/\delta)}~,
\]
and by the Hanson-Wright inequality $\norm{\eta_t}^2 \le 10 \sigma^2 \kappa_{\ws}^2 k \log(4T_{\ws}/\delta)$ for all $t$. 
Therefore,
\begin{align*}
    \norm{z_t} 
    &\le 
    2 \kappa_{\ws} \cdot \frac{4 \sigma \kappa_{\ws}}{\gamma_{\ws}} \sqrt{\Lr{d + k \vartheta^2 \kappa_{\ws}^2} \log(4T_{\ws}/\delta)} 
    + 
    4 \kappa_{\ws} \sigma \sqrt{k \log(4 T_{\ws}/\delta)} \\
    &\le
    \frac{12 \sigma \kappa_{\ws}^2}{\gamma_{\ws}} \sqrt{\Lr{n + k \vartheta^2 \kappa_{\ws}^2} \log(4T_{\ws}/\delta)}
    ~.
\end{align*}

We next turn to lower bounding the smallest eigenvalue of $V$; we will actually prove that $\spnorm{V^{-1}} \le 80/(T_{\ws} \sigma^2)$.
Let $\calN(1/4)$ be a minimal $1/4$-net of $\bbS^{n-1}$, and define the set $M = \{V^{-1/2} u / \norm{V^{-1/2} u} \; : \; u \in \calN(1/4) \}$.
Suppose that $T_{\ws} \ge 200\log(|M|/\delta)$. 
Applying a union bound, we get that with probability at least $1-\delta$ simultaneously for all $n \in M$
\[
n\tr V n \ge \frac{T_{\ws} \sigma^2}{40}~. \tag{\cref{lemma:singlevectorconcentration}}
\]
Using the definition of $M$, this entails that for all $u \in \calN(1/4)$
\begin{equation}
\label{eq:vinvqf}
    u\tr V^{-1} u \le \frac{40}{T_{\ws} \sigma^2}~.
\end{equation}

Next, let $z$ be the eigenvector corresponding to the minimum eigenvalue of $V$, and let $u_z \in \calN(1/4)$ be such that $\|z-u_z\|\le 1/4$. Then,
\begin{align*}
    \spnorm{V^{-1}} &= z\tr V^{-1} z \\
    &\le u_z\tr V^{-1} u_z + (z-u_z)\tr V^{-1} (z+u_z) \\
    &\le u_z\tr V^{-1} u_z + \norm{z-u_z} \spnorm{V^{-1}} \Lr{\norm{z}+\norm{u_z}} \\
    &\le \frac{40}{T_{\ws} \sigma^2} + \frac{1}{4} \spnorm{V^{-1}} \cdot 2~. \tag{\cref{eq:vinvqf}. $z$ and $u_z$ are unit vectors}
\end{align*}
Rearranging gets us
$
\spnorm{V^{-1}} \le 80 / (T_{\ws} \sigma^2)
$
as required.
Note that $\lvert M \rvert = \lvert \calN(1/4) \rvert$, and by standard bounds on the size of $\epsilon$-nets, $\lvert \calN(1/4) \rvert \le 12^n$. That is, for $T_{\ws}$ to be larger that $200 \log(\lvert M \rvert/\delta)$ it suffices to have $T_{\ws} \ge 400 (n + \log(1/\delta))$.

To show that a bound on the estimation error of $\Lr{A_{\ws} \, B_{\ws}}$, set $V = V_{\ws} + \sigma^2 \vartheta^{-2} I$, and
\[
    \Lr{A_{\ws} \; B_{\ws}} 
    =  
    \lr{\sum_{t=1}^{T_{\ws}} x_{t+1} z_t\tr} V^{-1}~.
\]
Applying \cref{lem:concentration} with these parameters and $\Lr{A_0 \; B_0} = 0$, shows that with probability $1-\delta/2$
\begin{align*}
    \trace(\Delta_{\ws} V \Delta_{\ws}\tr) 
    &\le 
    4 \sigma^2 d \log \lr{\frac{d}{\delta} \det \lr{I + \vartheta^2 \sigma^{-2} V_{\ws}}} 
    +
    2 \sigma^2 \vartheta^{-2} \norm{\lr{\Ast \; \Bst}}_F^2 \\
    &\le
    4 \sigma^2 d \log \frac{d}{\delta}
    +
    4 \sigma^2 d n \log \biggl( 1 + T_{\ws} \cdot \frac{300 \vartheta^2 \kappa_{\ws}^4}{\gamma_{\ws}^2} (1+\vartheta^2 \kappa_{\ws}^2) \log \frac{T_0}{\delta} \biggr) 
    + 
    2 \sigma^2 d \\
    &\le
    20 n^2 \sigma^2 \log(T_{\ws}/\delta)~,
\end{align*}
using $\log \det X \le n \log (\trace(X) / n)$ for a positive-definite $X \in \bbR^{n \times n}$, by our choice of $T_{\ws}$ and the lower bound on $T$.
\end{proof}

\end{document}